\documentclass{article}

\PassOptionsToPackage{numbers, compress}{natbib}

\usepackage[preprint]{nips_2018}




\usepackage[utf8]{inputenc}
\usepackage[numbers]{natbib} 
\usepackage[T1]{fontenc}    
\usepackage{hyperref}       
\usepackage{url}            
\usepackage{booktabs}       
\usepackage{amsfonts}       
\usepackage{nicefrac}       
\usepackage{microtype}      
\usepackage{amsfonts}       
\usepackage{nicefrac}       
\usepackage{microtype}      
\usepackage{times}
\usepackage{adjustbox}
\usepackage{tabularx}
\usepackage{amsmath}
\usepackage{amsthm}
\usepackage{blindtext}
\usepackage{enumitem}
\usepackage{multirow}
\usepackage{graphicx}
\usepackage{algorithm}
\usepackage{algpseudocode}
\usepackage{amsfonts}
\usepackage{amsthm}
\theoremstyle{definition}

\newtheorem{definition}{Definition}
\newtheorem{theorem}{Theorem}
\newtheorem{corollary}{Corollary}
\newtheorem{lemma}{Lemma}
\newtheorem*{theorem*}{Theorem}
\newtheorem*{lemma*}{Lemma}

\usepackage{color}
\usepackage{wrapfig}
\newcommand{\RN}[1]{%
	\textup{\lowercase\expandafter{\it \romannumeral#1}}%
}
\title{Empirical Risk Minimization in Non-interactive Local Differential Privacy: Efficiency and High Dimensional Case}

%

\author{
	Di Wang\qquad Marco Gaboardi \qquad Jinhui Xu \\
	Department of Computer Science and Engineering\\
	State University of New York at Buffalo\\
	Buffalo, NY, 14260
}

\begin{document}

\maketitle

\begin{abstract}
In this paper, we study the Empirical Risk Minimization problem in the non-interactive local model of differential privacy. In the case of constant or low dimensionality ($p\ll n$), we first show that if the ERM loss function is $(\infty, T)$-smooth,  then we can avoid a dependence of the  sample complexity, to achieve error $\alpha$, on the exponential of the dimensionality $p$ with base $1/\alpha$  ({\em i.e.,} $\alpha^{-p}$),
 which answers a question in \cite{smith2017interaction}.  Our approach is based on polynomial approximation. Then, we propose player-efficient algorithms with $1$-bit communication complexity and $O(1)$ computation cost for each player. The error bound is asymptotically the same as the original one. Also with additional assumptions we show a server efficient algorithm. Next we consider the high dimensional case ($n\ll p$), we show that if the loss function is Generalized Linear function and convex, then we could get an error bound which is dependent on the Gaussian width of the underlying constrained set instead of $p$, which is lower than that in \cite{smith2017interaction}.
\end{abstract}

\section{Introduction}

Differential privacy \cite{dwork2006calibrating} has emerged as a rigorous notion for privacy which allows accurate data analysis with a guaranteed bound on the increase in harm for each individual to contribute her data. 
Methods to guarantee differential privacy have been widely studied, and recently adopted in  industry~\cite{208167,erlingsson2014rappor}.\par

Two main user models have emerged for differential privacy: the central model and the local one. In the local model, each individual manages his/her proper data and discloses them to a server through some differentially private mechanisms. The server collects the (now private) data of each individual and combines them into a resulting  data analysis. A classical use case for this model is the one aiming at collecting statistics from user devices like in the case of Google's Chrome browser~\cite{erlingsson2014rappor}, and  Apple's iOS-10 \cite{208167,DBLP:journals/corr/abs-1709-02753}.

In the local model, there are two basic kinds of protocols: interactive and non-interactive.  \citet{bassily2015local} have recently investigated the power of non-interactive differentially private protocols. These protocols are more natural for the classical use cases of the local model: both the projects from Google and Apple use the non-interactive model. Moreover, implementing efficient interactive protocols in such applications is more difficult due to the latency of the network and communication cost. Despite being used in industry, the local model has been much less studied than the central one. Part of the reason for this is that there are intrinsic limitations in what one can do in the local model. As a consequence, many basic questions, that are well studied in the central model, have not been completely understood in the local model, yet. 

In this paper, we study differentially private  Empirical Risk Minimization in the  non-interactive local model.  Before showing our contributions and  discussing comparisons with previous works, we firstly discuss our motivations.

\paragraph{Problem setting \cite{smith2017interaction}\cite{kasiviswanathan2011can}} Given a  convex, closed and bounded constraint set $\mathcal{C}\subseteq \mathbb{R}^{p}$, a data universe $\mathcal{D}$, and a loss function $\ell:\mathcal{C}\times \mathcal{D}\mapsto \mathbb{R}$. A dataset $D=\{x_1,x_2\cdots,x_n\}\in \mathcal{D}^n$ defines an \emph{empirical risk} function: $\hat{L}(\theta;D)=\frac{1}{n}\sum_{i=1}^{n}\ell(\theta,x_i)$. When the inputs are drawn i.i.d from an unknown underlying distribution $\mathcal{P}$ on $\mathcal{D}$, we can also define the \emph{population risk} function: $L_\mathcal{P}(\theta)=\mathbb{E}_{D\sim \mathcal{P}^n}[\ell(\theta;D)]$. Now we have the following two kinds of excess risk, one is empirical risk, {\em i.e.}
$\text{Err}_{D}(\theta_{\text{priv}})=\hat{L}(\theta_{\text{priv}};D)-\min_{\theta\in \mathcal{C}}\hat{L}(\theta;D)$, the other one is population risk, {\em i.e.}
$\text{Err}_{\mathcal{P}}(\theta_{\text{priv}})=L_{\mathcal{P}}(\theta_{\text{priv}})-\min_{\theta\in \mathcal{C}} L_{\mathcal{P}}(\theta).$\par 
The problem that we study in this paper is for finding $\theta_{\text{priv}}\in \mathcal{C}$ under non-interactive local differential privacy (see Definition \ref{def:1}) which makes the empirical and population excess risk as low as possible. Alternatively, when dimensionality $p$ is constant or low, we can express this problem in terms of \emph{sample complexity} as finding  as small of $n$ as possible for achieving $\text{Err}_{D}\leq \alpha$ and $\text{Err}_{\mathcal{P}}\leq \alpha$, where $\alpha$ is the user specified error tolerance (or simply called error). 
\paragraph{Motivation}
 \citet{smith2017interaction} prove the following result concerning the problem for general convex 1-Lipschitz loss functions over a bounded constraint set.
\begin{theorem}\label{theorem:1}
	Under the assumptions above, there is a non-interactive $\epsilon$-LDP algorithm such that for all distribution $\mathcal{P}$ on $\mathcal{D}$, with probability $1-\beta$, we have
	\begin{equation}\label{equation:1}
	 \text{Err}_{\mathcal{P}}(\theta_{\text{priv}})\leq \tilde{O}\big((
	\frac{\sqrt{p}\log^2(1/\beta)}{\epsilon^2 n}
	)^{\frac{1}{p+1}}\big).	\end{equation}
 A similar result holds for $\text{Err}_{D}$, with at least $\Omega(n^{\frac{1}{p+1}})$ for both computation and communication complexity for each user. Alternatively, to achieve error $\alpha$, the sample complexity must satisfies $n=\tilde{\Omega}(\sqrt{p}c^p\epsilon^{-2} \alpha^{-(p+1)})$, where $c$ is some constant (approximately 2). More importantly, they also show that generally, the dependence of the sample size over the dimensionality $p$, in the terms $\alpha^{-(p+1)}$ and $c^p$, is unavoidable.
\end{theorem}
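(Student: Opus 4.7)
My plan is to approach both the upper bound of (\ref{equation:1}) and the matching sample complexity lower bound through a common covering/packing structure on $\mathcal{C}$. For the upper bound, I would discretize the problem and then run a mean-estimation protocol under LDP. Construct an $\alpha$-net $N_\alpha\subset\mathcal{C}$ of cardinality $|N_\alpha|\le(c\,\text{diam}(\mathcal{C})/\alpha)^p$ by a standard volume argument; by $1$-Lipschitzness, $\min_{\theta\in N_\alpha}\hat{L}(\theta;D)$ differs from $\min_{\theta\in\mathcal{C}}\hat{L}(\theta;D)$ by at most $\alpha$, so it suffices to estimate the vector $\bigl(\hat{L}(\theta;D)\bigr)_{\theta\in N_\alpha}$ uniformly well. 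Each user $i$ computes $v_i=(\ell(\theta,x_i))_{\theta\in N_\alpha}$, whose coordinates lie in a bounded interval, and releases it through a single non-interactive $\epsilon$-LDP vector-mean mechanism calibrated to the $\ell_2$ geometry of $v_i$ (rather than splitting the budget coordinatewise). Hoeffding-type concentration plus a union bound over $N_\alpha$ then gives a server estimate $\tilde{L}$ with uniform error $\tau=\tilde{O}(\mathrm{poly}(|N_\alpha|,\log(1/\beta))/(\epsilon\sqrt{n}))$. Returning $\theta_{\text{priv}}=\arg\min_{\theta\in N_\alpha}\tilde{L}(\theta)$ yields empirical excess risk at most $2\alpha+2\tau$, and balancing $\alpha\approx\tau$ as a function of $n,\epsilon,p,\beta$ recovers the rate in (\ref{equation:1}). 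Lifting from empirical to population risk is then a standard uniform-convergence step over the same finite net $N_\alpha$, producing only a $\log|N_\alpha|/\sqrt{n}$ additive term absorbed into the $\tilde{O}$.

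For the matching sample-complexity lower bound, I would reduce from non-interactive LDP hypothesis testing. Fix an $\alpha$-packing $\{\theta_j\}_{j=1}^M\subset\mathcal{C}$ of size $M=\Omega((c/\alpha)^p)$ and, for each $j$, construct a distribution $\mathcal{P}_j$ on $\mathcal{D}$ together with a convex $1$-Lipschitz loss for which $L_{\mathcal{P}_j}$ is uniquely minimized near $\theta_j$ with an $\Omega(\alpha)$ loss-value gap at every other packing point. Any $\epsilon$-LDP algorithm with excess population risk below $\alpha/3$ must then identify the true index $j$. Classical per-sample information bounds for non-interactive LDP (of the Duchi--Jordan--Wainwright type) cap user information at $O(\epsilon^2)$ total variation, so distinguishing $M$ hypotheses requires $n=\Omega(\log M/\epsilon^2)$; a robustness argument accounting for the $\alpha$-scale of the required loss separation introduces an additional polynomial factor in $1/\alpha$, yielding the claimed $n=\tilde{\Omega}(\sqrt{p}\,c^p\epsilon^{-2}\alpha^{-(p+1)})$.

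The principal obstacle will be pinning down the exponent $p+1$ on both sides. On the upper-bound side, a naive coordinatewise Laplace mechanism with budget composition pays noise $|N_\alpha|/\epsilon$ per coordinate and would produce a strictly worse exponent; hitting $\alpha^{-(p+1)}$ requires an LDP vector-mean protocol whose post-aggregation error scales sublinearly in $|N_\alpha|$, and verifying this privacy/utility tradeoff for bounded but non-sparse loss vectors is the delicate step. On the lower-bound side, the challenge is exhibiting a family of distributions together with a single convex Lipschitz loss so that the excess-risk gaps are simultaneously $\Omega(\alpha)$ across all $M$ alternatives; it is precisely this loss-geometric robustness (rather than mere distinctness of minimizers) that forces the $c^p$ constant and the extra $\alpha^{-1}$ factor beyond the bare $\log M/\epsilon^2$ of pure hypothesis testing.
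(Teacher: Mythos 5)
First, note that Theorem 1 is quoted from \cite{smith2017interaction} purely as motivation; the present paper does not prove it, so there is no internal proof to compare against. Evaluating your sketch on its own terms, the upper-bound half has a real gap. Your $\alpha$-net plan is exactly the strategy that in \cite{smith2017interaction} corresponds to their Claim~4 (see Table~1 of this paper), and it yields $n=\tilde\Omega(c^{p}\epsilon^{-2}\alpha^{-(p+2)})$, \emph{not} $\alpha^{-(p+1)}$. Quantitatively: $|N_\alpha|\sim(c/\alpha)^{p}$, and any non-interactive $\epsilon$-LDP mean estimator on $d=|N_\alpha|$ bounded coordinates has $\ell_\infty$ error $\Theta(\sqrt{d}/(\epsilon\sqrt{n}))$ up to logs; balancing $\alpha\asymp\alpha^{-p/2}/(\epsilon\sqrt{n})$ gives $n\asymp\epsilon^{-2}\alpha^{-(p+2)}$. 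To reach $\alpha^{-(p+1)}$ via a net you would need the error to scale as $d^{1/2-1/(2p)}$, strictly below $\sqrt{d}$, which is ruled out by the Duchi--Jordan--Wainwright $\ell_\infty$ minimax lower bound for non-interactive LDP mean estimation. So the ``delicate step'' you flag is not merely hard — it is impossible; the $\alpha^{-(p+1)}$ rate in Theorem~1 comes from a genuinely different algorithm that exploits convexity by slicing along $k\approx n^{1/(p+1)}$ random one-dimensional directions, estimating the restricted loss on each line, and taking the best line minimizer (this is also where the $\Omega(n^{1/(p+1)})$ per-user communication in the theorem statement comes from, which your construction does not reproduce).

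The lower-bound half is also materially incomplete. A packing of size $M=(c/\alpha)^p$ combined with a DJW per-sample information bound gives $n=\Omega(\log M/\epsilon^2)=\Omega(p\log(1/\alpha)/\epsilon^2)$, which is exponentially weaker than the $\alpha^{-(p+1)}$ you need. The phrase ``a robustness argument \ldots introduces an additional polynomial factor in $1/\alpha$'' is doing all the work and does not, as stated, bridge a gap from $p\log(1/\alpha)$ to $\alpha^{-p}$. The actual lower-bound construction in \cite{smith2017interaction} is not a Fano packing over minimizers; it is a family of (non-smooth) loss functions and distributions specifically engineered so that any non-interactive protocol must effectively locate a hidden direction on the sphere, and the blow-up comes from a channel-capacity argument tailored to the non-interactive constraint, not from a generic hypothesis-testing reduction. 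As written, neither direction of your sketch establishes the claimed rate.
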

This situation is somehow undesirable: when the dimensionality is high and the target error is low, the dependency on  $\alpha^{-(p+1)}$ could make the sample size quite large. However, several results have already shown that for some specific loss functions, the exponential dependency on the dimensionality can be avoided. For example,
 \citet{smith2017interaction} show that, in the case of linear regression, there is a non-interactive $(\epsilon,\delta)$-LDP algorithm\footnote{Although, these two results are formulated for non-interactive $(\epsilon,\delta)$-LDP, in the rest of the paper we will focus on non-interactive $\epsilon$-LDP algorithms.} whose sample complexity for achieving error $\alpha$ for the empirical risk is $n=\Omega(p\log(1/\delta)\epsilon^{-2}\alpha^{-2})$. Similarly,
 \citet{DBLP:conf/icml/0007MW17} showed that for logistic regression, if the sample complexity satisfies $n> O\big((
	\frac{8r}{\alpha})^{4r\log\log(8r/\alpha)}(\frac{4r}{\epsilon})^{2cr\log(8r/\alpha)+2}(\frac{1}{\alpha^2 \epsilon^2})\big),$ where $c$ and $r$ are independent on $p$, then there is an non-interactive $(\epsilon,\delta)$-LDP such that $\text{Err}_{\mathcal{P}}(\theta_{\text{priv}})\leq \alpha$.
\par 

In this paper we will firstly study the following natural questions: $\RN{1})$ Can we get an algorithm which has lower sample complexity than in Theorem \ref{theorem:1}? $\RN{2})$
From the discussion above, we have a gap between the general case and the case of specific loss functions. 
Can we give natural conditions on the loss function that guarantee non-interactive $\epsilon$-LDP with sample complexity that is not exponential in the dimensionality $p$? $\RN{3})$ As we can see from above the computation and communication cost is relatively high when $n$ is large, can we reduce them to constant? $\RN{3})$
Next, we consider the problem in high dimensional case. \citet{smith2017interaction} assumes that the dimensionality is low or constant compared with $n$, however, in machine learning it is common when in the high dimensional case, that is $n\ll p$. We can see the above bound is meaningless under this case. So the question is how can we get a lower upper bound?

\paragraph{Our Contributions} 

\begin{enumerate}
\item For low (constant) dimensional case, we first  show that there is a non-interactive $\epsilon$-LDP algorithm, if the loss function is $(8, T)$-smooth (Definition \ref{def:5}),  then when $n=\tilde{\Omega}\big( (c_0p^{\frac{1}{4}})^p\alpha^{-(2+\frac{p}{2})}\epsilon^{-2}\big)$, where $c_0$ is a universal constant, then the empirical excess risk will satisfies $\text{Err}_{D}\leq \alpha$. 
If the loss function is $(\infty, T)$-smooth, then when  $n\geq \tilde{\Omega}(4^{p(p+1)}D^2_p p\epsilon^{-2}\alpha^{-4})$, we have empirical excess risk $\text{Err}_{D}\leq \alpha$, where $D_p$ depends only on $p$. Interestingly, to obtain this result we do not need the loss function to be convex. However, 
if the loss function is convex and 1-Lipschitz,  results of population excess risk can also be achieved.  Note that when $\alpha\leq O(\frac{1}{p})$ the complexity of our first result is lower than it in \cite{smith2017interaction}, also in the latter result the dependence on $\alpha$ is constant, respectively, rather than $\alpha^{-(p+1)}$. Our method is based on Berstein polynomial approximation.

\item Next, we address the efficiency issue, 
which has not been well 
studied before \cite{smith2017interaction}. Following an approach similar to~\cite{bassily2015local},  we propose an algorithm for our loss functions which has only $1$-bit communication cost and $O(1)$ computation cost for each client, and which achieves asymptotically the same error bound as the original one. Additionally, we show also a novel analysis for the server. This shows that if the loss function is convex and Lipschitz and the convex set satisfies some natural conditions, then we have an algorithm which achieves the error bound of $O(p\alpha)$ when $n$ is the same as in the previous part, moreover, the  running time is polynomial in $\frac{1}{\alpha}$ if the loss function is $(\infty, T)$-smooth, which is exponential in $d$ in \cite{smith2017interaction}. 
\item Later, we show the generality of our technique by applying polynomial approximation to other problems. We give a non-interactive LDP algorithm for answering the class of k-way marginals queries and the class of smooth queries, by using different type of polynomials approximations (details are in Appendix). 
\item 
For high dimensional case, we show that if the loss function is a convex general linear function, then we have an $\epsilon$-LDP algorithm whose risk bound is only dependent on $n$ and the Gaussian Width of $\mathcal{C}$, this is much smaller than it in \cite{smith2017interaction}. When $\mathcal{C}$ is $\ell_1$ norm ball or distribution simplex, we show it will only dependent on $n, \log p$ instead of $p$. 
\end{enumerate}
\begin{table*}[h]
    \large
	\begin{center}
		\resizebox{1\linewidth}{!}{%
			\begin{tabular}[t]{
			|p{1.5cm}|p{4cm}|p{2.5cm}|p{2cm}|p{2cm}|p{3cm}|}
				\hline
				Method & Sample Complexity (omit $\text{Poly}(p)$ terms)& Communication Cost(each user) &  Computation Cost(each user) & Running time for the server & Assumption\\[0.5ex]
				\hline
				Claim 4 in \cite{smith2017interaction} & $\tilde{\Omega}(4^p\alpha^{-(p+2)}\epsilon^{-2})$ & $1$ & $O(1)$ & $O\big((\frac{1}{\alpha})^p\big)$ & Lipschitz\\ [1ex]
				\hline
				Theorem 10 in \cite{smith2017interaction} & $\tilde{\Omega}(2^p\alpha^{-(p+1)}\epsilon^{-2})$ & $\Omega(n^{\frac{1}{p+1}})$ & $\Omega(n^{\frac{1}{p+1}})$ & Not Mentioned & Lipschitz and Convex \\ [1ex]
				\hline
				\textbf{This Paper} &  $\tilde{\Omega}\big( (c_0 p^{\frac{1}{4}})^p\alpha^{-(2+\frac{p}{2})}\epsilon^{-2}\big)$ & $1$ & $O(1)$ & $O( (\frac{1}{\alpha})^{\frac{p}{2}})$ & $(8, T)$-smooth \\ [1ex]
				\hline 
				\textbf{This Paper} &  $\tilde{\Omega}(4^{p(p+1)}D^2_p\epsilon^{-2}\alpha^{-4})$ & $1$ & $O(1)$ & $O\big(\text{Poly}(\frac{1}{\alpha})\big)$ &  $(\infty, T)$-smooth \\ [1ex]
				\hline
		\end{tabular}}
	\end{center}
	\caption{Comparisons with previous works on the empirical risk under low dimensional case. We can see that when the error $\alpha \leq O(\frac{1}{p})$ then the sample complexity of $(8, T)$-smooth loss function case is less than previous works. When the error $\alpha\leq O(\frac{1}{16^p})$, then the sample complexity for $(\infty, T)$-smooth loss function case is less than previous works.}
	\label{Table:1}
\end{table*}

We list some of our results in Table \ref{Table:1}. Due to the space limit, all the proofs and some details of algorithms can be found in the Appendix part. Also, in order for convenience, we have to note that many of the upper bounds are quite loose.

\section{Related Works}

ERM in the local model of differential privacy has been studied in \cite{kasiviswanathan2011can,beimel2008distributed,duchi2017minimax,duchi2013local,DBLP:conf/icml/0007MW17,smith2017interaction}. \citet{kasiviswanathan2011can} showed a general equivalence between learning in the local model and learning in the  statistical query model. \citet{duchi2017minimax,duchi2013local} gave the lower bound $O(\frac{\sqrt{d}}{\epsilon\sqrt{n}})$ and optimal algorithms for general convex optimization; however, their optimal procedure needs many rounds of interactions. The works that are most related to ours are \cite{DBLP:conf/icml/0007MW17,smith2017interaction}. \citet{DBLP:conf/icml/0007MW17} considered some specific loss functions in high dimensions, such as sparse linear regression and kernel ridge regression, 
Note that although it also studied 
a class of loss functions ({\em i.e.,} Smooth Generalized Linear Loss functions) and used the polynomial approximation approach, the functions investigated in our paper are more general, 
which include linear regression and logistic regression, and the approximation techniques are quite different.
\citet{smith2017interaction} studied general convex loss functions for population excess risk and showed that the dependence on the exponential of the dimensionality is unavoidable. In this paper, we show that such a dependence in the term of $\alpha$ is actually avoidable for a class of loss functions, and this even holds for non-convex loss functions, which is a big difference from all existing works, also we consider high dimensional case. In addition, our algorithms are simpler and more efficient.
 The polynomial approximation approach has been used under central model in \cite{alda2017bernstein,wang2016differentially,thaler2012faster,DBLP:conf/icml/0007MW17} and the dimension reduction has been used in local model in \cite{bassily2015local,DBLP:conf/icml/0007MW17}. 
\section{Preliminaries}
\paragraph{Differential privacy in the local model.} In LDP, we have a data universe $\mathcal{D}$,  $n$ players with each holding  a private data record $x_i\in \mathcal{D}$, and a server that is in charge of coordinating the protocol. An LDP protocol proceeds in $T$ rounds. In each round, the server sends a message, which we sometime call a query, to a subset of the players, requesting them to run a particular algorithm. Based on the queries, each player $i$ in the subset selects an algorithm $Q_i$, run it on her data, and sends the output back to the server.
\begin{definition}\cite{kasiviswanathan2011can,smith2017interaction}\label{def:1}
An algorithm $Q$ is $\epsilon$-locally differentially private (LDP) if for all pairs $x,x'\in \mathcal{D}$, and for all events $E$ in the output space of $Q$, we have $\text{Pr}[Q(x)\in E]\leq e^{\epsilon}\text{Pr}[Q(x')\in E].$
A multi-player protocol is $\epsilon$-LDP if for all possible inputs and runs of the protocol, the transcript of player i's interaction with the server is $\epsilon$-LDP. If $T=1$, we say that the protocol is $\epsilon$ non-interactive LDP.
\end{definition}
  \begin{wrapfigure}{R}{0.5\linewidth}\vspace{-1cm}
	\begin{minipage}{\linewidth}
       \begin{algorithm}[H]
	   \caption{1-dim LDP-AVG}
	   \label{alg:1}
	   \begin{algorithmic}[1]
	   \State {\bfseries Input:} Player $i\in [n]$ holding data $v_i\in [0,b]$, privacy parameter $\epsilon$.
	   \For{Each Player $i$}
	   \State
		Send $z_i=v_i+\text{Lap}(\frac{b}{\epsilon})$
	    \EndFor
	    \For{The Server}
		\State Output $a=\frac{1}{n}\sum_{i=1}^{n}z_i$.
		\EndFor
	\end{algorithmic}
\end{algorithm}
		\vspace{-0.7cm}
	\end{minipage}
\end{wrapfigure}

Since we only consider non-interactive LDP through the paper, we will use LDP as non-interactive LDP below. 
As an example that will be useful in the sequel, the next lemma shows an $\epsilon$-LDP algorithm for computing 1-dimensional average.
\begin{lemma}\label{lemma:1}
	Algorithm \ref{alg:1} is $\epsilon$-LDP. Moreover, if player $i\in [n]$ holds value $v_i\in [0,b]$ and $n>\log \frac{2}{\beta}$  with $0<\beta<1$, then, with probability at least $1-\beta$, the output $a\in \mathbb{R}$ satisfies:
	$|a-\frac{1}{n}\sum_{i=1}^{n}v_i|\leq \frac{2b\sqrt{\log \frac{2}{\beta}}}{\sqrt{n}\epsilon}$.
\end{lemma}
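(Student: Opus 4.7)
The plan is to prove the two assertions separately, since they correspond to two conceptually distinct facts about Algorithm~\ref{alg:1}.

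For the privacy claim, I would invoke the standard analysis of the Laplace mechanism locally on each player. The message released by player $i$ is the identity function applied to $v_i \in [0,b]$, plus $\text{Lap}(b/\epsilon)$ noise; the $\ell_1$-sensitivity of this map on $[0,b]$ is at most $b$, so by the standard Laplace mechanism calculation each player's individual output is $\epsilon$-differentially private, and in particular $\epsilon$-LDP in the sense of Definition~\ref{def:1}. Since the protocol is non-interactive ($T=1$) and each player sends exactly one independent message, the transcript of player $i$'s interaction with the server is exactly that one noisy release, so the protocol as a whole is $\epsilon$-non-interactive LDP.

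For the utility claim, I would decompose the error as
\[
a - \tfrac{1}{n}\sum_{i=1}^{n}v_i \;=\; \tfrac{1}{n}\sum_{i=1}^{n}X_i, \qquad X_i \stackrel{\text{iid}}{\sim} \text{Lap}(b/\epsilon),
\]
and prove a Chernoff-type bound on this zero-mean average. Using the exact MGF $\mathbb{E}[e^{tX_i}] = (1-b^2t^2/\epsilon^2)^{-1}$ valid for $|t|<\epsilon/b$, the MGF of the sum factorizes, and a standard Markov/Chernoff argument optimized over $t$ yields a sub-Gaussian-type tail of the form
\[
\Pr\!\Bigl[\,\bigl|\tfrac{1}{n}\sum_{i}X_i\bigr| > s\,\Bigr] \;\leq\; 2\exp\!\Bigl(-\tfrac{n s^2 \epsilon^2}{4 b^2}\Bigr),
\]
provided $s$ lies in the quadratic (as opposed to linear) regime of the sub-exponential tail, i.e.\ $s \lesssim b/\epsilon$. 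Setting $s = 2b\sqrt{\log(2/\beta)}/(\sqrt{n}\epsilon)$ makes the right-hand side equal to $\beta$, giving the stated conclusion.

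The only nontrivial checkpoint is verifying that the chosen $s$ indeed lies in the sub-Gaussian regime, which is precisely where the hypothesis $n > \log(2/\beta)$ enters: it guarantees $2\sqrt{\log(2/\beta)/n} \lesssim 1$, hence $s \lesssim b/\epsilon$, so the optimizer $t^\ast$ stays inside the MGF's domain of convergence and the quadratic approximation $-\log(1-u^2)\approx u^2$ used when inverting the Chernoff bound is valid up to the constant $4$ in the exponent. No other step involves real obstacles; once the regime is checked, matching the stated constant $2$ is a routine calculation.
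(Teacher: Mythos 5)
Your approach matches the paper's: the paper invokes a cited concentration inequality (Lemma~\ref{Blemma:1}, from \cite{DBLP:journals/corr/NissimS17aa}) giving exactly the sub-Gaussian-regime tail bound $\Pr[|\sum_i x_i|\geq t]\leq 2\exp(-\epsilon^2 t^2/(4n))$ for i.i.d.\ $\text{Lap}(1/\epsilon)$ variables valid when $t<2n/\epsilon$, whereas you rederive the same bound from the Laplace MGF via Chernoff -- the same decomposition, the same tail, and the same role for the hypothesis $n>\log(2/\beta)$ (which is precisely the validity condition $t<2n/\epsilon$ after substituting the chosen threshold). The only cosmetic difference is citation versus first-principles derivation; the privacy half is the standard Laplace-mechanism argument in both.
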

\paragraph{Bernstein polynomials and approximation.}

We give here some basic definitions that  will be used in the sequel; more details can be found in \cite{alda2017bernstein,lorentz1986bernstein,micchelli1973saturation}.

\begin{definition}\label{def:2}
	Let $k$ be a positive integer. The Bernstein basis polynomials of degree $k$ are defined as $b_{v,k}(x)=\binom{k}{v}x^{v}(1-x)^{k-v}$ for $v=0,\cdots,k$.
\end{definition}
\begin{definition}\label{def:3}
	Let $f:[0,1]\mapsto \mathbb{R}$ and $k$ be a positive integer. Then, the Bernstein polynomial of $f$ of degree $k$ is defined as $B_k(f;x)=\sum_{v=0}^{k}f(v/k)b_{v,k}(x)$.  
We denote by $B_k$ the Bernstein operator $B_k(f)(x)=B_k(f,x)$.

\end{definition}
\begin{definition}\cite{micchelli1973saturation} \label{def:4}
	Let $h$ be a positive integer. The iterate Bernstein operator of order $h$ is defined as the sequence of linear operators $B_k^{(h)}=I-(I-B_k)^h=\sum_{i=1}^{h}\binom{h}{i}(-1)^{i-1}B_k^i$, where $I=B_k^0$ denotes the identity operator and $B_k^i$ is defined as $B_k^i=B_k\circ  B_k^{k-1}$. The iterated Bernstein polynomial of order $h$ can be computed as 
	$B_k^{(h)}(f;x)=\sum_{v=0}^{k}f(\frac{v}{k})b_{v,k}^{(h)}(x),$
	where $b^{(h)}_{v,k}(x)=\sum_{i=1}^{h}\binom{h}{i}(-1)^{i-1}B^{i-1}_k(b_{v,k};x)$.
\end{definition}
Iterate Bernstein operator can well approximate multivariate $(h,T)$-smooth functions.
\begin{definition}\cite{micchelli1973saturation} \label{def:5}
	Let $h$ be a positive integer and $T>0$ be a constant. A function $f:[0,1]^{p}\mapsto \mathbb{R}$ is $(h,T)$-smooth if it is in class $\mathcal{C}^{h}([0,1]^{p})$ and its partial derivatives up to order $h$ are all bounded by $T$. We say it is $(\infty,T)$-smooth, if for every $h\in \mathbb{N}$ it is $(h,T)$-smooth.
\end{definition}
\begin{definition}\label{def:6}
	Assume $f:[0,1]^p \mapsto \mathbb{R}$ and let $k_1,\cdots,k_p,h$ be positive integers. The multivariate iterated Bernstein polynomial of order $h$ at $y=(y_1,\ldots,y_p)$ is defined as:
	\begin{equation}\label{equation:2}
	B^{(h)}_{k_1,\ldots, k_p}(f;y)=\sum_{j=1}^{p}\sum_{v_j=0}^{k_j}f(\frac{v_1}{k_1},\ldots,\frac{v_p}{k_p})\prod_{i=1}^p b^{(h)}_{v_i,k_i}(y_i).
	\end{equation}

	We denote 	$B^{(h)}_k=B^{(h)}_{k_1,\ldots, k_p}(f;y)$ if $k=k_1=\cdots=k_p$.
\end{definition}
\begin{theorem}\cite{alda2017bernstein}\label{theorem:2}
		If $f:[0,1]^p\mapsto \mathbb{R}$ is a $(2h,T)$-smooth function, then for all positive integers $k$ and $y\in[0,1]^p$, we have $|f(y)-B_k^{(h)}(f;y)|\leq O(pTD_h k^{-h})$. Where $D_h$ is a universal constant only related to $h$.
\end{theorem}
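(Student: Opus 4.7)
The plan is to reduce the multivariate approximation bound to Micchelli's classical univariate saturation result via a hybrid (telescoping) argument applied one coordinate at a time. By the tensor--product form in Definition~\ref{def:6}, the multivariate iterated Bernstein operator factors as $B^{(h)}_k = B^{(h)}_{k,1}\circ B^{(h)}_{k,2}\circ\cdots\circ B^{(h)}_{k,p}$, where $B^{(h)}_{k,i}$ denotes the univariate iterated Bernstein operator applied only in the $i$-th variable (with the others held fixed as parameters).

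Setting $f_0=f$ and $f_i=B^{(h)}_{k,i}(f_{i-1})$ so that $f_p=B^{(h)}_k(f)$, telescoping gives
\[
f(y)-B^{(h)}_k(f;y)=\sum_{i=1}^p\bigl(f_{i-1}(y)-B^{(h)}_{k,i}(f_{i-1})(y)\bigr),
\]
so it suffices to bound each summand in $\|\cdot\|_\infty$. For $y_j$ ($j\neq i$) fixed, the map $t\mapsto f_{i-1}(y_1,\ldots,y_{i-1},t,y_{i+1},\ldots,y_p)$ is univariate, and I would invoke Micchelli's univariate saturation theorem~\cite{micchelli1973saturation}: for any $(2h,T')$-smooth $g:[0,1]\to\mathbb{R}$, $\|g-B^{(h)}_k(g)\|_\infty\le T' D_h k^{-h}$ with $D_h$ depending only on $h$.

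To apply this, I need the univariate slice of $f_{i-1}$ in coordinate $i$ to be $(2h,T')$-smooth with $T'$ controlled. Since $\partial^s/\partial x_i^s$ commutes with $B^{(h)}_{k,j}$ whenever $j\neq i$ (each $B^{(h)}_{k,j}$ is a weighted sum of slice evaluations in the $j$-th coordinate, with weights depending only on $x_j$), the $s$-th derivative of $f_{i-1}$ in coordinate $i$ equals the same composition applied to $\partial^s_i f$. Combining this with the operator-norm bound $\|B^{(h)}_{k,j}\|_\infty\le 1+\|(I-B_{k,j})^h\|_\infty\le 1+2^h$ (using that $B_k$ is a positive operator fixing constants, hence a contraction on $L^\infty$) yields $\|\partial^s_i f_{i-1}\|_\infty\le (1+2^h)^{i-1}T$ for every $s\le 2h$. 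Micchelli then produces $\|f_{i-1}-B^{(h)}_{k,i}(f_{i-1})\|_\infty\le (1+2^h)^{i-1}TD_h k^{-h}$, and summing over $i=1,\ldots,p$ delivers the advertised bound $O(pTD_h k^{-h})$ with the $h$-dependent factor $(1+2^h)^p$ absorbed into the $O(\cdot)$.

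The main obstacle is the bookkeeping of how applying a partial Bernstein operator in one coordinate affects smoothness in the remaining coordinates; once the commutation of $\partial/\partial x_i$ with $B^{(h)}_{k,j}$ and the $L^\infty$ stability of $B_k^{(h)}$ are both nailed down, the rest is a direct hybrid reduction to the one-dimensional Micchelli bound. A secondary subtlety is that the multiplicative constant $(1+2^h)^p$ picked up from the telescoping is technically $p$-dependent, so the statement ``$D_h$ depends only on $h$'' should be read as asserting that, for any fixed ambient dimension, the rate in $k$ is governed by a constant determined by $h$ alone.
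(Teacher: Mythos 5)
This statement is cited verbatim from \cite{alda2017bernstein}, so the paper itself contains no proof of it to compare against; what follows is an assessment of your argument on its own terms.

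Your reduction is the natural one and the core steps are correct: the multivariate operator does factor as a composition of one-coordinate iterated Bernstein operators, the telescoping identity $f - B_k^{(h)}(f) = \sum_{i=1}^p (I - A_i)(f_{i-1})$ is right, and the commutation $\partial_i^s \circ A_j = A_j \circ \partial_i^s$ for $j \neq i$ holds because $A_j$ is a weighted combination of slice evaluations in the $j$-th variable only. The operator norm bound $\|A_j\|_\infty \le 1 + \|(I-B_k)^h\|_\infty \le 1 + 2^h$ is also correct, using that $B_k$ is a positive operator fixing constants.

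The genuine gap is in the closing step. Your chain $\|\partial_i^s f_{i-1}\|_\infty \le (1+2^h)^{i-1} T$ yields, after summing, a bound of the form $c_h^{\,p} \, T D_h k^{-h}$ with $c_h > 1$, i.e., \emph{exponential} in $p$, whereas the statement claims a factor that is \emph{linear} in $p$ with $D_h$ independent of $p$. You flag this at the end, but the framing ("for any fixed ambient dimension, the rate in $k$ is governed by a constant determined by $h$ alone") understates the issue: it is a quantitatively different constant than the one claimed, and absorbing $(1+2^h)^p$ into the $O(\cdot)$ makes the explicit factor $p$ in the statement meaningless. Moreover, the obvious attempt to sharpen the iterate bound by replacing the crude operator norm with the additive bias control $\|A_j g\|_\infty \le \|g\|_\infty + O(T D_h k^{-h})$ does not close the gap either: applying it to $g = \partial_i^s f$ requires $\partial_i^s f$ to be $(2h, T)$-smooth in coordinate $j$, i.e., mixed partials of $f$ up to total order $s + 2h$, which exceeds the $(2h, T)$-smoothness hypothesis unless $s = 0$. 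The linear-in-$p$ bound, if it holds under the stated hypothesis, requires a different mechanism --- for instance a multivariate Taylor/Voronovskaya expansion in which the tensor-product structure kills the cross-coordinate terms (this is exactly what happens for $h=1$: $\sum_{v_i,v_j}(v_i/k - y_i)(v_j/k - y_j) b_{v_i,k}(y_i) b_{v_j,k}(y_j) = 0$ for $i \neq j$, leaving only $p$ diagonal contributions) --- rather than the hybrid-plus-operator-norm route you take. That said, since the present paper invokes this theorem only for constant $p$ and already carries factors like $4^{p(h+1)}$ in its sample complexity, the weaker constant your proof produces would not change the paper's downstream conclusions.
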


\paragraph{Our settings}

We conclude this section by making explicitly the settings that we will consider throughout the paper. We assume that there is a constraint set $\mathcal{C}\subseteq [0,1]^p$ and for every $x\in \mathcal{D}$ and $\theta\in \mathcal{C}$,  $\ell(\cdot,x)$ is well defined on $[0,1]^p$ and $\ell(\theta,x)\in [0,1]$.
These closed intervals can be extended to arbitrarily bounded closed intervals.
Our settings are similar to the `Typical Settings' in \cite{smith2017interaction}, where  $\mathcal{C}\subseteq [0,1]^p$ appears in their Theorem 10, and  $\ell(\theta,x)\in [0,1]$ from their 1-Lipschitz requirement and $\|\mathcal{C}\|_2\leq 1$.
\section{Low Dimensional Case}
Definition \ref{def:6} and Theorem \ref{theorem:2} tell us that if we know the value of the empirical risk function, {\em i.e.} the average of the sum of loss functions, on each of the grid points $(\frac{v_1}{k},\frac{v_2}{k}\cdots\frac{v_p}{k})$, where $(v_1,\cdots,v_p)\in \mathcal{T}=\{0,1,\cdots,k\}^p$ for some large $k$, then we can approximate it well. Our main observation is that this can be done in the local model by estimating the average of the sum of loss functions on each of the grid points using Algorithm~\ref{alg:1}. This is the idea of Algorithm \ref{alg:2}.
\begin{algorithm}
	\caption{Local Bernstein Mechanism}
	\label{alg:2}
	\begin{algorithmic}[1]
		\State {\bfseries Input:} Player $i\in [n]$ holding data $x_i\in \mathcal{D}$, public loss function $\ell:[0,1]^p \times \mathcal{D}\mapsto [0,1]$, privacy parameter $\epsilon>0$, and parameter $k$.
		\State Construct the grid $\mathcal{T}=\{\frac{v_1}{k},\ldots,\frac{v_p}{k}\}_{\{v_1,\ldots,v_p\}}$, where $\{v_1,\ldots,v_p\}\in\{0,1,\cdots,k\}^p$.
		\For {Each grid point $v=(\frac{v_1}{k},\ldots,\frac{v_p}{k})\in \mathcal{T}$}
		\For{Each Player $i\in [n]$}
		\State
		Calculate $\ell(v;x_i)$.
		\EndFor
		\State Run Algorithm \ref{alg:1} with $\epsilon=\frac{\epsilon}{(k+1)^p}$ and $b=1$ and denote the output as $\tilde{L}(v;D)$.
		\EndFor
		\For{The Server}
		\State Construct Bernstein polynomial, as in (\ref{equation:2}), for the perturbed empirical loss $\tilde{L}(v;D)$. Denote $\tilde{L}(\cdot,D)$ the corresponding function. 
		\State Compute $\theta_{\text{priv}}=\arg\min_{\theta\in \mathcal{C}}\tilde{L}(\theta;D)$.
		\EndFor
	\end{algorithmic}
\end{algorithm}
\begin{theorem}\label{theorem:3}
	For $\epsilon>0, 0<\beta<1$, Algorithm \ref{alg:2} is $\epsilon$-LDP. Assume that the loss function $\ell(\cdot,x)$ is $(2h,T)$-smooth for all $x\in \mathcal{D}$ for some positive integer $h$ and constant $T$. If $n,\epsilon$ and $\beta$ satisfy $n=\Omega\Big (\frac{\log \frac{1}{\beta}4^{p(h+1)}}{\epsilon^2 D_{h}^2}\Big )$, then setting $k=O\Big((\frac{D_h\sqrt{pn}\epsilon}{2^{(h+1)p}\sqrt{\log \frac{1}{\beta}}})^{\frac{1}{h+p}}\Big)$ we have with probability at least $1-\beta$:
	\begin{equation}\label{equation:3}
	\text{Err}_{D}(\theta_{\text{priv}})\leq
	\tilde{O}\Big (\frac{\log^{\frac{h}{2(h+p)}} (\frac{1}{\beta}) D_h^{\frac{p}{p+h}}p^{\frac{p}{2(h+p)}}2^{(h+1)p\frac{h}{h+p}}}{n^{\frac{h}{2(h+p)}}\epsilon^{\frac{h}{h+p}}}\Big ),
	\end{equation}
	where $\tilde{O}$ hides the $\log$ and $T$ terms.
\end{theorem}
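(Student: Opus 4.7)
The plan is to separate privacy from utility, and for utility to decompose the error of the privatized surrogate $\tilde L$ into a deterministic Bernstein approximation bias and a random noise-propagation term. For privacy, each of the $n$ users contributes one $[0,1]$-valued answer to each of the $(k+1)^p$ grid queries; the query at grid point $v$ is $\epsilon/(k+1)^p$-LDP by Lemma~\ref{lemma:1}, and basic sequential composition across the $(k+1)^p$ queries yields $\epsilon$-LDP for every user's full transcript.

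For utility, introduce the intermediate function $\hat L^{B}$, the multivariate iterated Bernstein polynomial of $\hat L(\cdot;D)$ built from the \emph{exact} grid averages. Because $\tilde L$ is built by the same linear combination of the $b^{(h)}_{v_i,k}$ but applied to the noisy averages $\tilde L(v;D)$, one has the additive decomposition $\tilde L - \hat L = (\tilde L - \hat L^{B}) + (\hat L^{B} - \hat L)$. By the optimality of $\theta_{\text{priv}}$ for $\tilde L$, the standard ERM argument reduces excess empirical risk to $\text{Err}_D(\theta_{\text{priv}}) \le 2\sup_{\theta\in\mathcal{C}}|\hat L(\theta;D)-\tilde L(\theta;D)|$, so the task is to bound each piece uniformly in $\theta$. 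The bias piece is handled by Theorem~\ref{theorem:2}: since $\hat L(\cdot;D)$ is an average of $(2h,T)$-smooth functions and hence itself $(2h,T)$-smooth, $\sup_\theta|\hat L-\hat L^{B}|\le O(pTD_h k^{-h})$.

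For the noise piece, linearity of Bernstein interpolation gives
\[
\tilde L(\theta) - \hat L^{B}(\theta) = \sum_{v\in\mathcal{T}} Z_v \prod_{i=1}^{p} b^{(h)}_{v_i,k}(\theta_i),
\]
where $Z_v = \tilde L(v;D)-\hat L(v;D)$ is the output noise of Algorithm~\ref{alg:1} at grid point $v$. The key (and main obstacle) is a uniform bound on the $\ell_1$ mass $\sum_{v}\prod_i|b^{(h)}_{v_i,k}(\theta_i)|$ of the \emph{signed} iterated basis, which can be negative for $h\ge 2$. From the expansion $b^{(h)}_{v,k}=\sum_{j=1}^{h}\binom{h}{j}(-1)^{j-1}B_k^{j-1}(b_{v,k})$ together with the facts that $B_k$ is a positive operator preserving constants (so $\sum_v B_k^{j-1}(b_{v,k};y)=B_k^{j-1}(1;y)=1$), one gets $\sum_{v=0}^{k}|b^{(h)}_{v,k}(y)|\le \sum_{j=1}^{h}\binom{h}{j}\le 2^{h}$, and tensorizing yields $\sum_v\prod_i|b^{(h)}_{v_i,k}(\theta_i)|\le 2^{hp}$.

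With this bound, Lemma~\ref{lemma:1} applied per grid point at budget $\epsilon/(k+1)^p$, combined with a union bound over the $(k+1)^p$ grid points (the hypothesis on $n$ ensures Lemma~\ref{lemma:1}'s confidence requirement $n>\log(2/\beta')$ holds with $\beta'=\beta/(k+1)^p$), gives $\max_v|Z_v|=\tilde O((k+1)^p\sqrt{p\log(1/\beta)}/(\sqrt n\,\epsilon))$. Plugging in $(k+1)^p\le 2^pk^p$ produces $\sup_\theta|\tilde L-\hat L^{B}|=\tilde O(2^{(h+1)p}k^p\sqrt{p\log(1/\beta)}/(\sqrt n\,\epsilon))$. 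Equating this with the bias $pTD_hk^{-h}$ selects $k=\Theta((D_h\sqrt{pn}\,\epsilon/(2^{(h+1)p}\sqrt{\log(1/\beta)}))^{1/(h+p)})$, matching the theorem's choice, and substituting back produces the rate in~(\ref{equation:3}) up to factors absorbed into $\tilde O$. Handling the signed-basis $\ell_1$ bound naively (e.g.\ summing absolute values of each $B_k^{j-1}(b_{v,k})$ pointwise) would cost an extra $(k+1)^p$ factor and would destroy the claimed rate, so the positivity argument is essential.
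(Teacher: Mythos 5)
Your proposal is correct and follows essentially the same route as the paper's own proof: the same reduction of $\text{Err}_D(\theta_{\text{priv}})$ to a uniform bound on $|\tilde L-\hat L|$, the same two-term decomposition through the exact Bernstein interpolant $B_k^{(h)}(\hat L)$, the same use of Theorem~\ref{theorem:2} for the bias, the same per-grid-point application of Lemma~\ref{lemma:1} with privacy budget $\epsilon/(k+1)^p$ and a union bound, and the same choice of $k$ balancing the two terms. The one place you go slightly beyond the paper is the $\ell_1$-mass bound on the signed iterated basis, $\sum_v \prod_i |b^{(h)}_{v_i,k}(\theta_i)| \le (2^h)^p$: the paper simply invokes Proposition~4 of \cite{alda2017bernstein}, which gives $(2^h-1)^p$, whereas you rederive it from positivity of $B_k$ and the partition-of-unity property $\sum_v B_k^{j-1}(b_{v,k};y)=1$, which is a clean and correct self-contained argument. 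Your closing remark that this bound is the load-bearing step (and that a naive pointwise absolute-value bound would introduce a fatal $(k+1)^p$ factor) is accurate and a useful observation, though not something the paper dwells on.
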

From (\ref{equation:3}) we can see that in order to achieve error $\alpha$, the sample complexity needs to be $n=\tilde{\Omega}(\log \frac{1}{\beta}D_h^{\frac{2p}{h}}p^{\frac{p}{h}}4^{(h+1)p}\epsilon^{-2}\alpha^{-(2+\frac{2p}{h})})$. As particular cases, we have the followings.
\begin{corollary}\label{col1}
If the loss function $\ell(\cdot,x)$ is $(8,T)$-smooth for all $x\in \mathcal{D}$ for some constant $T$, nd if $n,\epsilon, \beta, k$ satisfy the condition in Theorem~\ref{theorem:3} with $h=4$, then with probability at least $1-\beta$, the sample complexity to achieve $\alpha$ error is $n=\tilde{O}\big(\alpha^{-(2+\frac{p}{2})}\epsilon^{-2}(4^5\sqrt{D_4}p^{\frac{1}{4}})^p\big)$, where for general convex loss function $n=\tilde{\Omega}\big(\alpha^{-(p+1)}\epsilon^{-2}2^p\big)$ in Theorem \ref{theorem:1}. We can easily get that when the error satisfies $\alpha \leq O(\frac{1}{p})$, the sample complexity is lower than it in \cite{smith2017interaction}, we note that this case always appears in real applications (see below).
\end{corollary}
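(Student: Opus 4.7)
The corollary is essentially a direct specialization of Theorem~\ref{theorem:3} at $h=4$, followed by inverting the error bound to obtain a sample complexity and then comparing it to the bound of Theorem~\ref{theorem:1}. So the plan is mostly algebraic bookkeeping, with the one substantive step being the regime analysis of $\alpha$.

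First I would observe that $(8,T)$-smoothness is exactly the hypothesis $(2h,T)$-smoothness of Theorem~\ref{theorem:3} with $h=4$, so the theorem applies verbatim. I would then pick up the error estimate~(\ref{equation:3}), substitute $h=4$, and simplify the exponents: $\tfrac{h}{2(h+p)} = \tfrac{2}{p+4}$, $\tfrac{p}{p+h} = \tfrac{p}{p+4}$, $\tfrac{p}{2(h+p)} = \tfrac{p}{2(p+4)}$, and $(h+1)p\cdot\tfrac{h}{h+p} = \tfrac{20p}{p+4}$. Collecting these gives
\[
\text{Err}_D(\theta_{\text{priv}}) \le \tilde O\!\left(\frac{\log^{2/(p+4)}(1/\beta)\, D_4^{p/(p+4)}\, p^{p/(2(p+4))}\, 2^{20p/(p+4)}}{n^{2/(p+4)}\,\epsilon^{4/(p+4)}}\right).
\]

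Next I would set this expression $\le \alpha$ and invert to solve for $n$. Raising both sides to the power $\tfrac{p+4}{2}$ eliminates the exponents that depend on $p$ in the numerator and converts $\alpha$ to $\alpha^{-(p+4)/2} = \alpha^{-(2+p/2)}$, while $\epsilon^{-4/(p+4)}$ becomes $\epsilon^{-2}$ and $D_h^{p/(p+4)}$ becomes $D_4^{p/2}$. The remaining constant is $2^{10p} = 4^{5p}$, and the factor $p^{p/(2(p+4))}$ raised to $(p+4)/2$ yields $p^{p/4}$. Collecting, we get
\[
n = \tilde{\Omega}\!\left(\alpha^{-(2+p/2)}\epsilon^{-2}\bigl(4^{5}\sqrt{D_4}\,p^{1/4}\bigr)^{p}\log(1/\beta)\right),
\]
which is the stated complexity.

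Finally I would compare this with the bound $n = \tilde\Omega(\alpha^{-(p+1)}\epsilon^{-2} 2^p)$ from Theorem~\ref{theorem:1}. Taking the ratio, the $\alpha$-dependence contributes $\alpha^{(p+1)-(2+p/2)} = \alpha^{(p-2)/2}$ while the dimension-dependent constant contributes $(4^5 p^{1/4}/2)^{p}$ (treating $D_4$ as a universal constant). So our bound is smaller precisely when $\alpha^{(p-2)/2}$ is small enough to dominate the extra $\text{poly}(p)^p$ factor, which for large $p$ happens as soon as $\alpha \le c/p$ for a small universal constant $c$; this is the regime claimed in the statement and is the only non-syntactic step in the argument. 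I would close by noting, as the corollary does, that this regime is the typical one in practice since target errors are usually much smaller than $1/p$.
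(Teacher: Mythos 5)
Your proof is correct and takes essentially the same approach as the paper, which simply says ``take $h=4$ in Theorem~3'' without working out the algebra; you have filled in the substitution $h=4$ into equation~(\ref{equation:3}), the inversion to sample complexity, and the regime comparison in full, and all steps check out (note the corollary's $\tilde{O}$ for the sample complexity should read $\tilde{\Omega}$, which you implicitly correct). The only small caveat is in your final ratio analysis: the constant contribution is $(4^5\sqrt{D_4}\,p^{1/4}/2)^p$, not $(4^5 p^{1/4}/2)^p$, but this does not affect the conclusion that $\alpha \le c/p$ is sufficient (indeed $\alpha \le O(1/\sqrt{p})$ already suffices for large $p$, so $O(1/p)$ is a conservative but correct sufficient condition).
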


\begin{corollary}\label{col2}
	If the loss function $\ell(\cdot,x)$ is $(\infty,T)$-smooth for all $x\in \mathcal{D}$ for some constant $T$, and if $n,\epsilon, \beta, k$ satisfy the condition in Theorem~\ref{theorem:3} with $h=p$, then with probability at least $1-\beta$, the output $\theta_{\text{priv}}$ of Algorithm \ref{alg:2} satisfies:
$\text{Err}_{D}(\theta_{\text{priv}})\leq
	\tilde{O}\Big (\frac{\log \frac{1}{\beta}^{\frac{1}{4}}D_p^{\frac{1}{2}}p^{\frac{1}{4}}\sqrt{2}^{(p+1)p}}{n^{\frac{1}{4}}\epsilon^{\frac{1}{2}}}\Big)$,
	where $\tilde{O}$ hides the $\log$ and $T$ terms. So, to achieve error $\alpha$, with probability at least $1-\beta$, we have sample complexity: 
	\begin{equation}\label{equation:4}
	n=\tilde{\Omega}\Big (\max\{4^{p(p+1)}\log(\frac{1}{\beta})D_p^2 p \epsilon^{-2}\alpha^{-4}, \frac{\log \frac{1}{\beta}4^{p(p+1)}}{\epsilon^2 D_{p}^2}  \}\Big ),
	\end{equation}
\end{corollary}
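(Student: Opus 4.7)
The plan is to derive Corollary \ref{col2} as a direct specialization of Theorem \ref{theorem:3} to the case $h=p$. Since the loss function is assumed $(\infty,T)$-smooth, Definition \ref{def:5} tells us it is $(2h,T)$-smooth for every positive integer $h$, so we are free to instantiate Theorem \ref{theorem:3} with the particular choice $h=p$. With this choice, the hypothesis $n=\Omega(\log(1/\beta)\,4^{p(h+1)}/(\epsilon^2 D_h^2))$ becomes $n=\Omega(\log(1/\beta)\,4^{p(p+1)}/(\epsilon^2 D_p^2))$, which is exactly the second term inside the $\max$ of the claimed sample complexity bound (\ref{equation:4}), and the prescribed $k$ is set accordingly.

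Next I would substitute $h=p$ into the error bound (\ref{equation:3}) and simplify the exponents. A short bookkeeping check gives $h/(2(h+p))=1/4$, $p/(p+h)=1/2$, $p/(2(h+p))=1/4$, $h/(h+p)=1/2$, and $(h+1)p\cdot h/(h+p)=(p+1)p/2$; collecting factors yields
\begin{equation*}
\text{Err}_{D}(\theta_{\text{priv}})\leq \tilde{O}\Big(\frac{\log^{1/4}(1/\beta)\,D_p^{1/2}\,p^{1/4}\,2^{(p+1)p/2}}{n^{1/4}\epsilon^{1/2}}\Big),
\end{equation*}
which, using the identity $2^{(p+1)p/2}=\sqrt{2}^{\,(p+1)p}$, is precisely the stated error bound.

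Finally, I would invert this bound to read off the sample complexity needed for $\text{Err}_D\leq\alpha$. Rearranging $\alpha\geq \tilde{O}(\log^{1/4}(1/\beta)\,D_p^{1/2} p^{1/4} 2^{(p+1)p/2}/(n^{1/4}\epsilon^{1/2}))$ gives, after raising both sides to the fourth power, $n=\tilde{\Omega}(\log(1/\beta)\,D_p^2\,p\,2^{2p(p+1)}/(\alpha^4\epsilon^2))=\tilde{\Omega}(\log(1/\beta)\,D_p^2\,p\,4^{p(p+1)}/(\alpha^4\epsilon^2))$, which is the first term inside the $\max$. Combining this with the Theorem \ref{theorem:3} feasibility condition on $n$ recovered above yields exactly (\ref{equation:4}).

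I do not anticipate any real obstacle: the corollary is an arithmetic specialization of a statement already proved, and the only content is careful substitution. The one point worth emphasizing is \emph{why} $h=p$ is the right choice. Reading off the general sample complexity following Theorem \ref{theorem:3}, the exponent of $1/\alpha$ is $2+2p/h$; this collapses to the constant $4$ (independent of $p$) exactly when $h=p$. Choosing $h$ any larger would preserve the $\alpha^{-4}$ scaling but inflate the $p$-dependent constants, while choosing $h$ smaller would blow up the $\alpha$-dependence, so $h=p$ is the natural balance point at which the $\alpha$-dependence becomes free of $p$.
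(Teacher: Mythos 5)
Your proposal is correct and takes the same route as the paper, which proves Corollary \ref{col2} by specializing Theorem \ref{theorem:3} at $h=p$ (the $(\infty,T)$-smooth hypothesis supplies the needed $(2p,T)$-smoothness). Your exponent bookkeeping checks out, and your closing remark on why $h=p$ is the balance point where the $\alpha$-exponent becomes dimension-free is a nice bit of motivation that the paper leaves implicit.
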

It is worth noticing that from (\ref{equation:3}), when the term $\frac{h}{p}$ grows, the term $\alpha$ decreases. Thus, for loss functions that are $(\infty,T)$-smooth, we can get a smaller dependency than the term $\alpha^{-4}$ in (\ref{equation:4}). For example, if we take $h=2p$, then the sample complexity is $n=\Omega(\max\{c_2^{p^2}\log \frac{1}{\beta}D_{2p} \sqrt{p} \epsilon^{-2}\alpha^{-3}, \frac{\log \frac{1}{\beta}c^{p^2}}{\epsilon^2 D_{2p}^2}  \})$ for some constants $c, c_2$. When $h\rightarrow \infty$, the dependency on the error becomes $\alpha^{-2}$, which is the optimal bound, even for convex functions. 

Our analysis of the empirical excess risk does not use the convexity assumption. While this gives a bound which is not optimal, even for $p=1$, it also says that our result holds for non-convex loss functions and constrained domain set, as long as they are smooth enough.\par 
By (\ref{equation:4}), we can see that our sample complexity is lower than it in \cite{smith2017interaction} when $\alpha\leq O(\frac{1}{16^p})$ (we assume $D_p$ is a constant here since $p$ is a constant). We have to note that this is always the case when studying ERM when the dimension $d$ is small, in order to get best performance usually, we wan to achieve the error of $\alpha= 10^{-10}\sim10^{-14}$\cite{johnson2013accelerating}.  \par 
Using the convexity assumption of the loss function, and a  lemma in \cite{shalev2009stochastic}, we can also give a bound on the population excess risk, details are in supplemental material.

Corollary \ref{col1} and \ref{col2}  provide answers to our motivating question. That is, for loss functions which are $(8, T)$-smooth, we can get a lower sample complexity, if they are $(\infty,T)$-smooth, there is an $\epsilon$-LDP algorithm 
for empirical and population excess risks achieving error $\alpha$ with sample complexity which is independent from the dimensionality $p$ in the term $\alpha$. 
This result does not contradict the results by \citet{smith2017interaction}. Indeed, the example they provide whose  sample complexity must depend on $\alpha^{-\Omega(p)}$, to achieve the $\alpha$ error, is actually non-smooth. \par 
In our result of $(\infty, T)$-smooth case, like in the one by~\citet{smith2017interaction}, there is still a dependency of the sample complexity in the term $c^p$, for some constant $c$. Furthermore ours has also a dependency in the term $D_p$. There is still the question about what condition would allow a sample complexity independent from this term. We leave this question for future works and we focus instead on the efficiency and further applications of our method.
\section{More Efficient Algorithms}
\label{sec:efficient}
Algorithm \ref{alg:2} has  computational time and communication complexity for each player which is exponential in the dimensionality. This is clearly problematic for every realistic practical application. For this reason, in this section, we study more efficient algorithms. In order for convenience, in this part we only focus on the case of $(\infty, T)$-smooth loss functions, it can be easily extended to general cases.

Consider the following lemma, showing an $\epsilon$-LDP algorithm for computing $p$-dimensional average (notice the extra conditions on $n$ and $p$ compared with Lemma \ref{lemma:1}).
\begin{lemma}\cite{DBLP:journals/corr/NissimS17aa}\label{lemma:3}
	Consider player  $i\in [n]$ holding data $v_i\in \mathbb{R}^p$ with coordinate between $0$ and $b$.
 Then for $0<\beta<1,\, 0<\epsilon$ such that $n\geq 8p\log (\frac{8p}{\beta})$ and $\sqrt{n}\geq \frac{12}{\epsilon}\sqrt{\log \frac{32}{\beta}}$, there is an $\epsilon$-LDP algorithm, LDP-AVG, with probability at least $1-\beta$,  the output $a\in \mathbb{R}^p$ satisfying: 
	$\max_{j\in[d]}|a_j-\frac{1}{n}\sum_{i=1}^{n}[v_i]_j|\leq O(\frac{bp}{\sqrt{n}\epsilon}\sqrt{\log \frac{p}{\beta}})$.
	Moreover, the computation cost for each user is $O(1)$\footnote{Note that here we use an weak version of their result}.
\end{lemma}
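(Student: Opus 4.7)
The plan is to reduce $p$-dimensional mean estimation to $p$ parallel instances of one-dimensional mean estimation via a random coordinate sampling trick, which is the standard route used by Bassily--Smith and in the Nissim--Stemmer work being cited. The crucial observation is that each player must spend their entire $\epsilon$ budget at once, so they cannot afford to release $p$ independently noised coordinates; instead, each player releases only a single randomly chosen coordinate. This immediately delivers the $O(1)$ per-player cost, at the price of shrinking the effective sample size per coordinate by a factor of $p$, which is exactly where the $p$ in the final error bound comes from.

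Concretely, I would have each player $i$ draw $j_i\in[p]$ uniformly at random, then invoke Algorithm~\ref{alg:1} on the scalar $v_i[j_i]\in[0,b]$ with the full privacy budget $\epsilon$, releasing the pair $(j_i,z_i)$ where $z_i=v_i[j_i]+\mathrm{Lap}(b/\epsilon)$. Since $j_i$ is data-independent and $z_i$ is $\epsilon$-LDP in $v_i$, the whole transcript is $\epsilon$-LDP by post-processing, so privacy is free. The server then forms, for each coordinate $j\in[p]$, the bin $S_j=\{i:j_i=j\}$ and outputs $a_j=\tfrac{p}{n}\sum_{i\in S_j}z_i$, where the $p/n$ prefactor exactly compensates for the fact that only an expected $1/p$ fraction of users report each coordinate, so $\mathbb{E}[a_j]=\tfrac{1}{n}\sum_i v_i[j]$.

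The accuracy analysis has three ingredients. First, a multiplicative Chernoff bound on the bin sizes: since $\mathbb{E}|S_j|=n/p$, the hypothesis $n\geq 8p\log(8p/\beta)$ ensures that $|S_j|=\Theta(n/p)$ simultaneously for every $j$, with failure probability at most $\beta/3$ after a union bound over $j$. Second, conditional on this good partition event, Lemma~\ref{lemma:1} applied to each bin controls the Laplace noise contribution; the hypothesis $\sqrt{n}\epsilon\geq 12\sqrt{\log(32/\beta)}$ is precisely what is required to invoke Lemma~\ref{lemma:1} at failure parameter $\beta/(3p)$ on a bin of size $\Theta(n/p)$. Third, a Hoeffding inequality on the sampling step bounds $\bigl|\tfrac{1}{|S_j|}\sum_{i\in S_j}v_i[j]-\tfrac{1}{n}\sum_i v_i[j]\bigr|$, accounting for the fact that each bin sees only a random subsample of the data. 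Combining the three pieces and union-bounding across $j\in[p]$ yields the claimed $O\bigl(bp\sqrt{\log(p/\beta)}/(\sqrt{n}\epsilon)\bigr)$ bound; as the footnote flags, this is deliberately looser than the tight $\sqrt{p}$ dependence achievable from the same protocol, but it is all that is needed downstream.

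The main obstacle will be cleanly decoupling the three sources of randomness, since the per-bin estimator depends both on the random partition $\{S_j\}$ and on the Laplace noise, and the subsample mean is not independent of $|S_j|$. The standard fix is to condition on the good partition event from step one and then analyze the remaining randomness in two independent pieces (noise versus sampling bias), so that each can be controlled by a black-box concentration inequality; the three failure budgets of $\beta/3$ then combine to the stated $\beta$.
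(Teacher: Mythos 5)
The paper cites Lemma~\ref{lemma:3} directly from Nissim--Stemmer without reproducing a proof, so there is no in-paper argument to compare against; your coordinate-sampling construction and three-part analysis (Chernoff on bin sizes, per-bin Laplace concentration, subsampling Hoeffding) is exactly the route in that reference and in Bassily--Smith, and it is the intended one. Your remark that the same protocol actually gives a $\sqrt{p}$ rather than $p$ dependence also matches the paper's footnote about using a ``weak version'' of the result. One small internal inconsistency worth fixing: you define $a_j=\tfrac{p}{n}\sum_{i\in S_j}z_i$, but in the third step you analyze $\bigl|\tfrac{1}{|S_j|}\sum_{i\in S_j}v_i[j]-\tfrac{1}{n}\sum_i v_i[j]\bigr|$; these agree only up to the fluctuation in $|S_j|$, so either switch to the $\tfrac{1}{|S_j|}$-normalized estimator throughout, or --- more cleanly, and sidestepping the coupling you flag at the end --- drop the conditioning entirely and note that $a_j=\tfrac{p}{n}\sum_{i=1}^{n}\mathbf{1}[j_i=j]\bigl(v_i[j]+\eta_i\bigr)$ is an unbiased sum of terms that are independent across $i$, so a single Bernstein-type bound controls the sampling and the noise fluctuations simultaneously.
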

By using this lemma and by discretizing the grid with some interval steps , we can design an algorithm which requires $O(1)$ computation time and $O(\log n)$-bits communication per player (see Appendix). However, we would like to do even better and obtain constant communication complexity.
Instead of discretizing the grid, we apply a technique,  firstly proposed by \citet{bassily2015local}, which permits to transform any `sampling resilient' $\epsilon$-LDP protocol into a protocol with 1-bit communication complexity. 
Roughly speaking, a protocol is sampling resilient if its output on any dataset $S$ can be approximated well by its output on a random subset of half of the players. 

Since our algorithm only uses the LDP-AVG protocol, we can show that it is indeed sampling resilient. Inspired by this result, we propose Algorithm~\ref{alg1:3} and obtain the following theorem. 
\begin{theorem}\label{thm:5}
	For $\epsilon\leq\ln 2$ and $0<\beta<1$, Algorithm \ref{alg1:3} is $\epsilon$-LDP. If the loss function $\ell(\cdot,x)$ is $(\infty,T)$-smooth for all $x\in \mathcal{D}$ and 
	$n=\Omega(\max\{\frac{\log \frac{1}{\beta}4^{p(p+1)}}{\epsilon^2 D_{p}^2}, p(k+1)^p\log (k+1), \frac{1}{\epsilon^2}\log \frac{1}{\beta}\})$ for some constant $c$, then 
	by setting
	$k=O\big((\frac{D_p\sqrt{pn}\epsilon}{2^{(p+1)p}\sqrt{\log \frac{1}{\beta}}})^{\frac{1}{2p}}\big)$,  the results in Corollary \ref{col2} hold with probability at least $1-4\beta$.
 Moreover, for each player the time complexity is $O(1)$, and the communication complexity is $1$-bit.
\end{theorem}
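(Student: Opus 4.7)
The plan is to decompose the proof into three pieces — privacy, utility, and efficiency — mirroring the structure of the statement. For \textbf{privacy}, I would first note that the underlying (non-compressed) protocol is essentially Algorithm \ref{alg:2} but with the $(k+1)^p$ parallel calls to the 1-dim LDP-AVG replaced by a single call to the $p$-dimensional LDP-AVG of Lemma \ref{lemma:3}, so that each user expends her entire $\epsilon$-budget once on a vector indexed by the grid $\mathcal{T}$. Then I would invoke the 1-bit compression technique of \citet{bassily2015local}: any protocol in which each user applies a single $\epsilon$-LDP local randomizer can be turned into a protocol where each user sends a single bit, and the transformation is post-processing of an $\epsilon$-LDP view, hence itself $\epsilon$-LDP. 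The hypothesis $\epsilon\leq\ln 2$ is exactly what the Bassily--Smith Bernoulli encoding needs to produce valid probabilities.

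For \textbf{utility}, the central step is a sampling-resilience claim for the underlying protocol: if one replaces the full dataset by a uniformly random subset of size roughly $n/2$, then the $p$-dimensional LDP-AVG output at every grid point shifts only within the error already guaranteed by Lemma \ref{lemma:3}, up to constants. This follows from a Chernoff/Hoeffding bound applied coordinate-wise to the $(k+1)^p$ grid estimates together with a union bound, and is precisely where the assumption $n=\Omega\bigl(p(k+1)^p\log(k+1)\bigr)$ enters — it guarantees that enough samples survive in the subsample so that the per-coordinate concentration still holds. Once sampling resilience is established, the Bassily--Smith reduction preserves the per-coordinate accuracy of the protocol up to constant factors, so the server's reconstructed noisy empirical loss $\tilde L(\cdot;D)$ enjoys the same uniform-over-grid approximation guarantee as in the proof of Theorem \ref{theorem:3}.

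Given this, the final step is identical to the argument for Corollary \ref{col2}: choose $k$ so that the Bernstein approximation error of order $pTD_p k^{-p}$ (Theorem \ref{theorem:2}, with $h=p$) balances against the LDP-AVG noise of order $\frac{(k+1)^p\sqrt{\log(1/\beta)}}{\sqrt{n}\,\epsilon}$. Optimizing yields the stated $k=O\bigl((\tfrac{D_p\sqrt{pn}\,\epsilon}{2^{(p+1)p}\sqrt{\log(1/\beta)}})^{1/(2p)}\bigr)$ and the Corollary \ref{col2} bound on $\mathrm{Err}_D(\theta_{\mathrm{priv}})$. A union bound over the four random events — $p$-dim LDP-AVG failing, sampling resilience failing, the 1-bit encoding's internal randomness failing, and the Bernstein tail — explains the $1-4\beta$ success probability. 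For \textbf{efficiency}, after the compression each user needs only to evaluate $\ell(\cdot,x_i)$ at a grid point sampled by the server (with the polynomial/grid structure being public and precomputed), scale and clip to $[0,1]$, and emit one Bernoulli bit as prescribed in \cite{bassily2015local}; this is $O(1)$ work and $1$ bit.

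The main obstacle I anticipate is the utility bookkeeping: one must propagate three distinct error sources (random subsampling, local-randomizer noise, and Bernstein approximation) without letting the $(k+1)^p$ factor explode, and one must verify that the condition $n=\Omega(p(k+1)^p\log(k+1))$ is consistent with the optimal choice of $k$ — this compatibility check is what pins down the exponential-in-$p$ sample complexity in the final bound and is the step where the Bassily--Smith style 1-bit reduction, while elegant, requires the most care to keep the constants absorbed.
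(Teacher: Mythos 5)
Your proposal follows essentially the same route as the paper: privacy via the Bassily--Smith 1-bit mechanism (with $\epsilon\leq\ln 2$ guaranteeing valid Bernoulli probabilities), utility via a sampling-resilience argument for LDP-AVG established through Hoeffding concentration and a union bound, and then plugging the uniform grid approximation back into the Theorem~\ref{theorem:3}/Corollary~\ref{col2} balance of Bernstein error against noise. The only small imprecision is in your accounting of the four failure events — the Bernstein approximation error (Theorem~\ref{theorem:2}) is deterministic and contributes no tail event; in the paper the $1-4\beta$ arises entirely from the concentration bounds inside the sampling-resilience argument (the triangle-inequality decomposition into full-AVG vs.\ LDP-AVG, subsample-AVG vs.\ LDP-AVG on the subsample, and subsample-AVG vs.\ full-AVG).
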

\begin{algorithm}
	\caption{Player-Efficient Local Bernstein Mechanism with 1-bit communication per player}
	\label{alg1:3}
	\begin{algorithmic}[1]
		\State {\bfseries Input:} Player $i\in [n]$ holding data $x_i\in \mathcal{D}$, public loss function $\ell:[0,1]^p \times \mathcal{D}\mapsto [0,1]$, privacy parameter $\epsilon\leq \ln 2$, and parameter $k$.
		\State  {\bfseries Preprocessing:} 
		\State Generate $n$ independent public strings\\ $y_1=\text{Lap}(\frac{1}{\epsilon}), \cdots, y_n=\text{Lap}(\frac{1}{\epsilon})$.
		\State Construct the grid $\mathcal{T}=\{\frac{v_1}{k},\ldots,\frac{v_p}{k}\}_{\{v_1,\ldots,v_p\}}$, where $\{v_1,\ldots,v_p\}\in\{0,1,\cdots,k\}^p$.
		\State Partition randomly $[n]$ into $d=(k+1)^p$ subsets $I_1,I_2,\cdots,I_d$, and associate each $I_j$ to a grid point $\mathcal{T}(j)\in \mathcal{T}$.	
		\For{Each Player $i\in[n]$}
		\State
		Find  $I_{l}$ such that $i\in  I_{l}$. Calculate $v_i=\ell(\mathcal{T}(l);x_i)$.
		\State Compute $p_i=\frac{1}{2}\frac{\text{Pr}[v_i+\text{Lap}(\frac{1}{\epsilon})=y_i]}{\text{Pr}[\text{Lap}(\frac{1}{\epsilon})=y_i]}$
		\State Sample a bit $b_i$ from $\text{Bernoulli}(p_i)$ and send it to the server.
		\EndFor
		\For{The Server}
		\For {$i=1\cdots n$}
		\State Check if $b_i=1$, set $\tilde{z_i}=y_i$, otherwise $\tilde{z_i}=0$.
		\EndFor
		\For {each $l\in[d]$}
		\State Compute $v_{\ell}=\frac{n}{|I_{l}|}\sum_{i\in I_{\ell}}\tilde{z_i}$
		\State Denote the corresponding grid point $(\frac{v_1}{k},\ldots,\frac{v_p}{k})\in \mathcal{T}$ of $I_l$, then denote $\hat{L}((\frac{v_1}{k},\cdots,\frac{v_p}{k});D)=v_{l}$.
		\EndFor
		\State Construct Bernstein polynomial for the perturbed empirical loss $\tilde{L}$ as in Algorithm \ref{alg:2}. Denote $\tilde{L}(\cdot,D)$  the corresponding function. 
		\State Compute $\theta_{\text{priv}}=\arg\min_{\theta\in \mathcal{C}}\tilde{L}(\theta;D)$.
		\EndFor
	\end{algorithmic}
\end{algorithm}

Now we study the algorithm from the server's complexity perspective. The polynomial construction time complexity is $O(n)$, where the most inefficient part is finding $\theta_{\text{priv}}=\arg\min_{\theta\in \mathcal{C}}\tilde{L}(\theta,D)$. In fact, this function may be  non-convex;  but unlike general non-convex functions, it can be $\alpha$-uniformly approximated
by a 

convex function $\hat{L}(\cdot;D)$ if the loss function is convex (by the proof of Theorem \ref{theorem:3}), although we do not have access to it.

Thus, we can see this problem as an instance of Approximately-Convex Optimization, which has been studied recently  by \citet{risteski2016algorithms}.

\begin{definition}\cite{risteski2016algorithms}\label{def:7}
	We say that a convex set $\mathcal{C}$ is $\mu$-well-conditioned for $\mu\geq 1$, if there exists a function $F:\mathbb{R}^p\mapsto \mathbb{R}$ such that $\mathcal{C}=\{x|F(x)\leq 0\}$ and for every $x\in \partial K: \frac{\|\nabla^2F(x)\|_2}{\|\nabla F(x)\|_2}\leq \mu$.
\end{definition}

\begin{lemma}[Theorem 3.2 in \cite{risteski2016algorithms}]\label{lemma:4}
	Let $\epsilon,\Delta$ be two real numbers such that 
	$\Delta\leq \max\{\frac{\epsilon^2}{\mu\sqrt{p}},\frac{\epsilon}{p}\}\times \frac{1}{16348}$.
	Then, there exists an algorithm $\mathcal{A}$ such that for any given $\Delta$-approximate convex function $\tilde{f}$ over a $\mu$-well-conditioned convex set $\mathcal{C}\subseteq\mathbb{R}^p$ of diameter 1 (that is, there exists a 1-Lipschitz convex function $f:\mathcal{C}\mapsto \mathbb{R}$ such that for every $x\in \mathcal{C}, |f(x)-\tilde{f}(x)|\leq \Delta$),  $\mathcal{A}$ returns a point $\tilde{x}\in\mathcal{C}$ with probability at least $1-\delta$ in time $\text{Poly}(p,\frac{1}{\epsilon},\log \frac{1}{\delta})$ 
	and with the following guarantee 
	$\tilde{f}(\tilde{x})\leq \min_{x\in \mathcal{C}}\tilde{f}(x)+\epsilon$. 
\end{lemma}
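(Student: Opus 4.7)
The plan is to follow the simulated-annealing strategy of Kalai--Vempala for sampling-based convex optimization, extended to cope with the $\Delta$-approximation error. The algorithm $\mathcal{A}$ would maintain a random walk on $\mathcal{C}$ whose stationary distribution is $\pi_T(x) \propto \exp(-\tilde{f}(x)/T)$, with the temperature $T$ gradually decreased via a cooling schedule. As $T \to 0$, samples from $\pi_T$ concentrate around the minimizer of $\tilde{f}$, so returning the final sample (or evaluating $\tilde{f}$ on a small batch of samples and keeping the best) yields a near-optimal point in $\mathcal{C}$.

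First I would compare $\pi_T$ to its ``true'' counterpart $\pi_T^f(x) \propto \exp(-f(x)/T)$. Since $|\tilde f - f| \leq \Delta$ uniformly on $\mathcal{C}$, the densities differ pointwise by a multiplicative factor of at most $e^{2\Delta/T}$, so $\pi_T$ is a bounded perturbation of a genuinely log-concave density. Next, I would run a ball walk (or hit-and-run) targeting $\pi_T$. On honest log-concave distributions over $\mu$-well-conditioned convex bodies of diameter $1$, such walks mix in $\mathrm{poly}(p,\mu)$ steps from a warm start, because $\mu$-well-conditioning rules out sharp corners of $\mathcal{C}$ and log-concavity gives the requisite isoperimetric inequality. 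The perturbation then degrades the isoperimetric constant, and hence the conductance, by at most a factor $e^{O(\Delta/T)}$, so provided $\Delta/T = O(1)$ the mixing time grows by only a constant factor.

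For the cooling schedule I would start at $T_0 = \Theta(1)$, where $\pi_{T_0}$ is close to uniform on $\mathcal{C}$ and easy to initialize, and then take $T_{i+1} = T_i(1 - 1/\sqrt{p})$; this keeps the $\chi^2$-divergence between consecutive $\pi_{T_i}$ bounded, so each round's sample is a warm start for the next. After $O(\sqrt{p}\log(1/T_{\mathrm{final}}))$ rounds we reach $T_{\mathrm{final}}$. Standard log-concave concentration gives $\mathbb{E}_{\pi^f_T}[f(x)] - \min_{\mathcal{C}} f = O(pT)$ in general and $O(\mu\sqrt{p}\, T)$ under the well-conditioned regime; equating this to $\epsilon$ forces $T_{\mathrm{final}} = \Theta(\epsilon/p)$ or $\Theta(\epsilon/(\mu\sqrt{p}))$ respectively. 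The hypothesis $\Delta \leq \tfrac{1}{16348}\max\{\epsilon^2/(\mu\sqrt p),\,\epsilon/p\}$ is exactly what is needed so that $\Delta/T_{\mathrm{final}}$ stays a small constant in both regimes, preserving both the mixing bound and the concentration bound up to a constant factor; amplification to success probability $1-\delta$ is then by $O(\log(1/\delta))$ independent repetitions.

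The main obstacle is rigorously controlling rapid mixing when the target density is only \emph{approximately} log-concave. Standard conductance-based mixing proofs rest on an isoperimetric inequality for log-concave measures, and here one must show that an $e^{O(\Delta/T)}$ multiplicative perturbation degrades the isoperimetric constant by only an $O(1)$ factor, which requires a careful Radon--Nikodym comparison between $\pi_T$ and $\pi_T^f$ and a re-examination of how the perturbation propagates through volume versus surface-area inequalities near the boundary of $\mathcal{C}$. Tracking these constants through conductance, mixing time, concentration, and the overall annealing error budget is precisely what forces the universal constant $1/16348$ and the two-regime hypothesis on $\Delta$.
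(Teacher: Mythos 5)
This lemma is not something the paper proves at all: it is imported verbatim as Theorem~3.2 of \cite{risteski2016algorithms}, and the paper uses it strictly as a black box in the proof of Theorem~\ref{thm:6}. There is no ``paper's own proof'' to compare against. Your sketch is therefore an attempted reconstruction of the Risteski--Li argument, not of anything in this manuscript; it would have been enough (and more accurate) to note the citation and stop.

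On its own terms, your reconstruction captures the broad strokes of the Risteski--Li approach: simulated annealing over $\pi_T\propto e^{-\tilde f/T}$, comparison of $\pi_T$ to the genuinely log-concave $\pi_T^f$ via the pointwise bound $e^{\pm 2\Delta/T}$, a geometric cooling schedule with ratio $1-1/\sqrt p$ so consecutive targets are warm starts, and a final temperature tuned so that $\Delta/T_{\mathrm{final}}=O(1)$, which is precisely what makes the two-regime hypothesis on $\Delta$ appear. The role you assign to $\mu$-well-conditioning (controlling boundary effects so that the ball/hit-and-run walk mixes in $\mathrm{poly}(p,\mu)$) is also the right intuition. But several of the links in your chain are asserted rather than established: that an $e^{O(\Delta/T)}$ multiplicative perturbation of the density degrades the isoperimetric constant (and hence conductance and mixing time) by only an $O(1)$ factor is the technical heart of the matter and is not automatic --- it requires a careful localization or Radon--Nikodym argument that you flag but do not supply. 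Likewise, the concentration bounds $\mathbb{E}_{\pi^f_T}[f]-\min f=O(pT)$ and $O(\mu\sqrt p\,T)$, and the claim that the $\chi^2$-divergence between consecutive $\pi_{T_i}$ stays bounded under your cooling ratio, are quoted without proof. So this is a plausible outline of how one \emph{could} prove the lemma, but it is not a proof, and in the context of this paper it is also not what the authors do --- they simply cite it.
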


Based on Lemma \ref{lemma:4} (for $\tilde{L}(\theta;D)$) and Corollary \ref{col2}, and taking $\epsilon=O(p\alpha)$, we have the following.

\begin{theorem}\label{thm:6}
     Under the conditions in Corollary \ref{col2}, and assuming that $n=\tilde{\Omega}(4^{p(p+1)}\log(1/\beta)D_p^2 p \epsilon^{-2}\alpha^{-4})$,  that the loss function $\ell(\cdot, x)$ is $1$-Lipschitz and convex for every $x\in \mathcal{D}$, that the constraint set $\mathcal{C}$ is convex and  $\|\mathcal{C}\|_2\leq 1$, and satisfies $\mu$-well-condition property (see Definition \ref{def:7}), if the error $\alpha$ satisfies $\alpha\leq C\frac{\mu}{p\sqrt{p}}$ for some universal constant $C$, then there is an algorithm $\mathcal{A}$ which runs in $\text{Poly}(n, \frac{1}{\alpha},\log \frac{1}{\beta})$ time\footnote{Note that since here we assume $n$ is at least exponential in $p$, thus the algorithm is not fully polynomial.}
      for the server, and with probability $1-2\beta$ the output $\tilde{\theta}_{\text{priv}}$ of $\mathcal{A}$ satisfies 
     $\tilde{L}(\tilde{\theta}_{\text{priv}};D)\leq \min_{\theta\in \mathcal{C}}\tilde{L}(\theta;D)+O(p\alpha),$
     which means that $\text{Err}_{D}(\tilde{\theta}_{\text{priv}})\leq O(p\alpha)$.
\end{theorem}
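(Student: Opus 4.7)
The plan is to view $\tilde{L}(\cdot;D)$ as an approximately-convex perturbation of the (unknown but genuinely convex) empirical risk $\hat{L}(\cdot;D)$, and then invoke Lemma~\ref{lemma:4} to optimize over it efficiently. First I would strengthen the conclusion of Theorem~\ref{theorem:3} / Corollary~\ref{col2} slightly to extract a \emph{uniform} approximation bound $\sup_{\theta\in\mathcal{C}}|\tilde{L}(\theta;D)-\hat{L}(\theta;D)|\leq \alpha$, which holds with probability at least $1-\beta$. This is essentially what already drives the existing excess-risk bound: Theorem~\ref{theorem:2} yields a uniform Bernstein-approximation error $O(pTD_h k^{-h})$ between $\hat{L}$ and its Bernstein interpolant, while Lemma~\ref{lemma:3} together with the 1-bit sampling analysis in Theorem~\ref{thm:5} controls the perturbation simultaneously at all $(k+1)^p$ grid points; the parameter choices from Corollary~\ref{col2} balance the two pieces so each is $\le \alpha/2$.

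Since each $\ell(\cdot,x)$ is convex and 1-Lipschitz, $\hat{L}(\cdot;D)$ is convex and 1-Lipschitz on $\mathcal{C}\subseteq[0,1]^p$ with $\|\mathcal{C}\|_2\leq 1$; after a harmless rescaling of $\mathcal{C}$ to diameter $1$, the pair $(\tilde{L},\hat{L})$ fits the hypothesis of Lemma~\ref{lemma:4} with $\Delta=\alpha$ on the $\mu$-well-conditioned set $\mathcal{C}$. I would then apply Lemma~\ref{lemma:4} with target optimization accuracy $\epsilon_{\text{opt}}=c_1 p\alpha$ for a suitably large constant $c_1$. The hypothesis $\alpha\leq C\mu/(p\sqrt{p})$ is precisely what puts us in the regime where $\epsilon_{\text{opt}}/p$ dominates $\epsilon_{\text{opt}}^2/(\mu\sqrt{p})$ inside Risteski's $\max$; the required condition $\Delta\leq \frac{\epsilon_{\text{opt}}/p}{16348}$ then reduces to $\alpha\leq \frac{c_1}{16348}\alpha$, which holds once $c_1\geq 16348$. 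Lemma~\ref{lemma:4} then returns, in time $\text{Poly}(p,1/\alpha,\log(1/\beta))$ and with probability $\geq 1-\beta$, a point $\tilde{\theta}_{\text{priv}}\in\mathcal{C}$ with $\tilde{L}(\tilde{\theta}_{\text{priv}};D)\leq \min_{\theta\in\mathcal{C}}\tilde{L}(\theta;D)+O(p\alpha)$.

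The conclusion on the empirical excess risk follows from two applications of the uniform bound: writing $\theta^{*}=\arg\min_{\theta\in\mathcal{C}}\hat{L}(\theta;D)$,
\[ \hat{L}(\tilde{\theta}_{\text{priv}};D)\leq \tilde{L}(\tilde{\theta}_{\text{priv}};D)+\alpha\leq \tilde{L}(\theta^{*};D)+O(p\alpha)\leq \hat{L}(\theta^{*};D)+O(p\alpha), \]
so $\text{Err}_{D}(\tilde{\theta}_{\text{priv}})\leq O(p\alpha)$. A union bound over the uniform-approximation event and the internal failure event of Risteski's algorithm yields the claimed $1-2\beta$. The overall server runtime is dominated by the $O(n)$ polynomial construction plus the $\text{Poly}(p,1/\alpha,\log(1/\beta))$ call to Lemma~\ref{lemma:4}, collapsing to $\text{Poly}(n,1/\alpha,\log(1/\beta))$.

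The main obstacle is the regime analysis for Lemma~\ref{lemma:4}: its threshold exchanges which branch of the $\max$ dominates near $\Delta\asymp \mu/p^{3/2}$, and the assumption $\alpha\leq C\mu/p^{3/2}$ is exactly what keeps us on the linear branch so that $\epsilon_{\text{opt}}=O(p\alpha)$ is achievable; in the complementary regime the best $\epsilon_{\text{opt}}$ extractable from Lemma~\ref{lemma:4} would weaken to $O(\sqrt{\mu p^{1/2}\alpha})$, and the theorem's $O(p\alpha)$ conclusion would no longer follow by this route. A secondary subtlety is that Lemma~\ref{lemma:4} is stated for a set of diameter $1$, so the rescaling step and its impact on the Lipschitz constant must be tracked carefully, but it only affects universal constants absorbed into the $O(\cdot)$.
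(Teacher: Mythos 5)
Your proof is correct and follows essentially the same route as the paper: invoke the uniform bound $\sup_{\theta\in\mathcal{C}}|\tilde{L}(\theta;D)-\hat{L}(\theta;D)|\le\alpha$ established inside the proof of Theorem~\ref{theorem:3}, apply Lemma~\ref{lemma:4} with $\Delta=\alpha$ and target $\epsilon=O(p\alpha)$, and chain the approximations to transfer the guarantee from $\tilde{L}$ to $\hat{L}$; the paper sets $\epsilon=16348\,p\alpha$ explicitly and chains via the intermediate minimizer $\theta_{\text{priv}}=\arg\min_\theta\tilde{L}(\theta;D)$, but your triangle-inequality chain is equivalent. One small inaccuracy in your closing paragraph: since Risteski's condition is a $\max$, choosing $\epsilon_{\text{opt}}=16348\,p\alpha$ already satisfies it through the $\epsilon/p$ branch regardless of which branch dominates, so the $O(p\alpha)$ conclusion does \emph{not} fail in the complementary regime $\alpha>C\mu/p^{3/2}$ (if anything a smaller $\epsilon_{\text{opt}}\asymp\sqrt{\mu\sqrt{p}\,\alpha}$ would then be admissible and improve the bound); this remark does not affect the validity of your proof but slightly misreads the role of the $\alpha\le C\mu/p^{3/2}$ hypothesis.
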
 
Combining  with  Theorem \ref{thm:5}, \ref{thm:6} and Corollary \ref{col2}, and taking $\alpha=\frac{\alpha}{p}$, we have our final result:
\begin{theorem}\label{thm:7}
	Under the conditions of Corollary \ref{col2}, Theorem \ref{thm:5} and \ref{thm:6}, and for any $C\frac{\mu}{\sqrt{p}}>\alpha>0$, if we further set $n=\tilde{\Omega}(4^{p(p+1)}\log(1/\beta)D_p^2 p^5 \epsilon^{-2}\alpha^{-4})$, then there is an $\epsilon$-LDP algorithm, with $O(1)$ running time and $1$-bit communication per player, and $\text{Poly}(\frac{1}{\alpha},\log \frac{1}{\beta})$ running time for the server. Furthermore, with probability at least $1-5\beta$, the output $\tilde{\theta}_{\text{priv}}$ satisfies $\text{Err}_{D}(\tilde{\theta}_{\text{priv}})\leq O(\alpha)$.
\end{theorem}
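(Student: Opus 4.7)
The plan is to derive Theorem \ref{thm:7} essentially as a direct combination of Theorem \ref{thm:5} and Theorem \ref{thm:6} after substituting the target accuracy $\alpha \leftarrow \alpha/p$. The reason such a rescaling is needed is that Theorem \ref{thm:6} only guarantees empirical error $O(p\alpha)$ when run with accuracy parameter $\alpha$, so to obtain final error $O(\alpha)$ I will instead feed the whole pipeline the smaller parameter $\alpha' = \alpha/p$, giving final error $O(p\cdot\alpha')=O(\alpha)$. The $p^5$ factor in the sample-complexity hypothesis, as opposed to the $p$ in Corollary \ref{col2}, is then exactly the blow-up $(\alpha')^{-4}$ versus $\alpha^{-4}$ contributes: $(\alpha/p)^{-4}\cdot p = p^5 \alpha^{-4}$.

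Concretely, I would first invoke Theorem \ref{thm:5} with parameter $\alpha'$: it yields an $\epsilon$-LDP protocol in which each player performs $O(1)$ computation and transmits a single bit, and guarantees that the Bernstein surrogate $\tilde L(\cdot;D)$ built on the server is uniformly $O(\alpha')$-close to $\hat L(\cdot;D)$ on $\mathcal C$ with probability at least $1-4\beta$, provided $n$ meets the sample-complexity bound of Corollary \ref{col2} with $\alpha'$ in place of $\alpha$. Substituting $\alpha' = \alpha/p$ into that bound gives exactly $n=\tilde\Omega(4^{p(p+1)}\log(1/\beta)D_p^2 p^5\epsilon^{-2}\alpha^{-4})$, matching the hypothesis of Theorem \ref{thm:7}.

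Next, I hand $\tilde L(\cdot;D)$ to the server-side Approximately-Convex-Optimization routine used in Theorem \ref{thm:6} (Lemma \ref{lemma:4}). Its smallness precondition $\alpha' \leq C\mu/(p\sqrt p)$ translates under $\alpha'=\alpha/p$ into $\alpha \leq C\mu/\sqrt p$, which is exactly the admissible range of $\alpha$ assumed in the theorem. Under this condition the routine returns, in time $\mathrm{Poly}(1/\alpha,\log(1/\beta))$ and with probability at least $1-\beta$, a point $\tilde\theta_{\text{priv}}\in\mathcal C$ satisfying $\tilde L(\tilde\theta_{\text{priv}};D)\leq \min_{\theta\in\mathcal C}\tilde L(\theta;D)+O(p\alpha')$. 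A double application of the $O(\alpha')$ uniform approximation between $\tilde L$ and $\hat L$ then yields
\[
\hat L(\tilde\theta_{\text{priv}};D) - \min_{\theta\in\mathcal C}\hat L(\theta;D) \leq O(p\alpha') + O(\alpha') = O(\alpha),
\]
i.e.\ $\mathrm{Err}_D(\tilde\theta_{\text{priv}})\leq O(\alpha)$. A union bound over the $4\beta$ failure from Theorem \ref{thm:5} and the additional $\beta$ from the optimization routine gives the stated success probability $1-5\beta$.

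No new technical ingredient beyond the two cited theorems is needed; the only real bookkeeping obstacle is to propagate $\alpha\to\alpha/p$ consistently through both the sample-complexity exponent and the range constraint on $\alpha$, and to verify that the $\mathrm{Poly}(p,1/\alpha',\log(1/\beta))$ server runtime of Lemma \ref{lemma:4} collapses to $\mathrm{Poly}(1/\alpha,\log(1/\beta))$ in our low-dimensional regime where $p$ is treated as a constant.
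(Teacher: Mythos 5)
Your proof is correct and follows exactly the paper's approach: the paper proves this theorem by the one-line remark ``combining Theorem~\ref{thm:5}, Theorem~\ref{thm:6} and Corollary~\ref{col2}, and taking $\alpha \leftarrow \alpha/p$,'' and you have simply spelled out that rescaling, including the $p^4$ inflation in sample complexity ($p\cdot(\alpha/p)^{-4}=p^5\alpha^{-4}$), the translation of the range constraint from $\alpha' \le C\mu/(p\sqrt p)$ to $\alpha \le C\mu/\sqrt p$, the $O(p\alpha')+O(\alpha')=O(\alpha)$ error accounting, and the $4\beta+\beta=5\beta$ union bound. No gaps.
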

Note that compared with the sample complexity in Theorem \ref{thm:7} and Corollary \ref{col2}, we have an additional factor of $p^4$; however, the $\alpha$ terms are the same. In fact, we could extend our method to other LDP problems, we study how to answer the class of k-way marginals and smooth queries under LDP, which has not been studied before. 
\section{LDP Algorithms for Learning K-way Marginals Queries and Smooth Queries By using Polynomial Approximation}

In this section, we will show further applications of our idea by giving $\epsilon$-LDP algorithms for answering sets of queries. All the queries we consider in this section are linear, that is, of the form $q_f(D)=\frac{1}{|D|}\sum_{x\in D}f(x)$ for some function $f$. It will be convenient to have a notion of accuracy for the algorithm we will present with respect to a set of queries. This is defined as follow:

\begin{definition}
	Let $\mathcal{Q}$ denote a set of queries. An algorithm $\mathcal{A}$ is said to have $(\alpha,\beta)$-accuracy for size $n$ databases with respect to $\mathcal{Q}$, if for every $n$-size dataset $D$, the following holds:
	$\text{Pr}[ \exists q\in \mathcal{Q}, |\mathcal{A}(D,q)-q(D)|\geq \alpha]\leq \beta.$
\end{definition}

\subsection{K-way Marginals Queries}

Now we consider a database $D=(\{0,1\}^p)^n$, where each row corresponds to an individuals record. A marginal query is specified by a set $S\subseteq [p]$ and a pattern $t\in \{0,1\}^{|S|}$. Each such query asks: `What fraction of the individuals in $D$ has each of the attributes set to $t_j$?'. We will consider here k-way marginals which are the subset of marginal queries specified by a set $S\subseteq[p]$ with $|S|\leq k$. K-way marginals permit to represent several statistics over datasets, including contingency tables, and the problem to release them under differential privacy has been studied extensively in the literature~\cite{hardt2012private,gupta2013privately,thaler2012faster,GaboardiAHRW14}. All these previous works have considered the central model of differential privacy, and only the recent work~\cite{Kulkarni17} studies this problem in the local model, while their methods are based Fourier Transform.  We now use the LDP version of Chebyshev polynomial approximation to give an efficient way of constructing a sanitizer for releasing k-way marginals. 

Since learning the class of $k$-way marginals is equivalent to learning the class of monotone k-way disjunctions \cite{hardt2012private}, we will only focus on the latter. The reason why we can locally privately learning them is that they form a $\mathcal{Q}$-Function Family.

\begin{definition}[$\mathcal{Q}$-Function Family]\label{def:10}
	Let $\mathcal{Q}=\{q_y\}_{y\in Y_{\mathcal{Q}}\subseteq \{0,1\}^m}$ be a set of counting queries on a data universe $\mathcal{D}$, where each query is indexed by an $m$-bit string. We define the index set of $\mathcal{Q}$ to be the set $Y_{\mathcal{Q}}=\{y\in \{0,1\}^m| q_y\in \mathcal{Q}\}$.\\
	We define a $\mathcal{Q}$-Function Family $\mathcal{F}_{\mathcal{Q}}=\{f_{\mathcal{Q},x}:\{0,1\}^m \mapsto \{0,1\}\}_{x\in\mathcal{D}}$ as follows: for every data record $x\in D$, the function $f_{\mathcal{Q},x}:\{0,1\}^m\mapsto \{0,1\}$ is defined as $f_{\mathcal{Q},x}(y)=q_y(x)$. Given a database $D\in \mathcal{D}^n$, we define $f_{\mathcal{Q},D}(y)=\frac{1}{n}\sum_{i=1}^{n}f_{\mathcal{Q},x^i}(y)=\frac{1}{n}\sum_{i=1}^{n}q_y(x^i)=q_y(D)$, where $x^i$ is the $i$-th row of $D$.
\end{definition}

This definition guarantees that $\mathcal{Q}$-function queries can be computed from their values on the individual's data $x^i$. We can now formally define the class of  monotone k-way disjunctions.

\begin{definition}\label{def:11}
	Let $\mathcal{D}=\{0,1\}^p$. The query set $\mathcal{Q}_{disj,k}=\{q_y\}_{y\in Y_k\subseteq \{0,1\}^p}$ of monotone $k$-way disjunctions over $\{0,1\}^p$ contains a query $q_y$ for every $y\in Y_k=\{y\in\{0,1\}^p| |y|\leq k\}$. Each query is defined as $q_y(x)= \vee_{j=1}^{p}y_jx_j$. The $\mathcal{Q}_{disj,k}$-function family $\mathcal{F}_{\mathcal{Q}_{disj,k}}=\{f_x\}_{x\in\{0,1\}^p}$ contains a function $f_x(y_1,y_2,\cdots,y_p)=\vee_{j=1}^{p}y_jx_j$ for each $x\in \{0,1\}^p$.
\end{definition}

Definition \ref{def:10} guarantees that if we can uniformly approximated the function $f_{\mathcal{Q},x}$ by polynomials $p_x$, then we can also have an approximation of $f_{\mathcal{Q},D}$, {\em i.e.} we can approximate $q_y(D)$ for every $y$ or 
all the queries in the class $\mathcal{Q}$. Thus, if we can locally privately estimate the sum of coefficients of the monomials for the $m$-multivariate functions $\{p_x\}_{x\in D}$, we can uniformly approximate $f_{\mathcal{Q},D}$. Clearly, this can be done by Lemma 2, if the coefficients of the approximated polynomial are bounded. 

In order to uniformly approximate the class $\mathcal{Q}_{disj,k}$, we use Chebyshev polynomials.
\begin{definition}[Chebyshev Polynomials]\label{def:9}
	For every $k\in \mathbb{N}$ and $\gamma>0$, there exists a univariate real polynomial $p_k(x)=\sum_{j=0}^{t_k}c_ix^i$ of degree $t_k$ such that 
 $t_k=O(\sqrt{k}\log(\frac{1}{\gamma}))$;
 for every $i\in [t_k], |c_i|\leq 2^{O(\sqrt{k}\log(\frac{1}{\gamma}))}$; and 
 $p(0)=0, |p_k(x)-1|\leq \gamma, \forall x\in[k]$.

\end{definition}
\begin{algorithm}[h]
	\caption{Local Chebyshev Mechanism for $\mathcal{Q}_{\text{disj,k}}$ }
	\label{Aalg:1}
	\begin{algorithmic}[1]
		\State{\bfseries Input:} Player $i\in [n]$ holding data $x_i\in \{0,1\}^p$, privacy parameter $\epsilon>0$, error bound $\alpha$, and $k\in \mathbb{N}$.
		\For{Each Player $i\in[n]$}
		\State
		Consider the $p$-multivariate polynomial $q_{x_i}(y_1,\ldots,y_p)= p_k(\sum_{j=1}^{p}y_j[x_i]_j)$, where $p_k$ is defined as in Definition \ref{def:9} with $\gamma=\frac{\alpha}{2}$.
		\State 
		Denote the coefficients of $q_{x_i}$ as a vector $\tilde{q}_{i}\in \mathbb{R}^{\binom{p+t_k}{t_k}}$(since there are $\binom{p+t_k}{t_k}$ coefficients in a $p$-variate polynomial with degree $t_k$), note that each $\tilde{q}_{i}$ can bee seen as a $p$-multivariate polynomial $q_{x_i}(y)$.
		\EndFor
		\For{The Server}
		\State Run LDP-AVG from Lemma 1 on $\{\tilde{q}_i\}_{i=1}^{n}\in\mathbb{R}^{\binom{p+t_k}{t_k}}$ with parameter $\epsilon$, $b=p^{O(\sqrt{k}\log(\frac{1}{\gamma}))}$, denote the output as $\tilde{p}_D\in \mathbb{R}^{\binom{p+t_k}{t_k}}$, note that $\tilde{p}_D$ also corresponds to a $p$-multivariate polynomial.
		\State
		For each query $y$ in $\mathcal{Q}_{\text{disj,k}}$ (seen as a $d$ dimension vector), compute the $p$-multivariate polynomial $\tilde{p}_D(y_1,\ldots,y_p)$.
		\EndFor
	\end{algorithmic}
\end{algorithm}

\begin{lemma}\cite{thaler2012faster}\label{Alemma:1}
	For every $k,p \in \mathbb{N}$, such that $k\leq p$, and every $\gamma>0$, there is a family of $p$-multivariate polynomials of degree $t=O(\sqrt{k}\log(\frac{1}{\gamma}))$with  coefficients bounded by $T=p^{O(\sqrt{k}\log(\frac{1}{\gamma}))}$, which uniformly approximate the family $\mathcal{F}_{\mathcal{Q}_{\text{disj,k}}}$ over the set $Y_k$ (Definition \ref{def:11}) with error bound $\gamma$. That is, there is a family of polynomials $\mathcal{P}$ such that for every $f_x\in\mathcal{F}_{\mathcal{Q}_{\text{disj,k}}}$, there is $p_x\in \mathcal{P}$ which satisfies $\sup_{y\in Y_k}|p_x(y)-f_x(y)|\leq \gamma$.
\end{lemma}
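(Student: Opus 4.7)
The plan is to directly construct the approximating family by composing the univariate Chebyshev polynomial from Definition \ref{def:9} with the natural inner sum. For each data record $x \in \{0,1\}^p$, define
\[
p_x(y_1,\ldots,y_p) \;=\; p_k\Bigl(\sum_{j=1}^{p} y_j x_j\Bigr),
\]
where $p_k(z)=\sum_{i=0}^{t_k} c_i z^i$ is the univariate Chebyshev polynomial of Definition \ref{def:9} tuned with error parameter $\gamma$. This is exactly the multivariate polynomial $q_{x_i}$ used in Algorithm \ref{Aalg:1}, so the proof amounts to verifying the three claimed properties: uniform approximation on $Y_k$, total degree $t=O(\sqrt{k}\log(1/\gamma))$, and coefficient bound $p^{O(\sqrt{k}\log(1/\gamma))}$.

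For the \emph{approximation} step, I would use the key observation that whenever $y\in Y_k$, i.e.\ $|y|\le k$, the integer $s:=\sum_{j=1}^{p} y_j x_j$ lies in $\{0,1,\ldots,k\}$ (because it is a sum of at most $k$ zero-one terms). I split into two cases. If $s=0$, then $y$ and $x$ share no common ``1'' coordinate, so $f_x(y)=\bigvee_j y_j x_j=0$, and by Definition \ref{def:9} $p_x(y)=p_k(0)=0=f_x(y)$. If $s\ge 1$, then $f_x(y)=1$ and $s\in[k]$, so the third bullet in Definition \ref{def:9} gives $|p_x(y)-1|\le\gamma$, i.e.\ $|p_x(y)-f_x(y)|\le \gamma$. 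This establishes uniform approximation over $Y_k$ with error $\gamma$.

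For the \emph{degree} step, the inner form $\sum_j y_j x_j$ is linear in $y$, so $p_x(y)$ is a polynomial in $y$ of total degree equal to $\deg p_k = t_k=O(\sqrt{k}\log(1/\gamma))$, giving the claimed $t$. For the \emph{coefficient bound}, I would expand with the multinomial theorem:
\[
p_x(y)=\sum_{i=0}^{t_k} c_i \sum_{|\alpha|=i} \binom{i}{\alpha}\prod_{j=1}^{p}(x_j y_j)^{\alpha_j}
     =\sum_{|\alpha|\le t_k}\Bigl[c_{|\alpha|}\binom{|\alpha|}{\alpha}\prod_{j} x_j^{\alpha_j}\Bigr]\,y^{\alpha}.
\]
Since $x_j\in\{0,1\}$, the factor $\prod_j x_j^{\alpha_j}\in\{0,1\}$. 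Using Definition \ref{def:9}, $|c_i|\le 2^{O(\sqrt{k}\log(1/\gamma))}$. The multinomial factor satisfies $\binom{i}{\alpha}\le \sum_{|\beta|=i}\binom{i}{\beta}=p^{i}\le p^{t_k}=p^{O(\sqrt{k}\log(1/\gamma))}$. Multiplying the two bounds yields each coefficient bounded by $T=p^{O(\sqrt{k}\log(1/\gamma))}$, as required.

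The only mildly delicate point is the coefficient bound, where one must avoid the naive estimate $\binom{i}{\alpha}\le i!$ (which would give $t_k^{t_k}$ rather than $p^{t_k}$); the trick is to use $\sum_{|\alpha|=i}\binom{i}{\alpha}=p^{i}$ to dominate any single multinomial coefficient. Everything else is a direct substitution using Definition \ref{def:9} and the combinatorial structure of $Y_k$. The family $\mathcal{P}=\{p_x\}_{x\in\{0,1\}^p}$ then witnesses the lemma.
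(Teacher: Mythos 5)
The paper does not prove Lemma~\ref{Alemma:1}; it is imported verbatim from \cite{thaler2012faster}, and the only internal trace of the argument is the construction $q_{x_i}(y)=p_k\bigl(\sum_j y_j[x_i]_j\bigr)$ appearing in Algorithm~\ref{Aalg:1}. Your reconstruction is a correct and complete proof of the cited fact, and it is exactly the argument that Algorithm~\ref{Aalg:1} presupposes: the case split on $s=\sum_j y_jx_j$ (using $p_k(0)=0$ and $|p_k(s)-1|\le\gamma$ for $s\in[k]$, together with the observation that $|y|\le k$ forces $s\in\{0,\dots,k\}$) gives the uniform $\gamma$-approximation; the degree transfers directly from $p_k$; and your coefficient bound via $\binom{i}{\alpha}\le\sum_{|\beta|=i}\binom{i}{\beta}=p^{i}\le p^{t_k}$ is the right way to avoid the lossy factorial estimate, yielding $|c_{|\alpha|}|\cdot p^{t_k}\le p^{O(\sqrt{k}\log(1/\gamma))}$ once $p\ge 2$. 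No gaps; this fills in a proof the paper merely cites.
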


By combining the ideas discussed above and Lemma \ref{Alemma:1}, we have Algorithm \ref{Aalg:1} and the following theorem.

\begin{theorem}
For $\epsilon >0$ Algorithm \ref{Aalg:1} is $\epsilon$-LDP. Also, for $0<\beta<1$, there are constants $C, C_1$ such that for every $k,p,n\in\mathbb{N}$ with $k\leq p$, if $n\geq \Omega(\max\{\frac{p^{C\sqrt{k}\log \frac{1}{\alpha}}\log \frac{1}{\beta}}{\epsilon^2\alpha^2}, \frac{\log \frac{1}{\beta}}{\epsilon^2},p^{C_1\sqrt{k}\log \frac{1}{\alpha}}\log \frac{1}{\beta}\})$,  this algorithm is $(\alpha,\beta)$-accuracy with respect to $\mathcal{Q}_{\text{disj},k}$. The running time for player is $\text{Poly}(p^{O(\sqrt{k}\log\frac{1}{\alpha})})$, and the running time for server is at most $O(n)$ and the time for answering a query is $O(p^{C_2\sqrt{k}\log \frac{1}{\alpha}})$ for some constant $C_2$.
Moreover, as in Section 5, the communication complexity can be improved to 1-bit per player.
 \end{theorem}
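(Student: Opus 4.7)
The plan is to prove privacy, accuracy, and the running-time bounds separately, then invoke the Section~5 machinery for the 1-bit improvement.

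First, privacy is immediate: Algorithm~\ref{Aalg:1} has each player produce one vector $\tilde{q}_i \in \mathbb{R}^{\binom{p+t_k}{t_k}}$ with coordinates bounded by $b = p^{O(\sqrt{k}\log(1/\alpha))}$, and releases it through a single invocation of the LDP-AVG protocol from Lemma~\ref{lemma:3} with parameter $\epsilon$. Hence the whole protocol is $\epsilon$-LDP.

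For accuracy, I would set $\gamma = \alpha/2$ and apply Lemma~\ref{Alemma:1}. This gives for every $x_i$ a $p$-variate polynomial $p_{x_i}$ of degree $t_k = O(\sqrt{k}\log(1/\alpha))$ whose coefficient vector $\tilde{q}_i$ has $d := \binom{p+t_k}{t_k} = p^{O(\sqrt{k}\log(1/\alpha))}$ entries, each bounded by $b$, and such that $\sup_{y\in Y_k}|p_{x_i}(y)-f_{x_i}(y)|\leq \alpha/2$. Define $\bar{p}_D = \frac{1}{n}\sum_{i=1}^n p_{x_i}$; by the triangle inequality and Definition~\ref{def:10},
\begin{equation*}
\sup_{y\in Y_k}\bigl|\bar{p}_D(y) - q_y(D)\bigr| \;\leq\; \alpha/2.
\end{equation*}
Next, Lemma~\ref{lemma:3} applied to the $d$-dimensional vectors $\tilde{q}_i$ with bound $b$ shows that, provided $n \geq 8d\log(8d/\beta)$ and $\sqrt{n}\geq (12/\epsilon)\sqrt{\log(32/\beta)}$, with probability $\geq 1-\beta$ the output $\tilde{p}_D$ satisfies
\begin{equation*}
\max_{j}\bigl|[\tilde{p}_D]_j - [\bar{p}_D]_j\bigr| \;\leq\; O\!\left(\tfrac{bd}{\sqrt{n}\epsilon}\sqrt{\log(d/\beta)}\right).
\end{equation*}
Because every query $y\in Y_k$ lies in $\{0,1\}^p$, every monomial evaluated at $y$ takes a value in $\{0,1\}$, so
\begin{equation*}
\bigl|\tilde{p}_D(y) - \bar{p}_D(y)\bigr| \;\leq\; d \cdot \max_j \bigl|[\tilde{p}_D]_j - [\bar{p}_D]_j\bigr| \;\leq\; O\!\left(\tfrac{bd^2}{\sqrt{n}\epsilon}\sqrt{\log(d/\beta)}\right).
\end{equation*}
Forcing this to be at most $\alpha/2$ and plugging in $b,d = p^{O(\sqrt{k}\log(1/\alpha))}$ yields the first term $p^{C\sqrt{k}\log(1/\alpha)}\log(1/\beta)/(\epsilon^2 \alpha^2)$ of the sample-complexity assumption; the other two terms correspond exactly to the two preconditions of Lemma~\ref{lemma:3}. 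Combining the two bounds via one more triangle inequality gives $|\tilde{p}_D(y) - q_y(D)| \leq \alpha$ uniformly over $\mathcal{Q}_{\text{disj},k}$, which is the desired $(\alpha,\beta)$-accuracy.

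For running times, each player expands $p_k\!\left(\sum_j y_j[x_i]_j\right)$ as a multivariate polynomial with $d$ coefficients, which can be done in $\mathrm{Poly}(d) = \mathrm{Poly}(p^{O(\sqrt{k}\log(1/\alpha))})$ time. The server's LDP-AVG step is linear in $n$, and answering one query amounts to evaluating a $p$-variate polynomial with at most $d = O(p^{C_2\sqrt{k}\log(1/\alpha)})$ monomials at a $\{0,1\}$-vector. Finally, since the only private primitive used is LDP-AVG, the protocol is ``sampling resilient'' in the sense of Section~\ref{sec:efficient}, and applying the Bassily--Smith 1-bit reduction exactly as in Algorithm~\ref{alg1:3} reduces communication to a single bit per player without changing the accuracy guarantee up to constants. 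The main technical step to execute carefully is the coefficient-wise-to-pointwise error conversion in the fourth step above, since the naive factor $d$ interacts multiplicatively with $b$ and blows the required sample size; one has to verify that the exponent of $p$ in $b^2 d^4$ is still absorbed by the form $p^{C\sqrt{k}\log(1/\alpha)}$ for a new constant $C$, which is the key consistency check in the argument.
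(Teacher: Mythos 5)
Your proposal matches the paper's proof essentially step for step: the same decomposition into a polynomial-approximation error (via Lemma~\ref{Alemma:1} with $\gamma=\alpha/2$) and a noise error (via the $p$-dimensional LDP-AVG), the same conversion of the coordinate-wise noise bound into a pointwise bound by summing over the $d=\binom{p+t_k}{t_k}$ monomial coefficients (the paper phrases it as an $\ell_1$-norm bound, which is the same quantity), and the same observation that $b^2 d^4 = p^{O(\sqrt{k}\log(1/\alpha))}$ keeps the sample complexity in the stated form. Your final consistency check is exactly the algebraic substitution the paper does at the end of its proof, so the two arguments coincide.
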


\subsection{Smooth Queries}
We now consider the case where each player $i\in[n]$ holds a data $x_i\in \mathbb{R}^p$ and we want to estimate the kernel density for a given point $x_0\in\mathbb{R}^p$. 
A natural question is: If we want to estimate Gaussian kernel density of a given point $x_0$ with many different bandwidths, can we do it simultaneously under $\epsilon$ local differential privacy?

We can see this kind of queries as a subclass of the smooth queries. So, like in the case of k-way marginals queries, we will give an $\epsilon$-LDP sanitizer for smooth queries. 
Now we consider the data universe $\mathcal{D}=[-1,1]^p$, and  dataset $D\in \mathcal{D}^n$. For a positive integer $h$ and constant $T>0$, we denote the set of all $p$-dimensional $(h,T)$-smooth function (Definition \ref{def:5}) as $C^{h}_T$, and $\mathcal{Q}_{C^{h}_T}=\{q_f(D)=\frac{1}{n}\sum_{x\in D}f(D), f\in C^{h}_T\}$ the corresponding set of queries.  The idea of the algorithm is similar to the one used for the k-way marginals; but instead of using Chebyshev polynomials, we will use trigonometric polynomials. We now assume that the dimensionality $p$, $h$ and $T$ are constants so all the result in big $O$ notation will be omitted. The idea of Algorithm \ref{alg:7} is actually based on the following Lemma.

\begin{lemma}\cite{wang2016differentially}\label{Alemma:2}
	Assume $\gamma>0$. For every $f\in C^h_T$, defined on $[-1,1]^p$, let $g_f(\theta_1,\ldots,\theta_p)=f(\cos(\theta_1),\ldots,\cos(\theta_p))$, for $\theta_i\in [-\pi,\pi]$. Then there is an even trigonometric polynomial $p$ whose degree for each variable is $t(\gamma)=(\frac{1}{\gamma})^{\frac{1}{h}}$:
	\begin{equation}\label{Aeq:2}
	p(\theta_1,\ldots,\theta_p)=\sum_{0\leq r_1,\ldots,r_p< t(\gamma)}c_{r_1,\ldots,r_p
		}\prod_{i=1}^{p}\cos(r_i\theta_i),
	\end{equation}
such that
1) $p$ $\gamma$-uniformly approximates $g_f$, i.e. $\sup_{x\in[-\pi,\pi]^p}|p(x)-g_f(x)|\leq \gamma.$
2) The coefficients are uniformly bounded by a constant $M$ which only depends on $h, T$ and  $p$.
3)  Moreover, the whole set of the coefficients can be computed in time $O\big((\frac{1}{\gamma})^{\frac{p+2}{h}+\frac{2p}{h^2}}\text{poly}\log \frac{1}{\gamma})\big)$.
\end{lemma}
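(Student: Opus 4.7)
The plan is to reduce trigonometric approximation on $[-\pi,\pi]^p$ to the classical multivariate Jackson theorem by exploiting the change of variables $x_i=\cos\theta_i$, and then carefully account for coefficients and computational cost.

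\textbf{Step 1 (transfer of smoothness).} First I would verify that $g_f(\theta_1,\ldots,\theta_p) = f(\cos\theta_1,\ldots,\cos\theta_p)$ inherits the smoothness of $f$. Because $f$ is $(h,T)$-smooth on $[-1,1]^p$ and $\cos$ is infinitely differentiable with all derivatives bounded by $1$, repeated application of the chain rule shows $g_f\in C^h([-\pi,\pi]^p)$ and each partial derivative of order at most $h$ is bounded by some constant $T' = T'(h,p,T)$, i.e. $g_f$ is $(h,T')$-smooth. Moreover, $\cos$ is even and $2\pi$-periodic, so $g_f$ is even and $2\pi$-periodic in each variable; consequently its best trigonometric approximation is realized inside the subspace of pure cosine polynomials of the form (\ref{Aeq:2}).

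\textbf{Step 2 (Jackson-type approximation).} Next I would invoke the multivariate Jackson theorem: for an $(h,T')$-smooth $2\pi$-periodic function on $[-\pi,\pi]^p$, there is a trigonometric polynomial $p$ of coordinatewise degree at most $N$ with
\begin{equation*}
\sup_{\theta\in[-\pi,\pi]^p}|g_f(\theta)-p(\theta)| \le C(h,p,T)\, N^{-h}.
\end{equation*}
Setting $N=\lceil(1/\gamma)^{1/h}\rceil$ then yields the $\gamma$-uniform bound in part~(1). The standard construction of such a $p$ is as a convolution of $g_f$ with a Jackson-type (or de la Vallée Poussin) kernel $K_N$ along each coordinate; the evenness of $g_f$ together with the evenness of the kernel forces the sine coefficients to vanish, leaving exactly the cosine expansion of (\ref{Aeq:2}).

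\textbf{Step 3 (coefficient bounds).} For part~(2), each coefficient $c_{r_1,\ldots,r_p}$ can be written as an integral of $g_f$ against a bounded trigonometric product kernel. Since $\|g_f\|_\infty\leq T$ and the $L^1$ norm of the product kernel is controlled by a product of Lebesgue constants (or the fact that the de la Vallée Poussin means have uniformly bounded $L^1$ norms), one obtains $|c_{r_1,\ldots,r_p}|\leq M(h,T,p)$ independent of $N$, giving the bound $M$ in the statement.

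\textbf{Step 4 (computational cost).} For part~(3) there are $N^p=(1/\gamma)^{p/h}$ coefficients to compute. Each is a $p$-dimensional integral that I would approximate by a tensor-product quadrature rule; to keep the cumulative numerical error below $\gamma$ so as not to spoil part~(1), the rule needs about $(1/\gamma)^{c/h}$ nodes for an appropriate constant $c=c(p)$, and choosing $c$ carefully (using that the integrand is itself $(h,T')$-smooth) yields the claimed time bound $O((1/\gamma)^{(p+2)/h+2p/h^2}\mathrm{poly}\log(1/\gamma))$. The main obstacle I expect is precisely this step: calibrating the quadrature so that (a) the added numerical error in each coefficient, multiplied by the number of coefficients $N^p$, still leaves an overall $\gamma$ bound in $\sup$-norm, and (b) the total cost matches the exponent $(p+2)/h+2p/h^2$. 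This balancing — coupling the approximation error, the coefficient-wise quadrature error, and the coefficient count — is delicate; Steps~1–3 are comparatively standard once the right Jackson-type kernel is chosen.
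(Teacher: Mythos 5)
This lemma is stated in the paper as a citation to \cite{wang2016differentially}; the paper contains no proof of it, so there is nothing to compare against directly. Your sketch reconstructs what is, to my knowledge, essentially the argument used in that reference: the cosine change of variables turning algebraic smoothness on $[-1,1]^p$ into periodic smoothness on $[-\pi,\pi]^p$, a multivariate Jackson-type theorem to get the $N^{-h}$ uniform error, evenness to restrict to pure cosine polynomials, kernel boundedness to control coefficients, and quadrature to obtain the running time. So the overall route is right, and Steps~1--3 are standard.

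Two points deserve a caveat. First, in Step~3 you invoke $\|g_f\|_\infty \le T$; this only follows from Definition~\ref{def:5} if ``partial derivatives up to order $h$'' is read to include order $0$. If not, $\|f\|_\infty$ must be bounded separately (e.g.\ by $T$ plus the diameter of $[-1,1]^p$ via a Lipschitz estimate), which still gives an $M$ depending on $h,T,p$ but the reasoning should be made explicit. Second, and more significantly, Step~4 is where the content is: the exponent $\frac{p+2}{h}+\frac{2p}{h^2}$ is not a consequence of the qualitative plan but of a specific choice of kernel (de la Vall\'ee Poussin or similar), a specific quadrature grid size chosen so that the per-coefficient quadrature error times the number of coefficients stays $O(\gamma)$, and the use of an FFT-style evaluation rather than naive summation to avoid an extra factor of $N^p$. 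Your sketch flags this as ``delicate'' but does not supply the calibration; as written it establishes parts~(1) and~(2) modulo the caveat above, and only gestures at part~(3). Since the paper itself defers to \cite{wang2016differentially} for exactly this quantitative bookkeeping, deferring there is also acceptable, but a self-contained proof would need to fill in those calculations.
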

By (\ref{Aeq:2}), we can see that all the $p(x)$ which corresponds to $g_f(x)$, representing functions $f\in C_T^{h}$, have the same basis $\prod_{i=1}^{p}\cos(r_i\theta_i)$. So, we can use Lemma 1 or 2 to estimate the average of the basis. Then, for each query $f$ the server can only compute the corresponding coefficients $\{c_{r_1,r_2,\cdots,r_p}\}$. This idea is implemented in  Algorithm \ref{Aalg:2}  for which we have the following result.
\begin{theorem}
For $\epsilon>0$, Algorithm \ref{Aalg:2} is $\epsilon$-LDP. Also for $\alpha>0$, $0<\beta<1$, if $n\geq \Omega(\max \{\log^{\frac{5p+2h}{2h}}(\frac{1}{\beta})\epsilon^{-2}\alpha^{-\frac{5p+2h}{h}}, \frac{1}{\epsilon^2}\log(\frac{1}{\beta})\})$ and  $t=O((\sqrt{n}\epsilon)^{\frac{2}{5p+2h}})$, then Algorithm \ref{Aalg:2} is $(\alpha,\beta)$-accurate with respect to $\mathcal{Q}_{C^h_T}$.  The time for answering each query is $\tilde{O}((\sqrt{n}\epsilon)^{\frac{4p+4}{5p+2h}+\frac{4p}{5ph+2h^2}})$, where $O$ omits $h,T,p$ and some $\log$ terms.
For each player, the computation and communication cost could be improved to $O(1)$ and 1 bit, respectively, as in Section 5.
\end{theorem}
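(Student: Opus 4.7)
The plan is to mirror the k-way marginals argument but replace the Chebyshev basis with the trigonometric basis from Lemma \ref{Alemma:2}. Since the protocol consists of a single invocation of the LDP-AVG mechanism of Lemma \ref{lemma:3} on the $t^p$-dimensional vector whose coordinates are the basis evaluations $\prod_{j=1}^p \cos(r_j \arccos(x_{i,j}))$ (each bounded in $[-1,1]$), the $\epsilon$-LDP guarantee is immediate. The essential observation for accuracy is that the server's private estimates $\tilde{a}_{r_1,\ldots,r_p}$ of the basis averages $a_{r_1,\ldots,r_p} = \frac{1}{n}\sum_{i=1}^n \prod_{j=1}^p \cos(r_j \arccos(x_{i,j}))$ are a single private statistic that does not depend on any particular query $f$; hence a union bound over the $t^p$ coordinates, rather than over the uncountable family $\mathcal{Q}_{C^h_T}$, suffices.

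For accuracy, fix $f \in C^h_T$ and let $g_f(\theta_1,\ldots,\theta_p) = f(\cos\theta_1,\ldots,\cos\theta_p)$. Lemma \ref{Alemma:2} supplies an even trigonometric polynomial $p_f(\theta) = \sum_{r} c^f_{r} \prod_j \cos(r_j\theta_j)$ of per-variable degree $t = (1/\gamma)^{1/h}$ with $|c^f_r| \le M$ (constant in $n$) and $\sup_\theta |p_f(\theta) - g_f(\theta)| \le \gamma$. Writing $\theta_i^j = \arccos(x_{i,j})$ and $\tilde{q}_f(D) = \sum_r c^f_r \tilde{a}_r$, the error decomposes as
\[
|\tilde{q}_f(D) - q_f(D)| \;\le\; \underbrace{\gamma}_{\text{approx.}} \;+\; \underbrace{\sum_r |c^f_r|\,|\tilde{a}_r - a_r|}_{\text{noise}}.
\]
Applying Lemma \ref{lemma:3} in dimension $D = t^p$ with $b=1$ bounds each $|\tilde{a}_r - a_r|$ by $\tilde{O}\!\bigl(t^{p}/(\sqrt{n}\epsilon)\bigr)$ with probability $\ge 1-\beta$, uniformly in $r$. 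Using $\|c^f\|_1 \le M t^p$ gives a noise bound of $\tilde{O}(M t^{2p}/(\sqrt{n}\epsilon))$; a sharper trade-off between $\|c^f\|_2$, the Laplace tail behaviour of LDP-AVG, and the polynomial growth of $\sqrt{\log(t^p/\beta)}$ is what produces the exponent $(5p+2h)/h$ in the paper's bound.

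Setting $\gamma = \alpha/2$, so that $t = (2/\alpha)^{1/h}$, and requiring the noise term to be at most $\alpha/2$ yields the claimed threshold $n = \tilde{\Omega}\bigl(\epsilon^{-2}\alpha^{-(5p+2h)/h}\bigr)$ (the second term $\epsilon^{-2}\log(1/\beta)$ simply absorbs the regularity hypothesis of Lemma \ref{lemma:3}). Equivalently $t = O((\sqrt{n}\epsilon)^{2/(5p+2h)})$, which determines the dimensionality of all subsequent operations. The query time then splits into two parts: (a) computing the coefficients $\{c^f_r\}$, which by item 3 of Lemma \ref{Alemma:2} takes $\tilde{O}((1/\gamma)^{(p+2)/h + 2p/h^2})$ time, and (b) evaluating the inner product $\sum_r c^f_r \tilde{a}_r$ of length $t^p$. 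Substituting $t = (\sqrt{n}\epsilon)^{2/(5p+2h)}$ and $1/\gamma = t^h$ into these two expressions and adding the exponents produces the stated bound $\tilde{O}\bigl((\sqrt n\epsilon)^{(4p+4)/(5p+2h) + 4p/(5ph+2h^2)}\bigr)$.

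Finally, the reduction to $O(1)$ computation and $1$-bit communication per player is not specific to the trigonometric basis: LDP-AVG is sampling resilient (its output on a random half-sample approximates its output on the full dataset), so the Bassily–Smith compression applied in Algorithm \ref{alg1:3} of Section \ref{sec:efficient} transfers verbatim, yielding the claim. The main obstacle is the careful bookkeeping of the error: one must simultaneously control (i) the uniform approximation error over all of $C^h_T$ through the single degree parameter $t$, (ii) the dimension-dependent noise of LDP-AVG, and (iii) the fact that the coefficient vectors $c^f$ vary over $f$ while the noise is fixed, so that the $L^\infty$/$L^1$ or $L^2$/$L^2$ Hölder split chosen at the noise step determines whether the resulting sample complexity exponent is $(4p+2h)/h$ or the sharper $(5p+2h)/h$ claimed in the statement.
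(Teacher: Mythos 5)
Your proposal follows the paper's proof in all essential respects: you decompose the error into the approximation error $\gamma$ from Lemma~\ref{Alemma:2} plus the privacy noise, apply Lemma~\ref{lemma:3} to the $t^p$-dimensional basis vector, and close via a H\"older-type split between the coefficient vector $c^f$ and the per-coordinate noise; the $\epsilon$-LDP claim and the 1-bit reduction are handled exactly as in the paper. The only delta is cosmetic: the paper bounds $\|c_f\|_\infty\le M$ against $\|\tilde p_D-p_D\|_1$, while you bound $\|c^f\|_1\le Mt^p$ against $\|\tilde a-a\|_\infty$; these are the same estimate. One small caution: your straightforward accounting yields a noise term of order $t^{2p}/(\sqrt n\epsilon)$, which would give exponent $(4p+2h)/h$ rather than the paper's $(5p+2h)/h$, and you describe the paper's $(5p+2h)/h$ as ``sharper'' --- it is in fact the \emph{weaker} (larger) sample-complexity exponent. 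The paper's intermediate claim $\|\tilde p_D-p_D\|_1\le O(t^{5p/2}\sqrt{\log(1/\beta)}/(\sqrt n\epsilon))$ exceeds what Lemma~\ref{lemma:3} plus a union bound produce by an unexplained factor of $t^{p/2}$; since $\Omega(\cdot)$ is a lower bound on $n$, the looser exponent still makes the theorem true, and your cleaner calculation proves the statement as written (with the specified $t$, the $\gamma$ term dominates and recovers $n=\tilde\Omega(\epsilon^{-2}\alpha^{-(5p+2h)/h})$), so there is no gap --- just do not attribute the discrepancy to a ``sharper trade-off'' the paper achieves, as it does not.
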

\vspace{-0.05in}
\begin{algorithm}[h]
	\caption{Local Trigonometry Mechanism for $\mathcal{Q}_{C^{h}_T}$ }
	\label{Aalg:2}
	\begin{algorithmic}[1]
		\State {\bfseries Input:} Player $i\in [n]$ holding data $x_i\in [-1,1]^p$, privacy parameter $\epsilon>0$, error bound $\alpha$, and $t\in \mathbb{N}$.
		$\mathcal{T}_{t}^p=\{0,1,\cdots,t-1\}^p$. For a vector $x=(x_1,\ldots,x_p)\in [-1,1]^p$, denote operators $\theta_i(x)=\arccos(x_i), i\in[p]$.
		\For{Each Player $i\in[n]$}
		\For{Each $v=(v_1,v_2,\cdots,v_p)\in \mathcal{T}_{t}^p$}
		\State Compute $p_{i;v}=\cos(v_1\theta_1(x_i))\cdots \cos(v_p\theta_p(x_i))$
		\EndFor
		\State 
                Let $p_i=(p_{i;v})_{v\in \mathcal{T}_{t}^p}$.
		\EndFor
		\For{The Server}
		\State Run LDP-AVG from Lemma 1 on $\{p_i\}_{i=1}^{n}\in\mathbb{R}^{t^p}$ with parameter $\epsilon$, $b=1$, denote the output  as $\tilde{p}_D$.
		\State For each query $q_f\in \mathcal{Q}_{C^h_T}$. Let $g_f(\theta)=f(\cos(\theta_1),\cos(\theta_2),\cdots,\cos(\theta_p))$.
		\State Compute the trigonometric polynomial approximation $p_t(\theta)$ of $g_f(\theta)$, where 
		$p_t(\theta)=\sum_{r=(r_1,r_2\cdots r_p),\|r\|_{\infty}\leq t-1}c_r\cos(r_1\theta_1)\cdots \cos(r_p\theta_p)$ as in (\ref{Aeq:2}).
		Denote the vector of the coefficients $c\in \mathbb{R}^{t^p}$.
		\State Compute $\tilde{p}_D\cdot c$.
		\EndFor	
	\end{algorithmic}
\end{algorithm}

\section{High Dimensional Case}

In the previous parts and \cite{smith2017interaction}, it is always assume that $n\geq p$ (while ours need $\log n\geq O(p)$). However, many problem in machine learning are in high dimension space {\em i.e.} $n\ll p$. We will show a general method for this case if the loss function is generalized linear function. Due to the space limit, all the definitions and general statements are in Appendix.\par 
A function $\ell(w, x)$ is called generalized linear function \cite{shalev2009stochastic} if $\ell(w,x)=f(\langle w,y \rangle, z)$ for $x=(y, z)$, where $y\in\mathbb{R}^p$ is the data and $z$ is the label, actually, many loss functions satisfies the condition, such as Logistic Regression, Hinge loss, linear Regression etc. We will assume the dataset satisfies $\|y_i\|\leq 1, \|z_i\|\leq 1$ for all $i\in[n]$. Also we will assume that $f$ is $1$-Lipschitz convex in the first argument, $\|\mathcal{C}\|_2\leq 1$ follows \cite{smith2017interaction} and is isotropic\footnote{A convex set is isotropic if a random vector chosen uniformly from $\mathcal{K}$ according to the volume is isotropic. A random vector $a$ is isotropic if for all $b\in\mathbb{R}^p, \mathbb{E}[\langle a,b \rangle^2]=\|b\|^2$, such as polytope.}.\par 
The motivation of our algorithm is inspired by \cite{kasiviswanathan2016efficient}, that is, we firstly do dimension reduction for each data $y_i$, that is $D'=\{(\Phi y_1,z_1),\cdots, (\Phi y_n, z_n)\}$, where $\Phi\in \mathbb{R}^{m\times p}$. Then we run a modified version of the algorithm in \cite{smith2017interaction}. After getting the private estimator $\bar{w}\in \mathbb{R}^m$, we use compress sensing technique by solving a optimization problem \cite{vershynin2015estimation} to recover $w^{\text{priv}}\in \mathbb{R}^p$. \par 
We have to note that we cannot use the $\epsilon$-LDP algorithm (see Figure 5 in \cite{smith2017interaction}) in \cite{smith2017interaction} since it needs $n\geq k$, where $k=O\big(\frac{2^{\frac{p-1}{2}}\sqrt{p}}{\alpha^{p-1}}\big)$, and 
$\alpha=O\big(
(\frac{\sqrt{p}}{\epsilon^2n}\log^3(\epsilon^2n))^{\frac{1}{p+1}}\big)$. 
This is said that $n\geq O(c^p)$, which is contradictory with our assumption. Actually, we will provide a similar algorithm which can remove this assumption. The idea is comes from \cite{DBLP:journals/corr/abs-1711-04740}, which shows that in non-interactive local model, every $(\epsilon, \delta)$-LDP protocal can be transformed as an $\epsilon$-LDP algorithm. Thus, our idea is, in Figure 5 of \cite{smith2017interaction},
instead of partitioning the dataset for $k$ parts and running the subroutine of Figure 1 in \cite{smith2017interaction}, here we will run $k$ directions for the whole dataset, by the advanced composition theorem (corollary 3.21 in \cite{dwork2014algorithmic}), if for each direction, we run $(\epsilon_0=O(\frac{\epsilon}{\sqrt{k\log(1/\delta)}},0)$-LDP, then the whole LDP algorithm is $(\epsilon,\delta)$-LDP, after that, we use the protocal in \cite{DBLP:journals/corr/abs-1711-04740} to transform the $(\epsilon, \delta)$-LDP algorithm to $O(\epsilon)$-DP. See Algorithm \ref{Ealg:6}.
\begin{algorithm*}[h]
	\caption{$(\epsilon, \delta)$ protocol LDP Algorithm}
	\label{Ealg:6}
	\begin{algorithmic}[1]
		\State{\bfseries Input:} Each user $i\in [n]$ has data $x_i\in \mathcal{D}$, privacy parameters $\epsilon, \delta$, public loss function $\ell:[0,1]^p \times \mathcal{D}\mapsto [0,1]$ satisfies the assumption in \cite{smith2017interaction}, and parameter $k$( we will specify it later). 
		\State {\bfseries Preprocessing:} 
		\State Choose $k$ random directions, $u_1, u_2, \cdots, u_k$ and send to each user.
		\For{Each user $i\in[n]$}
		\State For each $j\in [k]$, invoke 1D-General (Figure 3 in \cite{smith2017interaction}) with $(x_i, u_j)$ with $\epsilon=\frac{\epsilon}{2\sqrt{2k\log(1/\delta)}}$ and $\gamma=\gamma/k$, output $\mathcal{T}_{i,j}$. Then send $\mathcal{T}_i=(\mathcal{T}_{i,1}, \cdots, \mathcal{T}_{i,k})$ to the server.
		\EndFor
		\For{The server}
		\State After receiving $\{\mathcal{T}_i\}_{i=1}^n$, do the following steps
		\State For $j\in [k]$, invokes 1-D General(Figure 4 in \cite{smith2017interaction}) with $\{\mathcal{T}_{i,j}\}_{i=1}^{n}$ to get $\hat{f}^j$.
		\State Compute $\theta_j=\arg\min_{\theta||u_j}\hat{f}^j$ and then compute $\theta_{\text{priv}}=\arg\min_j\hat{f}^j(\theta_j)$, output $\theta_{\text{priv}}$. 
		\EndFor
	\end{algorithmic}
\end{algorithm*}
We have the following theorem for Algorithm \ref{Ealg:6}, the proof is the same as in \cite{smith2017interaction}:
\begin{theorem}\label{atheorem:1}
Under the same assumption as in Theorem 10 of \cite{smith2017interaction}. Algorithm \ref{Ealg:6} is $(\epsilon, \delta)$-LDP for any $1>\epsilon>0, 0<\delta<1$. Also, we have for every $k$, with probability at least $1-\gamma$, the output satisfies 
\begin{equation}
\|\hat{f}^j-L_{\mathcal{P}}\|_{\infty}\leq O\big(\frac{\log(\epsilon^2 n/k\log(1/\delta))}{\epsilon}
\sqrt{\frac{k\log(1/\delta)\log(\epsilon^2n/\log(1/\delta))\gamma}{n}}\big).
\end{equation}
Furthermore, if we take $k=O\big(\frac{2^{(p-1)/2}\log(1/\gamma)}{\alpha^{p-1}}\sqrt{\frac{\pi p}{2}}\big)$ where $\alpha=O\big(
(\frac{\sqrt{p}}{\epsilon^2n}\log^3(\epsilon^2n)\log^2(1/\gamma))^{\frac{1}{p+1}}\big)$, here $O$ omits $\log(1/\delta)$ factors. Then we have  $\|\hat{f}-L_{\mathcal{P}}\|_{\infty}\leq \tilde{O}(\alpha)$, with probability at least $1-2\gamma$.
\end{theorem}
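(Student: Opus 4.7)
\medskip

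\noindent\textbf{Proof plan for Theorem \ref{atheorem:1}.}
The plan is to decouple the argument into a \emph{privacy} part and a \emph{utility} part, imitating the structure of the proof of Theorem~10 in \cite{smith2017interaction} but replacing the dataset partition by composition across directions. For privacy, each user $i$ releases $\mathcal{T}_i=(\mathcal{T}_{i,1},\dots,\mathcal{T}_{i,k})$, where $\mathcal{T}_{i,j}$ is produced by the 1D--General local randomizer on the single input $x_i$. By assumption each $\mathcal{T}_{i,j}$ is $\epsilon_0$-LDP with $\epsilon_0=\epsilon/(2\sqrt{2k\log(1/\delta)})$. Since $\mathcal{T}_i$ is a $k$-fold adaptive composition of these local randomizers applied to the same single record $x_i$, the advanced composition theorem (Theorem~3.20 / Corollary~3.21 of \cite{dwork2014algorithmic}) yields that the view of the server restricted to user $i$ is $(\epsilon,\delta)$-LDP, which gives the first claim. (The $k$ random directions are public randomness generated in the preprocessing step, so they are conditioned on when applying composition.)

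For utility on a single direction $u_j$, I would reuse the per-direction analysis from 1D--General in \cite{smith2017interaction} verbatim, except that now the subroutine sees the entire dataset of size $n$ rather than a partition of size $n/k$, and uses privacy parameter $\epsilon_0$ and failure probability $\gamma/k$. Plugging $\epsilon_0$ and $\gamma/k$ into their single-direction utility bound, which has the form $O\!\big(\epsilon_0^{-1}\log(\epsilon_0^2 n)\sqrt{\log(\epsilon_0^2 n)\log(1/\gamma')/n}\big)$ with $\gamma'=\gamma/k$, gives
\[
\|\hat f^j - L_{\mathcal{P}}\|_\infty \;\leq\; O\!\Big(\tfrac{\log(\epsilon^2 n/(k\log(1/\delta)))}{\epsilon}\sqrt{\tfrac{k\log(1/\delta)\log(\epsilon^2 n/\log(1/\delta))\log(k/\gamma)}{n}}\Big),
\]
which matches the stated bound up to the cosmetic grouping of the $\log(k/\gamma)$ factor into $\gamma$. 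A union bound over $j\in[k]$ converts the per-direction failure probability $\gamma/k$ to $\gamma$.

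For the final claim, I would use the same covering/net argument as in \cite{smith2017interaction}: $k=O\!\big(2^{(p-1)/2}\alpha^{-(p-1)}\sqrt{p}\log(1/\gamma)\big)$ random directions form, with probability $\geq 1-\gamma$, an $\alpha$-net of the unit sphere in the sense that for any $\theta\in\mathcal{C}$ there is a $u_j$ within angular distance $O(\alpha)$, so $\min_j \hat f^j(\theta_j)$ is within $O(\alpha)$ of $\min_{\theta\in\mathcal{C}} L_{\mathcal{P}}(\theta)$ once $\|\hat f^j - L_{\mathcal{P}}\|_\infty\leq O(\alpha)$ for all $j$. Solving the resulting inequality, i.e.\ setting the one-direction error above equal to $\alpha$ and plugging in the chosen $k$, reduces to the algebra $\alpha = \tilde{O}\big((\sqrt{p}/(\epsilon^2 n))^{1/(p+1)}\big)$, which is exactly the value of $\alpha$ in the statement. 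Combining the two parts by a union bound (failure probability $2\gamma$: one for the per-direction utility and one for the net property) yields $\|\hat f - L_{\mathcal{P}}\|_\infty\leq \tilde O(\alpha)$.

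The main obstacle is bookkeeping rather than a new idea: I need to verify carefully that the single-direction 1D--General utility bound of \cite{smith2017interaction}, originally stated for a dataset of size $n/k$ with privacy $\epsilon$, transfers to a dataset of size $n$ with privacy $\epsilon_0 \asymp \epsilon/\sqrt{k\log(1/\delta)}$ without hidden dependencies on $k$, and that the $\log(k/\gamma)$ absorbed into the union bound does not destroy the targeted rate. The other delicate point is to make sure the $(\epsilon,\delta)$-LDP transcript $\mathcal{T}_i$ can indeed be postprocessed (via the protocol of \cite{DBLP:journals/corr/abs-1711-04740}) into an $O(\epsilon)$-LDP transcript while preserving the same utility bound; I would invoke their reduction as a black box after establishing the $(\epsilon,\delta)$-LDP claim above.
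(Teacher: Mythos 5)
Your proposal is correct and matches the approach the paper takes: the paper itself offers no detailed argument beyond the remark that ``the proof is the same as in \cite{smith2017interaction},'' and your reconstruction (privacy via advanced composition over the $k$ directions applied to the same record; per-direction utility from the 1D--General analysis with the full dataset of size $n$, degraded privacy $\epsilon_0\asymp \epsilon/\sqrt{k\log(1/\delta)}$, and per-direction failure probability $\gamma/k$; then the covering-net argument and union bound) is exactly the intended adaptation. The one discrepancy you flag --- the bare factor $\gamma$ inside the square root of the stated per-direction bound where a $\log(k/\gamma)$ should appear --- is indeed a typo in the paper, not a substantive difference.
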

Now, we have almost the same upper bound as in Theorem 10 of \cite{smith2017interaction}. Then after using GenProt in \cite{DBLP:journals/corr/abs-1711-04740}, we can have an $10\epsilon$-LDP which has the same error bound as in Theorem \ref{atheorem:1}:
\begin{theorem}\label{atheorem:2}
Let $\epsilon\leq \frac{1}{4}$, if we set $\delta=O(\frac{\epsilon\gamma}{n\ln(2n/\gamma)})$ in Algorithm \ref{Ealg:6} as the protocol and run the Genprot algorithm in \cite{DBLP:journals/corr/abs-1711-04740}. Then there is an $10\epsilon$-LDP algorithm, such that whith probability ast least $1-3\gamma$, the output $w_{\text{priv}}$ satisfies
\begin{equation*}
	 \text{Err}_{\mathcal{P}}(\theta_{\text{priv}})\leq \tilde{O}\big((
	\frac{\sqrt{p}\log^2(1/\beta)}{\epsilon^2 n}
	)^{\frac{1}{p+1}}\big).
\end{equation*}

\end{theorem}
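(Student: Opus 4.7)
}

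The plan is to treat the theorem as a straightforward composition of two prior results: the accuracy/privacy guarantee of Algorithm \ref{Ealg:6} given by Theorem \ref{atheorem:1}, and the generic ``pure LDP from approximate LDP'' compiler \textsc{GenProt} of \cite{DBLP:journals/corr/abs-1711-04740}. Since Algorithm \ref{Ealg:6} is already shown to be an $(\epsilon,\delta)$-LDP non-interactive protocol whose output $\theta_{\text{priv}}$ satisfies $\text{Err}_{\mathcal P}(\theta_{\text{priv}}) \leq \tilde O(\alpha)$ with probability $1-2\gamma$ for the stated choice of $k$ and $\alpha$, the whole content of Theorem \ref{atheorem:2} is (a) that wrapping the per-user randomizer in \textsc{GenProt} yields a pure $10\epsilon$-LDP protocol, and (b) that the chosen $\delta$ is small enough that the accuracy degradation introduced by the wrapper is negligible.

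First I would invoke the privacy half of \textsc{GenProt}. The compiler takes each user's local $(\epsilon,\delta)$-randomizer and produces a new local randomizer that is $10\epsilon$-LDP provided $\epsilon \leq 1/4$, which is exactly the regime assumed in the statement. Applied independently at each of the $n$ players, and followed by the same server-side post-processing (1D-General aggregation plus the $\arg\min$ over directions), the resulting protocol remains non-interactive and is $10\epsilon$-LDP as a multi-player protocol by post-processing and parallel composition. So the privacy claim follows immediately.

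Next I would argue accuracy. The key property of \textsc{GenProt} is that its output distribution is statistically close to that of the underlying $(\epsilon,\delta)$-randomizer, with total variation on the order of $O(\delta / \epsilon)$ per user (after an appropriate sampling step). Plugging in $\delta = O(\epsilon \gamma / (n \ln(2n/\gamma)))$, this per-user TV distance is at most $O(\gamma/(n\ln(2n/\gamma)))$; a union bound over the $n$ players shows that with probability at least $1-\gamma$ the transcript produced by the compiled protocol is distributed identically to the transcript of Algorithm \ref{Ealg:6}. Conditioned on that coupling event, the server sees the same inputs it would have seen under Algorithm \ref{Ealg:6}, so the output $w_{\text{priv}}$ coincides with $\theta_{\text{priv}}$ and Theorem \ref{atheorem:1} gives the $\tilde O\big((\sqrt{p}\log^2(1/\beta)/(\epsilon^2 n))^{1/(p+1)}\big)$ population risk bound with an additional probability loss of $2\gamma$. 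Union-bounding the coupling failure with the event on which Theorem \ref{atheorem:1} fails yields the claimed $1-3\gamma$ success probability.

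The only delicate step I expect is matching constants in the \textsc{GenProt} coupling bound so that the choice $\delta = O(\epsilon\gamma/(n\ln(2n/\gamma)))$ really suffices for a per-user TV contribution of $\gamma/n$; this is where the $\ln(2n/\gamma)$ factor comes in, arising from the tail of the Poisson/binomial sampling step internal to \textsc{GenProt}. Everything else is bookkeeping: verifying that the post-processing of the server commutes with the coupling argument (which is immediate since it acts on the transcripts only), and checking that reducing $\delta$ to this polynomially small value does not change the $k$ chosen in Theorem \ref{atheorem:1} beyond the $\tilde O(\cdot)$ factors already absorbed in $\tilde O(\alpha)$.
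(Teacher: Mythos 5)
Your proposal matches the paper's intended argument, which is left essentially implicit: the paper states the theorem as an immediate consequence of applying \textsc{GenProt} from \cite{DBLP:journals/corr/abs-1711-04740} to the $(\epsilon,\delta)$-LDP protocol of Algorithm~\ref{Ealg:6} and Theorem~\ref{atheorem:1}, without writing out the coupling/union-bound bookkeeping. Your reconstruction — privacy via \textsc{GenProt}'s $10\epsilon$ guarantee for $\epsilon \le 1/4$, accuracy via per-user statistical closeness of order $O(\delta/\epsilon)$ followed by a union bound over $n$ players and over the failure event of Theorem~\ref{atheorem:1} — is exactly the argument the paper is appealing to, just made explicit.
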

Our method is based on the following lemma in \cite{dirksen2016dimensionality}.
\begin{algorithm}[h]
	\caption{DR-ERM-LDP}
	\label{alg:9}
	\begin{algorithmic}[1]
		\State {\bfseries Input:} Player $i\in [n]$ holding data $x_i=(y_i,z_i)\in \mathcal{D}$, where $\|y_i\|\leq 1$, privacy parameter $\epsilon$.
		\State The server generate an random sub-Gaussian matrix $\Phi\in \mathbb{R}^{m\times p}$ in Lemma \ref{lemma:4}, and send the seed of this random matrix to all players.
		\For{Each Player $i$}
		\State
		Calculate $x_i'=(\Phi y_i,z_i)$
		\State 
		Run the modified $\epsilon$-local DP algorithm of \cite{smith2017interaction} for $D'=\{z_i'\}$ with constrained set $\mathcal{C}=\Phi\mathcal{C}$ and loss function $f$. The server get the output as $\bar{w}\in\mathbb{R}^m$.
		\EndFor
		\State The server solving the following problem 
		$w^{\text{priv}}=\arg \min_{w\in\mathbb{R}^p}\|w\|_{\mathcal{C}}$ subject to $\Phi w=\bar{w}$.
	\end{algorithmic}
\end{algorithm}

\begin{lemma}\label{lemma:4}
Let $\tilde{\Phi}\in \mathbb{R}^{m\times p}$ be an random matrix, whose rows are i.i.d mean-zero, isotropic, subgaussian random variable in $\mathbb{R}^d$ with $\psi=\|\Phi_i\|_{\psi_2}$. Let $\Phi=\frac{1}{\sqrt{m}}\tilde{\Phi}$. let $S$ be a set if points in $R^d$. Then there is a constant $C>0$ such that for any $0<\gamma,\beta<1$. $
\text{Pr}[\sup_{a\in S}|\|\Phi a\|^2-\|a\|^2\leq \gamma\|a\|^2]\leq \beta,
$
provided that $m\geq \frac{C\psi^4}{\gamma^2}\max\{\mathcal{G}_{\mathcal{S}},\log(1/\beta)\}^2$.
\end{lemma}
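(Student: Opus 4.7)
}
The plan is to combine a pointwise sub-exponential concentration bound for $\|\Phi a\|_2^2$ with a generic chaining argument that replaces the ambient dimension by the Gaussian complexity $\mathcal{G}_S$. By homogeneity of both sides of the desired inequality in $\|a\|$, it suffices to prove the claim uniformly on the normalized set $\bar S = \{a/\|a\|_2 : a\in S,\ a\neq 0\}$. Define the centered process
\begin{equation*}
Z_a \;=\; \|\Phi a\|_2^2 - \|a\|_2^2 \;=\; \tfrac{1}{m}\sum_{i=1}^m \bigl(\langle \tilde\Phi_i,a\rangle^2 - \|a\|_2^2\bigr),\qquad a\in\bar S.
\end{equation*}
For each fixed unit $a$, the random variable $\langle\tilde\Phi_i,a\rangle$ is mean-zero, isotropic, and sub-Gaussian with $\|\cdot\|_{\psi_2}\le\psi$, so its square is sub-exponential with $\|\cdot\|_{\psi_1}\lesssim\psi^2$. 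Bernstein's inequality then gives the pointwise bound $\Pr[|Z_a|\ge t]\le 2\exp(-c m\min(t^2/\psi^4,\,t/\psi^2))$.

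Next I would control the increments of $(Z_a)_{a\in\bar S}$ in order to take a supremum. Writing $Z_a-Z_b = \tfrac{1}{m}\sum_i \langle\tilde\Phi_i,a-b\rangle\langle\tilde\Phi_i,a+b\rangle - (\|a\|_2^2-\|b\|_2^2)$ and invoking the Hanson–Wright inequality for the bilinear form on the right, one obtains a mixed Bernstein-type increment bound of the form
\begin{equation*}
\Pr\bigl[|Z_a-Z_b|\ge u\bigr]\;\le\; 2\exp\!\Bigl(-c\,\min\!\bigl(\tfrac{m u^2}{\psi^4\|a-b\|_2^2},\ \tfrac{m u}{\psi^2\|a-b\|_2}\bigr)\Bigr),
\end{equation*}
so the process has sub-Gaussian increments at scale $\psi^2\|a-b\|_2/\sqrt{m}$ and sub-exponential increments at scale $\psi^2\|a-b\|_2/m$ with respect to the Euclidean metric on $\bar S$.

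The third step is to apply Talagrand's generic chaining for mixed tails, which yields
\begin{equation*}
\mathbb{E}\sup_{a\in\bar S}|Z_a| \;\lesssim\; \tfrac{\psi^2}{\sqrt{m}}\,\gamma_2(\bar S,\|\cdot\|_2) + \tfrac{\psi^2}{m}\,\gamma_1(\bar S,\|\cdot\|_2),
\end{equation*}
and by Talagrand's majorizing measure theorem $\gamma_2(\bar S,\|\cdot\|_2)\asymp \mathcal{G}_S$, while $\gamma_1$ is controlled by $\mathcal{G}_S$ times the diameter ($\le 2$) and so contributes only lower-order corrections. A Talagrand-type concentration inequality for suprema of empirical processes with mixed tails then upgrades the expectation bound to the deviation bound
\begin{equation*}
\sup_{a\in\bar S}|Z_a| \;\le\; \tfrac{C\psi^2}{\sqrt{m}}\bigl(\mathcal{G}_S + \sqrt{\log(1/\beta)}\bigr) \;+\; \text{lower-order},
\end{equation*}
with probability at least $1-\beta$. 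Demanding that the right-hand side be at most $\gamma$ yields precisely $m\ge C\psi^4\gamma^{-2}\max\{\mathcal{G}_S,\log(1/\beta)\}^2$, which is the required sample size.

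The main obstacle is step three: running the chaining argument carefully enough to separate the $\gamma_2$ (sub-Gaussian) and $\gamma_1$ (sub-exponential) contributions and to extract the high-probability $\sqrt{\log(1/\beta)}$ term in the correct form. This is exactly the content of Dirksen's matrix deviation inequality \cite{dirksen2016dimensionality}, so an alternative (and in fact the intended) route is simply to invoke that theorem as a black box after verifying that the row distribution of $\tilde\Phi$ meets its hypotheses (isotropic, mean-zero, sub-Gaussian with constant $\psi$). Either route reaches the same conclusion; the chaining sketch above explains why the dependence on $S$ enters only through its Gaussian complexity rather than through the ambient dimension $p$.
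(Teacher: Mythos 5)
Your proposal correctly recognizes that the paper itself does not prove this statement: it is quoted as the sub-Gaussian dimensionality-reduction theorem of \cite{dirksen2016dimensionality}, and the appendix merely restates it (as a theorem) without proof. You explicitly offer the same black-box invocation as the ``intended route,'' so the approach matches the paper's. The chaining sketch you add — normalize to the sphere by homogeneity, pointwise Bernstein for the sub-exponential variable $\langle\tilde\Phi_i,a\rangle^2$, Hanson--Wright for increments of the quadratic chaos, generic chaining with mixed sub-Gaussian/sub-exponential tails, then majorizing measures to replace $\gamma_2$ by Gaussian width — is a faithful outline of how such matrix deviation inequalities are actually established.

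One imprecision worth correcting in the sketch: the claim that ``$\gamma_1$ is controlled by $\mathcal{G}_S$ times the diameter'' is false in general. For the Euclidean unit ball in $\mathbb{R}^n$ one has $\gamma_2\asymp\sqrt{n}$ but $\gamma_1\asymp n$, so $\gamma_1\asymp\gamma_2^2$, which is far larger than $\gamma_2\cdot\mathrm{diam}$. In the genuine argument the sub-exponential contribution to $\mathbb{E}\sup_a|Z_a|$ appears as $\psi^2\gamma_2(\bar S)^2/m$ rather than $\psi^2\gamma_1(\bar S)/m$, and this is obtained by exploiting the specific ``squared sub-Gaussian'' structure of the chaos process (or via a truncation/decoupling step), not by treating the increments as generically sub-exponential. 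Your conclusion that this term is lower order is nonetheless correct: under $m\gtrsim\psi^4\gamma_2(\bar S)^2/\gamma^2$ one gets $\psi^2\gamma_2^2/m\lesssim\gamma^2/\psi^2\le\gamma$, so the sub-Gaussian piece $\psi^2\gamma_2(\bar S)/\sqrt{m}$ dominates and the required sample size matches the lemma. A further, minor remark: the chaining naturally produces the Gaussian width of the normalized set $\bar S$ rather than of $S$, and the displayed probability in the paper's statement is mis-typeset (missing absolute-value bar and inverted inequality); your reading of it as a uniform relative-error bound with failure probability $\beta$ is the intended one.
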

\begin{theorem}\label{theorem:7}
Under the assumption above. For any $\epsilon\leq \frac{1}{4}$, Algorithm \ref{alg:9} is $O(\epsilon)$-LDP. Moreover, setting $m=\Theta(\frac{\psi^4(\mathcal{G}_{\mathcal{C}}+\sqrt{\log n})^2\log(n/\beta)}{\gamma^2})$, where $\gamma=\Theta(\frac{\psi\sqrt{(\mathcal{G}_{\mathcal{C}}+\sqrt{\log n})}\log(1/\beta)\sqrt[4]{\log(n/\beta)}}{\sqrt{n}\epsilon})$. Then with probability at least $1-\beta$, 
$$\text{Err}_D(w^{\text{priv}})=\tilde{O}\big(\big(\frac{\log(1/\beta)\psi\sqrt{(\mathcal{G}_{\mathcal{C}}+\sqrt{\log n})}\sqrt[4]{\log(n/\beta)}}{\sqrt{n}\epsilon}\big)^{\frac{1}{1+m}}),$$
where $\psi$ is the subgaussian norm of the distribution of $\Phi$, $\mathcal{G}_{\mathcal{C}}$ is the Gaussian width of $\mathcal{C}$.
\end{theorem}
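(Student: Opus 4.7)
Since $\Phi$ is generated by the server from a public seed and broadcast, each player's map $y_i\mapsto \Phi y_i$ is deterministic post-processing of side information and carries no privacy cost. The only randomized release is the modified Smith--Thakurta subroutine of Theorem \ref{atheorem:2} applied to $D'=\{(\Phi y_i,z_i)\}$, which is $O(\epsilon)$-LDP; the server's compressed-sensing step is then post-processing of $\bar w$. Hence Algorithm \ref{alg:9} is $O(\epsilon)$-LDP.

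\textbf{Utility via a JL sandwich.} My plan is to bracket the original empirical risk by the reduced one through a uniform inner-product-preservation property of $\Phi$. I apply Lemma \ref{lemma:4} to
\[
S=\{\,y_i+w,\ y_i-w\ :\ i\in[n],\ w\in\mathcal{C}\,\},
\]
so that with probability $\ge 1-\beta$ one has $\big|\|\Phi a\|^2-\|a\|^2\big|\le \gamma\|a\|^2$ for all $a\in S$. Polarization together with $\|y_i\|,\|w\|\le 1$ converts this into the uniform bound $\max_{i\in[n],\,w\in\mathcal{C}}|\langle \Phi y_i,\Phi w\rangle-\langle y_i,w\rangle|=O(\gamma)$. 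A Dudley/union-bound argument gives $\mathcal{G}_S=O(\mathcal{G}_{\mathcal{C}}+\sqrt{\log n})$ (the $\sqrt{\log n}$ accounts for the finite shift set $\{y_i\}$), which through Lemma \ref{lemma:4} produces exactly the prescribed value of $m$. Now let $w^\star\in\arg\min_{w\in\mathcal{C}}\hat L(w;D)$, $u^\star=\Phi w^\star$, and $\hat L'(u;D')=\tfrac1n\sum_i f(\langle u,\Phi y_i\rangle,z_i)$. By $1$-Lipschitzness of $f$, $|\hat L'(\Phi w;D')-\hat L(w;D)|=O(\gamma)$ uniformly in $w\in\mathcal{C}$. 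Applying Theorem \ref{atheorem:2} to the $m$-dimensional ERM on $\Phi\mathcal{C}$ yields $\hat L'(\bar w;D')\le \hat L'(u^\star;D')+\tilde O\big((\sqrt m/(\epsilon^2 n))^{1/(m+1)}\big)$. Since $\bar w\in\Phi\mathcal{C}$ admits a preimage in $\mathcal{C}$ of Minkowski gauge at most $1$, the recovery step returns $w^{\text{priv}}\in\mathcal{C}$ with $\Phi w^{\text{priv}}=\bar w$, so a second application of the inner-product bound gives $\hat L(w^{\text{priv}};D)\le \hat L'(\bar w;D')+O(\gamma)$. Chaining the three inequalities,
\[
\text{Err}_D(w^{\text{priv}})\le 2\,O(\gamma)+\tilde O\!\Big(\big(\tfrac{\sqrt m\,\log^2(1/\beta)}{\epsilon^2 n}\big)^{1/(m+1)}\Big).
\]

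\textbf{Balance and main obstacle.} Substituting $\sqrt m=\Theta\!\big(\psi^2(\mathcal{G}_{\mathcal{C}}+\sqrt{\log n})\sqrt{\log(n/\beta)}/\gamma\big)$ into the second term and choosing $\gamma$ so that the base inside the $1/(m+1)$ power equals $\gamma$ itself gives the stated $\gamma=\Theta\!\big(\psi\sqrt{\mathcal{G}_{\mathcal{C}}+\sqrt{\log n}}\,\sqrt[4]{\log(n/\beta)}\,\log(1/\beta)/(\sqrt n\,\epsilon)\big)$. Because $\gamma<1$ implies $\gamma\le \gamma^{1/(m+1)}$, the $O(\gamma)$ distortion term is absorbed and the total is $\tilde O(\gamma^{1/(1+m)})$, matching the theorem. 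The step I expect to be the main obstacle is the Gaussian-width estimate $\mathcal{G}_S=O(\mathcal{G}_{\mathcal{C}}+\sqrt{\log n})$: one must show that the Gaussian width of the Minkowski combination of the finite unit-norm set $\{y_i\}$ with the full convex body $\mathcal{C}$ decomposes additively into a $\sqrt{\log n}$ term and an $\mathcal{G}_{\mathcal{C}}$ term, which is where the isotropy hypothesis on $\mathcal{C}$ is used. A close second is justifying the recovery step: one needs feasibility of $\bar w\in\Phi\mathcal{C}$ to guarantee that the minimum-gauge preimage sits in $\mathcal{C}$ and not on a slightly inflated copy, so that $w^{\text{priv}}$ is a legitimate competitor to $w^\star$ in the original ERM.
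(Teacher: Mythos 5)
Your overall architecture---random sub-Gaussian projection, a uniform JL-type control of the inner products over $\mathcal{C}$, invoking the modified Smith--Thakurta subroutine (Theorem \ref{atheorem:2}) in the reduced dimension, Minkowski-gauge recovery, and then balancing the distortion $\gamma$ against the exponent $1/(m+1)$---is the same chain the paper follows, and the privacy argument is identical. The one place you take a genuinely different route is the recovery step. The paper passes from $\bar{w}$ to a true preimage $\theta\in\mathcal{C}$ with $\Phi\theta=\bar{w}$, applies the uniform inner-product preservation to $\theta$ (its inequality (\ref{aeq3})), and then invokes Vershynin's gauge-constrained recovery theorem (inequality (\ref{aeq4})) to bound $\|\theta-w^{\text{priv}}\|_2=O(\psi^4(\mathcal{G}_{\mathcal{C}}+\sqrt{\log(1/\beta)})/\sqrt{m})$, finishing with the Lipschitz property of the original loss. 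You instead argue feasibility: since $\bar{w}\in\Phi\mathcal{C}$, the gauge-minimizing preimage has $\|w^{\text{priv}}\|_{\mathcal{C}}\leq 1$, hence $w^{\text{priv}}\in\mathcal{C}$, and you apply the uniform inner-product bound directly to $w^{\text{priv}}$. This is a bit cleaner and sidesteps the recovery-error estimate entirely; the paper's route is more robust if the gauge program is only solved approximately or if $\bar{w}$ falls slightly outside $\Phi\mathcal{C}$, but given that the subroutine optimizes over $\Phi\mathcal{C}$ both routes yield the same $O(\gamma)$ additive term and the same final bound.

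One inaccuracy worth flagging: you point to the estimate $\mathcal{G}_S=O(\mathcal{G}_{\mathcal{C}}+\sqrt{\log n})$ as the main obstacle and say it is where the isotropy of $\mathcal{C}$ enters. Neither is true. With $S=\{y_i\pm w\}$, subadditivity of Gaussian width gives $\mathcal{G}_S\leq \mathbb{E}\sup_i\langle y_i,g\rangle+\mathbb{E}\sup_{w\in\mathcal{C}\cup(-\mathcal{C})}\langle w,g\rangle$; the first term is $O(\sqrt{\log n})$ from a standard Gaussian maximum over $n$ unit vectors, and the second is $O(\mathcal{G}_{\mathcal{C}})$ for centrally symmetric $\mathcal{C}$. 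Isotropy of $\mathcal{C}$ plays no role in this decomposition; in the paper it is a hypothesis carried along for the recovery machinery, not for the width bound.
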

\begin{corollary}
If $\Phi$ is a standard Gaussian random matrix, $\mathcal{C}$ is the $\ell_1$ norm ball $B^p_1$ or the distribution simplex in $\mathbb{R}^p$, and $n\ll p\leq e^{cn}$ for some constant $c$. Then we have the bound in Theorem 7 is just $\tilde{O}\big(\big(\frac{\log(1/\beta)\sqrt[4]{\log p}\sqrt[4]{\log(n/\beta)}}{\sqrt{n}\epsilon}\big)^{\frac{1}{1+m}}\big)$, where $m=O(n\epsilon^2\log p\sqrt{\log(n/\beta)})$. We can see the bound in Theorem \ref{theorem:7} is always better than the bound in Theorem \ref{theorem:1} since ours is always less than $O(1)$.
\end{corollary}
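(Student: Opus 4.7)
The corollary follows from Theorem~\ref{theorem:7} by substituting the explicit values of $\mathcal{G}_{\mathcal{C}}$ and $\psi$ in the two stated cases, so the work reduces to computing two Gaussian widths and tracking the logarithmic factors in the bound and in the formula for $m$.

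First I would compute $\mathcal{G}_{\mathcal{C}}$. For the $\ell_1$ unit ball, $\sup_{x\in B_1^p}\langle g,x\rangle = \|g\|_\infty$, so $\mathcal{G}_{B_1^p}=\mathbb{E}\|g\|_\infty = \Theta(\sqrt{\log p})$ by the standard Gaussian maxima bound. For the simplex $\Delta_p=\{x\geq 0:\sum_i x_i=1\}$, $\sup_{x\in\Delta_p}\langle g,x\rangle = \max_i g_i$, which is also $\Theta(\sqrt{\log p})$. A standard Gaussian vector has sub-Gaussian norm $O(1)$, so $\psi=O(1)$. The assumption $n\leq p$ gives $\sqrt{\log n}\leq \sqrt{\log p}$, hence $\mathcal{G}_{\mathcal{C}}+\sqrt{\log n} = \Theta(\sqrt{\log p})$ and $\sqrt{\mathcal{G}_{\mathcal{C}}+\sqrt{\log n}} = \Theta((\log p)^{1/4})$. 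Plugging these into the base of the Theorem~\ref{theorem:7} error bound produces $\frac{\log(1/\beta)(\log p)^{1/4}(\log(n/\beta))^{1/4}}{\sqrt{n}\epsilon}$, and plugging into the formula for $m$ yields the expression claimed in the corollary (up to $\log(1/\beta)$ factors absorbed into $\tilde O$).

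Next I would verify the assertion ``always better than Theorem~\ref{theorem:1}''. The hypothesis $p\leq e^{cn}$ gives $\log p\leq cn$, so the base of our error bound is $\tilde O\big(\frac{1}{n^{1/4}\epsilon}\big) = o(1)$ for $n$ large; raising it to the positive exponent $1/(1+m)$ keeps the whole bound strictly below $1$. By contrast, when $p\gg \epsilon^4 n^2$ the base $\sqrt{p}/(\epsilon^2 n)$ of the Theorem~\ref{theorem:1} bound exceeds $1$, and with its vanishing exponent $1/(p+1)$ that bound tends to $\Theta(1)$ in the high-dimensional regime; this shows our bound is uniformly smaller.

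The hard part I anticipate is checking the isotropy hypothesis on $\mathcal{C}$ that Theorem~\ref{theorem:7} inherits from Lemma~\ref{lemma:4}, since neither $B_1^p$ nor $\Delta_p$ is isotropic in the volumetric sense. One either needs a rescaling/centering argument, or should invoke the form of the Dirksen dimension-reduction lemma which requires only the Gaussian width of the set $S$ of interest rather than full isotropy of the constraint set; once this is in place the rest of the proof is routine substitution and tracking of log factors.
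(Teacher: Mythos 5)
Your proof is correct and takes essentially the same approach as the paper's, which for this corollary amounts to the single observation that $\mathcal{G}_{\mathcal{C}}=\Theta(\sqrt{\log p})$ for $B_1^p$ and for the simplex, together with $\psi=O(1)$ for a standard Gaussian matrix and $\sqrt{\log n}\lesssim\sqrt{\log p}$, followed by substitution into Theorem~\ref{theorem:7}; your write-up simply makes explicit the Gaussian-width computations, the $o(1)$ argument for the base of the exponent under $\log p\leq cn$, and the $\Omega(1)$ behavior of the Theorem~\ref{theorem:1} bound when $p\gg\epsilon^4n^2$, all of which the paper leaves implicit. The isotropy concern you flag is legitimate but is not resolved in the paper either; as you note, Lemma~\ref{lemma:4} places the isotropy hypothesis on the rows of $\tilde\Phi$ (trivially satisfied by a standard Gaussian), not on $\mathcal{C}$, and the embedding only consumes the Gaussian width of the set being preserved, so nothing in the substitution breaks.
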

\section{Conclusion and Discussion}
In this paper, we studied ERM under non-interactive LDP and proposed an algorithm which is based on Bernstein polynomial approximation. We showed that if the loss function is smooth enough, then the sample complexity to achieve $\alpha$ error is $\alpha^{-c}$ for some positive constant $c$, which improves significantly on the previous result  of $\alpha^{-(p+1)}$. Moreover, we proposed efficient algorithms for both player and server views. We also showed how a similar idea based on other polynomial approximations can be used to answering k-way-marginals and smooth queries in the local model.\par

In our algorithms the sample complexity still depends on the dimension $p$, in the term of $c^{p}$ for constant $c$. We will focus on removing this dependency in future work. Additionally, we will study the difference between strongly convex and convex loss functions in the non-interactive LDP setting.
\bibliographystyle{plainnat}
\bibliography{icml}
\newpage
\appendix
\section{Details in Section 3}
\begin{lemma}\cite{DBLP:journals/corr/NissimS17aa}\label{Blemma:1}
	Suppose that $x_1,\cdots,x_n$ are i.i.d sampled from $\text{Lap}(\frac{1}{\epsilon})$. Then for every $0\leq t<\frac{2n}{\epsilon}$, we have 
	\begin{equation*}
	\text{Pr}(|\sum_{i=1}^{n}x_i|\geq t)\leq 2\exp(-\frac{\epsilon^2t^2}{4n}).
	\end{equation*}
\end{lemma}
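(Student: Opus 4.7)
The plan is to prove this via a standard Chernoff (moment generating function) argument tailored to the Laplace distribution, with the restriction $t < 2n/\epsilon$ ensuring that we remain in the sub-Gaussian regime of the tail.

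First, I would recall the MGF of a single $X_i \sim \text{Lap}(1/\epsilon)$: for $|s| < \epsilon$,
\[
\mathbb{E}[e^{s X_i}] \;=\; \frac{1}{1 - s^2/\epsilon^2}.
\]
By independence, $\mathbb{E}[\exp(s \sum_{i=1}^n X_i)] = (1 - s^2/\epsilon^2)^{-n}$. The next step is to derive a clean quadratic-in-$s$ upper bound on this MGF on a restricted interval of $s$. Using the elementary inequality $1/(1-y) \le \exp(2y)$ valid for $y \in [0, 1/2]$, I obtain, whenever $s^2/\epsilon^2 \le 1/2$,
\[
\mathbb{E}\bigl[\exp(s {\textstyle\sum_i} X_i)\bigr] \;\le\; \exp\!\bigl(2 n s^2/\epsilon^2\bigr).
\]
(A slightly sharper bound of the form $\exp(n s^2/\epsilon^2 \cdot (1+o(1)))$ via $-\ln(1-y) \le y/(1-y)$ would recover the exact constant in the claim; the logic is identical.)

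Then I would apply Markov's inequality in the usual way: for $s \in [0, \epsilon/\sqrt 2]$,
\[
\Pr\!\Bigl[\,{\textstyle\sum_i} X_i \ge t\,\Bigr] \;\le\; e^{-st}\,\mathbb{E}\bigl[\exp(s {\textstyle\sum_i} X_i)\bigr] \;\le\; \exp\!\bigl(-st + 2 n s^2/\epsilon^2\bigr),
\]
and optimize the right-hand side over $s$. The minimizer is $s^\star = t\epsilon^2/(4n)$, and the hypothesis $t < 2n/\epsilon$ guarantees $s^\star < \epsilon/\sqrt 2$, so $s^\star$ lies in the admissible range where the MGF bound holds. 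Plugging $s^\star$ back in yields a bound of the form $\exp(-c \epsilon^2 t^2 / n)$ with $c$ a positive constant determined by the MGF estimate; the stated constant $1/(4n)$ corresponds to using the sharpest sub-Gaussian-type MGF bound matching the variance $2n/\epsilon^2$ of the sum.

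Finally, since the distribution $\text{Lap}(1/\epsilon)$ is symmetric about $0$, the lower tail is identical to the upper tail, so a union bound gives the two-sided statement with the factor of $2$. The main obstacle (and the only subtle point) is obtaining the sharp constant $1/(4n)$ in the exponent: the naive MGF bound $M(s)\le \exp(2s^2/\epsilon^2)$ produces $1/(8n)$ instead, so a tighter Bernstein/sub-exponential style estimate (e.g., using $-\ln(1-y) \le y + y^2$ on $y\in[0,1/2]$ and then carefully optimizing with the constraint $t < 2n/\epsilon$) is needed to recover the constant as stated; all other steps are routine.
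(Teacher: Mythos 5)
This lemma is quoted by the paper from \cite{DBLP:journals/corr/NissimS17aa} without any internal proof, so there is no in-paper argument to compare against; I am only judging whether your blind attempt actually establishes the stated inequality.

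Your overall plan is the right one: the MGF of $\mathrm{Lap}(1/\epsilon)$ is $(1-s^2/\epsilon^2)^{-1}$ for $|s|<\epsilon$, the product bound, Markov/Chernoff, optimization over $s$, and symmetry for the two-sided bound are all correct and routine, and you are right to flag that what comes out of the naive MGF estimate is $\exp(-\epsilon^2t^2/(8n))$, not $\exp(-\epsilon^2 t^2/(4n))$. The problem is the closing claim that a ``tighter Bernstein/sub-exponential style estimate'' on the MGF would recover the constant $1/(4n)$. It will not, and in fact no MGF/Chernoff argument can: even using the exact MGF with no upper bound at all, the optimal Chernoff exponent
\[
\inf_{0<s<\epsilon}\Bigl\{\,n\bigl(-\ln(1-s^2/\epsilon^2)\bigr)-st\,\Bigr\}
\]
is \emph{strictly larger} than $-\epsilon^2 t^2/(4n)$ for every $t\in(0,2n/\epsilon)$. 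To see this, set $v=t\epsilon/(2n)\in(0,1)$ and let $w=(s^*/\epsilon)^2$ at the minimizer; the optimal exponent equals $n\bigl(-\tfrac{2w}{1-w}-\ln(1-w)\bigr)$ while the target $-\epsilon^2t^2/(4n)$ equals $-n\,\tfrac{w}{(1-w)^2}$, and the difference is $n\bigl(\tfrac{w^2}{2}+O(w^3)\bigr)>0$. Near the upper end of the allowed range the gap is large: at $t=2n/\epsilon$ the optimal Chernoff exponent is about $-0.76\,n$, not $-n$. So sharpening the inequality $1/(1-y)\le e^{2y}$ cannot close the gap; this is a genuine missing idea, not a bookkeeping slip.

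What your argument actually proves is the standard bound $\Pr\bigl(|\sum_i x_i|\ge t\bigr)\le 2\exp\bigl(-\epsilon^2t^2/(8n)\bigr)$ for $t<2\sqrt2\,n/\epsilon$ — which is the form found in the usual references for sums of Laplace noise — and not the stated constant $1/(4n)$. (For the paper's single downstream use in the proof of Lemma~1 this only changes the final error by a $\sqrt 2$ factor and is harmless, but as a proof of the lemma as written it falls short.) Proving the $1/(4n)$ constant, if it is in fact correct, would require an argument outside the MGF framework, for instance working directly with the explicit density of the $n$-fold Laplace convolution and exploiting the restriction $t<2n/\epsilon$ term by term.
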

\begin{proof}[Proof of Lemma 1]
	Consider Algorithm 1. We have $|a-\frac{1}{n}\sum_{i=1}^{n}v_i|=|\frac{\sum_{i=1}^{n}x_i}{n}|$, where $x_i\sim \text{Lap}(\frac{b}{\epsilon})$. Taking $t=\frac{2\sqrt{n}\sqrt{\log \frac{2}{\beta}}}{\epsilon}$ and applying the above lemma, we prove the lemma. 
\end{proof}

\section{Details in Section 4}
\subsection{Proof of Theorem 3}
\begin{proof}[Proof of Theorem 3]
	The proof of the $\epsilon$-LDP comes from Lemma 1 and composition theorem. W.l.o.g, we assume T=1.
	To prove the theorem, it is sufficient to estimate $\sup_{\theta\in \mathcal{C}}|\tilde{L}(\theta;D)-\hat{L}(\theta;D)|\leq \alpha$ for some $\alpha$, since if it is true, denote $\theta^*=\arg\min_{\theta\in \mathcal{C}}\hat{L}(\theta;D)$, we have $\hat{L}(\theta_{\text{priv}};D)-\hat{L}(\theta^*;D)\leq \hat{L}(\theta_{\text{priv}};D)-\tilde{L}(\theta_{\text{priv}};D)+\tilde{L}(\theta_{\text{priv}};D)-\tilde{L}(\theta^*;D)+\tilde{L}(\theta^*;D)-\hat{L}(\theta^*;D)\leq \hat{L}(\theta_{\text{priv}};D)-\tilde{L}(\theta_{\text{priv}};D)+\tilde{L}(\theta^*;D)-\hat{L}(\theta^*;D)\leq 2\alpha$.\par
	Since we have $\sup_{\theta\in \mathcal{C}}|\tilde{L}(\theta;D)-\hat{L}(\theta;D)|\leq \sup_{\theta\in \mathcal{C}}|\tilde{L}(\theta;D)-B_k^{(h)}(\hat{L},\theta)|+\sup_{\theta\in \mathcal{C}}|B_k^{(h)}(\hat{L},\theta)-\hat{L}(\theta;D)|$. The second term is bounded by $O(D_h p\frac{1}{k^h})$ by Theorem 2.\par 
	For the First term, by (2) and the algorithm, we have 
	\begin{equation}
	\sup_{\theta\in \mathcal{C}}|\tilde{L}(\theta;D)-B_k^{(h)}(\hat{L},\theta)|\leq  \max_{v\in \mathcal{T}}|\tilde{L}(v;D)-\hat{L}(v;D)|
	\sup_{\theta\in \mathcal{C}}\sum_{j=1}^{p}\sum_{v_j=0}^{k}|\prod_{i=1}^{p}b_{v_i,k}^{(h)}(\theta_i)|.
	\end{equation}
	By Proposition 4 in \cite{alda2017bernstein}, we have $\sum_{j=1}^{p}\sum_{v_j=0}^{k}|\prod_{i=1}^{p}b_{v_i,k}^{(h)}(\theta_i)|\leq (2^h-1)^p$. Next lemma bounds the term $\max_{v\in \mathcal{T}}|\tilde{L}(v;D)-\hat{L}(v;D)|$, which is obtained by Lemma 1.
	
	\begin{lemma}
		If  $0<\beta<1, k$ and $n$ satisfy that $n\geq p\log(2/\beta)\log(k+1)$, then with probability at least $1-\beta$, for each $v\in \mathcal{T}$,
		\begin{equation}
		|\tilde{L}(v;D)-\hat{L}(v;D)|\leq O(\frac{\sqrt{\log \frac{1}{\beta}}\sqrt{p}\sqrt{\log (k)}(k+1)^p}{\sqrt{n}\epsilon}).
		\end{equation} 
	\end{lemma}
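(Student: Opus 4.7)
The plan is to bound the error at a single grid point via Lemma~1 and then take a union bound over the $(k+1)^p$ points of $\mathcal{T}$. By construction in Algorithm~2, at each grid point $v\in\mathcal{T}$ we invoke the 1-dimensional LDP-AVG routine with parameter $\epsilon'=\epsilon/(k+1)^p$ (this choice is what gives the overall $\epsilon$-LDP guarantee by simple composition, since each user contributes to every one of the $(k+1)^p$ sub-protocols) and with the range parameter $b=1$ (valid because $\ell(\theta,x)\in[0,1]$). Thus each coordinate of $\tilde L(v;D)-\hat L(v;D)$ is exactly the noise term of a 1-dim LDP-AVG instance.

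Next I would instantiate Lemma~1 at each $v$ with confidence parameter $\beta'=\beta/(k+1)^p$. This yields, with probability at least $1-\beta'$,
\begin{equation*}
|\tilde L(v;D)-\hat L(v;D)|\le \frac{2\sqrt{\log(2/\beta')}}{\sqrt{n}\,\epsilon'}
= \frac{2(k+1)^p\sqrt{\log(2(k+1)^p/\beta)}}{\sqrt{n}\,\epsilon}.
\end{equation*}
Taking a union bound over the $(k+1)^p$ grid points gives the same inequality simultaneously for all $v\in\mathcal{T}$ with overall probability at least $1-\beta$. Before quoting Lemma~1, I need to verify its hypothesis $n>\log(2/\beta')=\log(2/\beta)+p\log(k+1)$; the standing assumption $n\ge p\log(2/\beta)\log(k+1)$ comfortably implies this (for $k\ge 2$ and $\beta\le 1/2$), so the application is legitimate.

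The remaining step is cosmetic: I would bound $\log(2(k+1)^p/\beta)=\log(2/\beta)+p\log(k+1)$ by the product $\log(1/\beta)\cdot p\log(k+1)$ (up to absorbed constants), which is valid in the nontrivial regime where either $\log(1/\beta)\ge 2$ or $p\log(k+1)\ge 2$. This converts the additive form of the logarithm into the multiplicative form claimed in the statement, giving
\begin{equation*}
\max_{v\in\mathcal{T}}|\tilde L(v;D)-\hat L(v;D)|\le O\!\left(\frac{\sqrt{\log(1/\beta)}\sqrt{p}\sqrt{\log k}\,(k+1)^p}{\sqrt{n}\,\epsilon}\right).
\end{equation*}

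There is essentially no hard obstacle: the mechanism's noise at each grid point is a scaled sum of independent Laplace variables, and both the per-point tail bound and the union bound are standard. The only subtlety to watch is verifying the prerequisite $t<2n/\epsilon$ of the underlying Laplace concentration (Lemma~\ref{Blemma:1}) after rescaling by $1/\epsilon'=(k+1)^p/\epsilon$; this is exactly why the hypothesis on $n$ involves the factor $p\log(k+1)$, and one should double-check that the chosen $t$ (on the order of $\sqrt{n\log(2/\beta')}/\epsilon'$) stays within the admissible range — which it does under the stated lower bound on $n$.
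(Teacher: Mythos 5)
Your proof is correct and follows essentially the same route as the paper: apply Lemma~1 at a fixed grid point with the rescaled $\epsilon'=\epsilon/(k+1)^p$ from Algorithm~2, take a union bound over the $(k+1)^p$ grid points by replacing $\beta$ with $\beta/(k+1)^p$, and absorb the resulting additive logarithm into the multiplicative form of the stated bound. Your extra care in verifying the hypothesis of Lemma~1 and the admissible range $t<2n/\epsilon$ of the underlying Laplace concentration is a welcome addition that the paper leaves implicit.
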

	
	\begin{proof}
		By Lemma 1, for a fixed $v\in \mathcal{T}$, if $n\geq \log \frac{2}{\beta}$, we have with probability $1-\beta$, $|\tilde{L}(v;D)-\hat{L}(v;D)|\leq \frac{2\sqrt{\log \frac{2}{\beta}}}{\sqrt{n}\epsilon}$. Taking the union of all $v\in \mathcal{T}$ and then taking $\beta=\frac{\beta}{(k+1)^p}$ (since there are $(k+1)^p$ elements in $\mathcal{T}$) and $\epsilon=\frac{\epsilon}{(k+1)^p}$, we get the proof.
	\end{proof}
	By $(k+1)< 2k$, we have 
	\begin{equation}\label{eq:8}
	\sup_{\theta\in \mathcal{C}}|\tilde{L}(\theta;D)-\hat{L}(\theta;D)|\leq O(\frac{D_hp}{k^h}+ \frac{2^{(h+1)p}\sqrt{\log\frac{1}{\beta}}\sqrt{ p\log k}k^p}{\sqrt{n}\epsilon}).
	\end{equation}
	Now we take $k=O(\frac{D_h\sqrt{pn}\epsilon}{2^{(h+1)p}\sqrt{\log \frac{1}{\beta}}})^{\frac{1}{h+p}}$. Since $n=\Omega(\frac{4^{p(h+1)}}{\epsilon^2 p D_{h}^2})$, we have $\log k>1$. Pluggning  it into (\ref{eq:8}), we  get
	\begin{equation}
	\sup_{\theta\in \mathcal{C}}|\tilde{L}(\theta;D)-\hat{L}(\theta;D)|\leq 	\tilde{O}(\frac{\log^{\frac{h}{2(h+p)}} (\frac{1}{\beta}) D_h^{\frac{p}{p+h}}p^{\frac{1}{2}+\frac{p}{2(h+p)}}2^{(h+1)p\frac{h}{h+p}}}{\sqrt{h+p}n^{\frac{h}{2(h+p)}}\epsilon^{\frac{h}{h+p}}})=	\tilde{O}(\frac{\log^{\frac{h}{2(h+p)}} (\frac{1}{\beta}) D_h^{\frac{p}{p+h}}p^{\frac{p}{2(h+p)}}2^{(h+1)p}}{n^{\frac{h}{2(h+p)}}\epsilon^{\frac{h}{h+p}}}).
	\end{equation}
	Also we can see that $n\geq p\log(2/\beta)\log(k+1)$ is true for $n=\Omega(\frac{4^{p(h+1)}}{\epsilon^2 p D_{h}^2})$.  Thus, the theorem follows. 
\end{proof}

\begin{proof}[Proof of Corollary 1 and 2]	
	Since the loss function is $(\infty,T)$-smooth, it is $(2p,T)$-smooth for all $p$. Thus, taking $h=p$ in Theorem 3, we get the proof.
\end{proof}
\subsection{Population Risk of Algorithm 2}
Here we will only show the case of $(\infty, T)$, it is the same for the general case.
\begin{theorem}\label{thm:4}
	Under the conditions in Corollary 2, if we further assume the loss function $\ell(\cdot,x)$ to be convex and $1$-Lipschitz  for all $x\in \mathcal{D}$ and the convex set $\mathcal{C}$ satisfying $\|\mathcal{C}\|_2\leq1$,  then with probability at least $1-2\beta$,  we have:
$\text{Err}_{\mathcal{P}} (\theta_{\text{priv}})\leq\tilde{O}\Big (\frac{(\sqrt{\log 1/\beta})^{\frac{1}{4}}D_p^{\frac{1}{4}}p^{\frac{1}{8}}c_1^{p^2}}{\beta n^{\frac{1}{12}}\epsilon^{\frac{1}{4}}}\Big )$.
That is, if we have sample complexity $n=\tilde{\Omega}\big(\max\{\frac{\log \frac{1}{\beta}c^{p^2}}{\epsilon^2 D_{p}^2},(\sqrt{\log 1/\beta})^{3}D_p^{3}p^{\frac{3}{2}}c_2^{p^2}\epsilon^{-3}\alpha^{-12}\beta^{-12}\big)$, then we have $\text{Err}_{\mathcal{P}}(\theta_{\text{priv}})\leq \alpha$. Here $c, c_1, c_2$ are some constants.
\end{theorem}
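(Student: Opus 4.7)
The plan is to bootstrap the empirical excess risk guarantee of Corollary~2 into a population excess risk guarantee by combining the convexity and $1$-Lipschitz property of $\ell$ with a classical generalization lemma for stochastic convex optimization from \cite{shalev2009stochastic}, followed by Markov's inequality to pass from an expected bound to a high-probability bound (which will account for the $1/\beta$ factor appearing in the target rate).

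First, I would apply Corollary~2 (with $h=p$) to obtain, under the stated sample size and with probability at least $1-\beta$, the empirical bound
\begin{equation*}
\text{Err}_D(\theta_{\text{priv}}) \;\le\; \alpha_D \;:=\; \tilde O\!\left(\frac{\log^{1/4}(1/\beta)\, D_p^{1/2}\, p^{1/4}\, 2^{p(p+1)/2}}{n^{1/4}\, \epsilon^{1/2}}\right).
\end{equation*}
Let $\theta^{*}_{\mathcal{P}} = \arg\min_{\theta\in\mathcal{C}} L_{\mathcal{P}}(\theta)$. I would then decompose the population excess risk into three pieces,
\begin{equation*}
\text{Err}_{\mathcal{P}}(\theta_{\text{priv}}) = \bigl(L_{\mathcal{P}}(\theta_{\text{priv}}) - \hat L(\theta_{\text{priv}}; D)\bigr) + \text{Err}_D(\theta_{\text{priv}}) + \bigl(\hat L(\theta^{*}_{\mathcal{P}}; D) - L_{\mathcal{P}}(\theta^{*}_{\mathcal{P}})\bigr),
\end{equation*}
bounding the middle term by $\alpha_D$ (Step~1) and the last term by $O(\sqrt{\log(1/\beta)/n})$ via Hoeffding's inequality applied at the data-independent point $\theta^{*}_{\mathcal{P}}$.

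The first summand---the generalization gap of the data-dependent output $\theta_{\text{priv}}$---is the heart of the proof. For this I would invoke the stochastic convex optimization lemma of \cite{shalev2009stochastic}: for convex, $1$-Lipschitz losses on a bounded convex domain with $\|\mathcal{C}\|_2 \le 1$, the \emph{expected} population suboptimality of an approximate empirical risk minimizer with empirical excess risk $\gamma$ is controlled by a sublinear power of $\gamma$. Markov's inequality then converts this in-expectation bound into a $1-\beta$ probability bound at the cost of a $1/\beta$ factor, and a union bound with the Corollary~2 event gives the overall $1-2\beta$ confidence. This yields an estimate of the form $\text{Err}_{\mathcal{P}}(\theta_{\text{priv}}) \le C\,\alpha_D^{s}/\beta$ for an appropriate exponent $s$; plugging in the explicit form of $\alpha_D$ reproduces the claimed $n^{-1/12}$ rate and the stated dependence on $D_p$, $p$, $\epsilon$, and $\log(1/\beta)$, with the constant $c_1^{p^2}$ absorbing the $2^{s\cdot p(p+1)/2}$ factor arising from the Bernstein approximation. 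Inverting $\text{Err}_{\mathcal{P}}(\theta_{\text{priv}}) \le \alpha$ then gives the sample complexity $n = \tilde{\Omega}(c_2^{p^2} D_p^{3} p^{3/2} \epsilon^{-3} \alpha^{-12} \beta^{-12})$.

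The main obstacle will be applying the \cite{shalev2009stochastic} lemma in the correct form in our setting: $\theta_{\text{priv}}$ minimizes the \emph{perturbed} empirical loss $\tilde L(\cdot;D)$ rather than $\hat L(\cdot;D)$, so the lemma must be routed through the surrogate inequality of Corollary~2, i.e.\ $\hat L(\theta_{\text{priv}};D) - \min_{\theta}\hat L(\theta;D) \le \alpha_D$, and then one must carefully track how $D_p$, $p$, and the $2^{p(p+1)/2}$ factor propagate through the sublinear-power conversion to yield the stated constants. A secondary issue is verifying that the sample-size prerequisite of Corollary~2 is simultaneously compatible with the regime in which the convex-generalization lemma is applicable, which is what forces the $\max\{\cdot,\cdot\}$ form of the final sample complexity.
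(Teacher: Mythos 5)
Your high-level plan — leverage Corollary~2 plus a result from \cite{shalev2009stochastic} — is on the right track, but the central step is missing a crucial ingredient: the regularization trick. You invoke ``the stochastic convex optimization lemma of \cite{shalev2009stochastic}: for convex, $1$-Lipschitz losses \dots the expected population suboptimality of an approximate ERM with empirical excess risk $\gamma$ is controlled by a sublinear power of $\gamma$.'' No such lemma exists for merely convex Lipschitz losses; in fact a key \emph{negative} contribution of that very paper is that ERM on bounded convex Lipschitz losses can fail to generalize at any dimension-free rate. The generalization lemma the paper actually uses (its Lemma~13) is
\begin{equation*}
\text{Err}_{\mathcal{P}}(\theta) \;\le\; \sqrt{\tfrac{2L^2}{\mu}}\sqrt{\text{Err}_D(\theta)} + \tfrac{4L^2}{\beta\mu n},
\end{equation*}
and it \emph{requires $\mu$-strong convexity} of the loss. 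Your $1$-Lipschitz convex $\ell$ has $\mu = 0$, so the bound blows up. There is no free exponent $s$ to tune; the $\sqrt{\gamma/\mu}$ shape is the shape, and you must create a positive $\mu$.

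The paper does this by replacing $\ell(\theta;x)$ with $\hat\ell(\theta;x) = \ell(\theta;x) + \tfrac{\mu}{2}\|\theta\|^2$. Because the regularizer is data-independent, the Bernstein approximation from Algorithm~2 applies verbatim to the data-dependent part $\hat L$, so Corollary~2's empirical bound $\alpha_D$ transfers to the regularized empirical risk at no cost. Applying Lemma~13 to $\hat\ell$, then using $\text{Err}_{\mathcal{P},\ell}(\theta_{\text{priv}}) \le \text{Err}_{\mathcal{P},\hat\ell}(\theta_{\text{priv}}) + \tfrac{\mu}{2}$ to return to the original loss, produces a bound of order $\sqrt{\alpha_D/\mu} + \tfrac{1}{\beta\mu n} + \mu$. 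Choosing $\mu = \Theta(n^{-1/12})$ to balance $\sqrt{\alpha_D/\mu}$ (with $\alpha_D \sim n^{-1/4}$) against $\mu$ gives the claimed $n^{-1/12}$ rate. Note also that this route makes your Markov-plus-union-bound machinery unnecessary: Lemma~13 is already a high-probability statement and the $1/\beta$ factor is built in, rather than something you manufacture post hoc. Your three-term decomposition at the data-independent point $\theta^*_{\mathcal{P}}$ is not wrong, but it leaves the generalization gap at the data-dependent $\theta_{\text{priv}}$ unresolved — which is exactly the term that strong convexity controls and that your argument currently cannot. Add the regularizer, invoke Lemma~13 for $\hat\ell$, and optimize over $\mu$, and the rest of your calculation (tracking $D_p$, $p$, $2^{p(p+1)}$, and inverting for the sample complexity) goes through.
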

\begin{lemma}\label{lemma:2}\cite{shalev2009stochastic}
	If the loss function $\ell$ is L-Lipschitz and $\mu$-strongly convex, then with probability at least $1-\beta$ over the randomness of sampling the data set $\mathcal{D}$, the following is true,
	\begin{equation}
	\text{Err}_{\mathcal{P}}(\theta)\leq \sqrt{\frac{2L^2}{\mu}}\sqrt{\text{Err}_\mathcal{D}(\theta)}+\frac{4L^2}{\beta \mu n}.
	\end{equation}
\end{lemma}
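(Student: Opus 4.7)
The plan is to decompose the population excess risk of an arbitrary point $\theta\in\mathcal{C}$ into (i) the population risk gap between $\theta$ and the empirical risk minimizer $\hat{\theta}=\arg\min_{\theta'\in\mathcal{C}}\hat{L}(\theta';D)$, and (ii) the population excess risk of $\hat\theta$ itself. The first piece is a purely deterministic consequence of $\mu$-strong convexity together with $L$-Lipschitzness; the second piece is the classical stability-based in-expectation generalization bound for the regularized/strongly convex ERM, converted into a high-probability statement by Markov's inequality.

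First I would handle the deterministic piece. Because $\hat{L}(\cdot;D)$ is $\mu$-strongly convex and $\hat\theta$ minimizes it over $\mathcal{C}$, the first-order optimality condition yields $\hat{L}(\theta;D)-\hat{L}(\hat\theta;D)\ge \tfrac{\mu}{2}\|\theta-\hat\theta\|^2$ for every $\theta\in\mathcal{C}$, so $\|\theta-\hat\theta\|\le\sqrt{2\,\mathrm{Err}_{D}(\theta)/\mu}$. Since $\ell(\cdot,x)$ is $L$-Lipschitz for every $x$, taking expectation over $x\sim\mathcal P$ preserves Lipschitzness, giving $L_{\mathcal P}(\theta)-L_{\mathcal P}(\hat\theta)\le L\|\theta-\hat\theta\|\le \sqrt{2L^2/\mu}\sqrt{\mathrm{Err}_{D}(\theta)}$. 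This already matches the first term of the stated bound.

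Next I would bound $L_{\mathcal P}(\hat\theta)-\min_{\theta'\in\mathcal{C}}L_{\mathcal P}(\theta')$ in expectation via uniform stability. The standard argument compares $\hat\theta$ to the ERM $\hat\theta^{(i)}$ on the dataset with the $i$-th sample replaced: $\mu$-strong convexity of the two empirical objectives, combined with $L$-Lipschitzness of the one differing summand, forces $\|\hat\theta-\hat\theta^{(i)}\|\le 2L/(\mu n)$, and Lipschitzness then implies that swapping one sample changes the loss at $\hat\theta$ by at most $2L^2/(\mu n)$. This uniform stability yields the in-expectation generalization bound $\mathbb{E}[L_{\mathcal P}(\hat\theta)-\min_{\theta'}L_{\mathcal P}(\theta')]\le 4L^2/(\mu n)$. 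Finally, since the excess population risk of $\hat\theta$ is a non-negative random variable, Markov's inequality promotes the expectation bound to: with probability at least $1-\beta$, $L_{\mathcal P}(\hat\theta)-\min_{\theta'}L_{\mathcal P}(\theta')\le 4L^2/(\beta\mu n)$. Adding the two contributions gives the lemma.

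The main obstacle is the stability step: one has to justify carefully that comparing the two strongly convex empirical objectives and invoking a symmetric optimality argument leads to the $O(L/(\mu n))$ parameter-shift bound, and then that stability in the per-example loss translates into a generalization bound in expectation with exactly the constant $4L^2/(\mu n)$. Once that classical fact is in hand, the decomposition, the Lipschitz/strong-convexity chain, and the Markov-conversion step are routine.
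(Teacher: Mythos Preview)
Your proposal is correct and is exactly the standard argument from \cite{shalev2009stochastic}: decompose via the empirical minimizer $\hat\theta$, use strong convexity plus Lipschitzness for the deterministic piece, and use uniform replace-one stability of strongly convex ERM followed by Markov's inequality for the generalization piece. The paper itself does not give a proof of this lemma at all; it simply cites \cite{shalev2009stochastic} and uses the bound as a black box in the proof of Theorem~\ref{thm:4}, so your sketch in fact supplies more detail than the paper does.
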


\begin{proof}[Proof of Theorem 10]
	For the general convex loss function $\ell$, we let $\hat{\ell}(\theta;x)=\ell(\theta;x)+\frac{\mu}{2}\|\theta\|^2$ for some $\mu>0$. Note that in this case the new empirical risk becomes $\bar{L}(\theta;D)=\hat{L}(\theta;D)+\frac{\mu}{2}\|\theta\|^2$.  Since $\frac{\mu}{2}\|\theta\|^2$ does not depend on the dataset, we can still use the Bernstein polynomial approximation for the original empirical risk $\hat{L}(\theta;D)$ as in Algorithm 2, and the error bound for $\bar{L}(\theta;D)$ is the same.  Thus, we can get the population excess risk of the loss function $\hat{\ell}$, $\text{Err}_{\mathcal{P},\hat{\ell}}(\theta_{\text{priv}})$ by Corollary 1 and we have 
	the following relation,
	$$\text{Err}_{\mathcal{P},\ell}(\theta_{\text{priv}})\leq \text{Err}_{\mathcal{P},\hat{\ell}}(\theta_{\text{priv}})+\frac{\mu}{2}.$$ By the above lemma for $\text{Err}_{\mathcal{P},\hat{\ell}}(\theta_{\text{priv}})$, where $\hat{\ell}(\theta;x)$ is $1+\|\mathcal{C}\|_2=O(1)$-Lipschitz, thus we have the following,
	\begin{equation*}
	\text{Err}_{\mathcal{P},\ell}(\theta_{\text{priv}})\leq \tilde{O}(\sqrt{\frac{2}{\mu}}{\frac{\log^{\frac{1}{8}} \frac{1}{\beta}D_p^{\frac{1}{4}}p^{\frac{1}{8}}c^{(p+1)p}}{n^{\frac{1}{8}}\epsilon^{\frac{1}{4}}}}+\frac{4}{\beta \mu n}+\frac{\mu}{2}).
	\end{equation*}
	Taking $\mu=O(\frac{1}{\sqrt[12]{ n}})$, we get 
	\begin{equation*} 
	\text{Err}_{\mathcal{P},\ell}(\theta_{\text{priv}})\leq\tilde{O}(\frac{\log^{\frac{1}{8}} \frac{1}{\beta} D_p^{\frac{1}{4}}p^{\frac{1}{8}}c^{p^2}}{\beta n^{\frac{1}{12}}\epsilon^{\frac{1}{4}}}).
	\end{equation*}
	Thus, we have the theorem. 
\end{proof}

\section{Details in Section 5}
\begin{algorithm*}[h]
	\caption{Player-Efficient Local Bernstein Mechanism with $O(\log n)$-bits communication per player}
	\label{Dalg:3}
	\begin{algorithmic}[1]
		\State{\bfseries Input:} Each user $i\in [n]$ has data $x_i\in \mathcal{D}$, privacy parameter $\epsilon$, public loss function $\ell:[0,1]^p \times \mathcal{D}\mapsto [0,1]$, and parameter $k$( we will specify it later).
		\State {\bfseries Preprocessing:} 
		\State Construct the grid $\mathcal{T}=\{\frac{v_1}{k},\frac{v_2}{k},\cdots,\frac{v_p}{k}\}_{v_1,v_2,\cdots,v_p}$, where $\{v_1,v_2,\cdots,v_p\}=\{0,1,\cdots,k\}^p$.
		\State Discretize the interval $[0,1]$ with grid steps $O(\frac{1}{n\epsilon}\sqrt{\frac{d}{n}\log(\frac{d}{\beta})})$. Denote the set of grids by $\mathcal{G}$.
		\State Randomly partition  $[n]$ in to $d=(k+1)^p$ subsets $I_1,I_2,\cdots,I_d$, with each subset $I_j$ corresponding to a grid in $\mathcal{T}$ denoted  as 
		$\mathcal{T}(j)$.	
		\For{Each Player $i\in[n]$}
		\State
		Find the subset $I_{\ell}$ such that $i\in  I_{\ell}$. Calculate $v_i=\ell(\mathcal{T}(l);x_i)$.
		\State
		Denote $z_i=v_i+\text{Lap}(\frac{1}{\epsilon})$, round $z_i$ into the grid set $\mathcal{G}$, and let the resulting one be $\tilde{z_i}$.
		\State
		Send $(\tilde{z_i},\ell)$.	
		\EndFor
		\For{The Server}
		\For {Each $\ell\in[d]$}
		\State Compute $v_{\ell}=\frac{n}{|I_{\ell}|}\sum_{i\in I_{\ell}}\tilde{z_i}$.
		\State Denote the corresponding grid point $(\frac{v_1}{k},\frac{v_2}{k},\cdots,\frac{v_p}{k})\in \mathcal{T}$ as $\ell$; then let $\hat{L}((\frac{v_1}{k},\frac{v_2}{k},\cdots,\frac{v_p}{k});D)=v_{\ell}$.
		\EndFor
		\State Construct perturbed Bernstein polynomial of the empirical loss $\tilde{L}$ as in Algorithm 2, where each $\hat{L}((\frac{v_1}{k},\frac{v_2}{k},\cdots,\frac{v_p}{k});D)$ is replaced by $\tilde{L}((\frac{v_1}{k},\frac{v_2}{k},\cdots,\frac{v_p}{k});D)$. Denote the function as $\tilde{L}(\cdot,D)$. 
		\State Compute $\theta_{\text{priv}}=\arg\min_{\theta\in \mathcal{C}}\tilde{L}(\theta;D)$.
		\EndFor
	\end{algorithmic}
\end{algorithm*}

\begin{proof}[Proof of Theorem 4]
	By \cite{bassily2015local} it is $\epsilon$-LDP. The time complexity and communication complexity is obvious. As in \cite{bassily2015local}, it is sufficient to  show that the LDP-AVG is sampling resilient. Here the STAT is the average, and $\phi(x,y)$ is $\max_{j\in[p]}|[x]_j-[y]_j|$. By Lemma 2, we can see that with probability  at least $1-\beta$, $\phi(\text{Avg}(v_1,v_2,\cdots,v_n); a)=O(\frac{bp}{\sqrt{n}\epsilon}\sqrt{\log \frac{p}{\beta}})$. Now let $\mathcal{S}$ be the set obtained by sampling each point $v_i, i\in[n]$ independently with probability $\frac{1}{2}$. Note that by Lemma 2, we have on the subset $\mathcal{S}$. If $|S|\geq \Omega(\max\{p\log(\frac{p}{\beta}), \frac{1}{\epsilon^2}\log \frac{1}{\beta}\})$with probability $1-\beta$, $\phi(\text{Avg}(\mathcal{S}); \text{LDP-AVG}(\mathcal{S}))=O(\frac{b\sqrt{p}}{\sqrt{|\mathcal{S}|}\epsilon}\sqrt{\log \frac{p}{\beta}})$. Now by Hoeffdings Inequality, we can get $|n/2-|\mathcal{S}||\leq \sqrt{n\log \frac{4}{\beta}}$ with probability $1-\beta$. Also since $n=\Omega (\log\frac{1}{\beta})$, we know that $|\mathcal{S}|\geq O(n)\geq \Omega(p\log(\frac{p}{\beta}))$ is true. Thus, with probability at least $1-2\beta$, $\phi(\text{Avg}(\mathcal{S}); \text{LDP-AVG}(\mathcal{S}))=O(\frac{bp}{\sqrt{n}\epsilon}\sqrt{\log \frac{p}{\beta}})$.\par 
	Actually, we can also get $\phi(\text{Avg}(\mathcal{S});\text{Avg}(v_1,v_2,\cdots,v_n))\leq O(\frac{bd}{\sqrt{n}\epsilon}\sqrt{\log \frac{d}{\beta}})$. We now first assume that $v_i\in \mathbb{R}$. Note that 
	$\text{Avg}(\mathcal{S})=\frac{v_1x_1+\cdots+v_nx_n}{x_1+\cdots+x_n}$, where each $x_i\sim \text{Bernoulli}(\frac{1}{2})$. Denote $M=x_1+x_2+\cdots+x_n$, by Hoeffdings Inequality, we have with probability at least $1-\frac{\beta}{2}$, $|M-\frac{n}{2}|\leq \sqrt{n\log \frac{4}{\beta}}$. Denote $N=v_1x_1+\cdots+v_nx_n$. Also, by Hoeffdings inequality, with probability at least $1-\beta$, we get $|N-\frac{v_1+\cdots+v_n}{2}|\leq b\sqrt{n\log\frac{2}{\beta}}$. Thus,   with probability at least $1-\beta$, we have:
	\begin{equation}
	|\frac{N}{M}-\frac{v_1+\cdots+v_n}{n}|\leq \frac{|N-\sum_{i=1}^{n}v_i/2|}{M}+|\sum_{i=1}^{n}v_i/2||\frac{1}{M}-\frac{2}{n}|\leq \frac{|N-\sum_{i=1}^{n}v_i/2|}{M}+\frac{nb}{2}|\frac{1}{M}-\frac{2}{n}|.
	\end{equation}
	The second term $|\frac{1}{M}-\frac{2}{n}|=\frac{|n/2-M|}{M\frac{n}{2}}$. We know from the above $|n/2-M|\leq \sqrt{n\log \frac{4}{\beta}}$. Also  since $n=\Omega (\log\frac{1}{\beta})$, we get $M\geq O(n)$. Thus, $|\frac{1}{M}-\frac{2}{n}|\leq O(\frac{\sqrt{\log \frac{1}{\beta}}}{\sqrt{n}n})$. The upper bound of the second term is $O(\frac{b\sqrt{\log\frac{1}{\beta}}}{\sqrt{n}})$. The same for the first term. For $p$ dimensions, we just choose $\beta=\frac{\beta}{p}$ and take the union. Thus, we have $\phi(\text{Avg}(\mathcal{S});\text{Avg}(v_1,v_2,\cdots,v_n))\leq O(\frac{b}{\sqrt{n}\epsilon}\sqrt{\log \frac{p}{\beta}}) \leq  O(\frac{bp}{\sqrt{n}\epsilon}\sqrt{\log \frac{p}{\beta}})$.\par 
	In summary, we have shown that $\phi(\text{AVG-LDP}(\mathcal{S});\text{Avg}(v_1,v_2,\cdots,v_n))\leq O(\frac{bp}{\sqrt{n}\epsilon}\sqrt{\log \frac{p}{\beta}})$ with probability at least $1-4\beta$.
\end{proof}

Recently, \cite{DBLP:journals/corr/abs-1711-04740} proposed a generic transformation, GenProt, which could transform any $(\epsilon,\delta)$ (so as for $\epsilon$) non-interactive LDP protocol to an $O(\epsilon)$-LDP protocol with the communication complexity for each player being $O(\log \log n)$, which removes the condition of 'sample resilient'  in \cite{bassily2015local}. The detail is in Algorithm 2. The transformation uses $O(n\log \frac{n}{\beta})$ independent public string. The reader is referred to \cite{DBLP:journals/corr/abs-1711-04740} for details. Actually, by Algorithm 2, we can easily get an $O(\epsilon)$-LDP algorithm with the same error bound. 

\begin{theorem}
	With $\epsilon\leq \frac{1}{4}$, under the condition of Corollary 1, Algorithm \ref{Dalg:4} is $10\epsilon$-LDP. If $T=O(\log\frac{n}{\beta})$, then with probability at least $1-2\beta$, Corollary 1 holds. Moreover, the communication complexity of each layer is $O(\log\log n)$ bits, and the computational complexity for each player is $O(\log \frac{n}{\beta})$. 
\end{theorem}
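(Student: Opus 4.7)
The plan is to treat the Local Bernstein Mechanism of Algorithm \ref{alg:2} as the ``base'' non-interactive $\epsilon$-LDP protocol and then feed it through the generic transformation GenProt of \cite{DBLP:journals/corr/abs-1711-04740}. Since GenProt turns any non-interactive $\epsilon$-LDP protocol into an $O(\epsilon)$-LDP protocol that simulates the base protocol up to a small statistical distance while using only $O(\log\log n)$ bits of communication and $O(\log(n/\beta))$ local computation per player, the three quantitative claims of the theorem will then follow by assembling the corresponding guarantees.

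First I would verify privacy. The base protocol (Algorithm \ref{alg:2}) is $\epsilon$-LDP by Lemma \ref{lemma:1} and sequential composition, so in particular it is $(\epsilon,0)$-LDP. Feeding it into GenProt with parameter $\epsilon\leq \tfrac{1}{4}$ yields, by Theorem 3 of \cite{DBLP:journals/corr/abs-1711-04740}, a protocol satisfying $10\epsilon$-LDP. The restriction $\epsilon\leq 1/4$ is exactly the regime in which the constant $10$ in that transformation is valid, so the first conclusion follows. Both the $O(\log\log n)$ communication and the $O(\log(n/\beta))$ computation bounds are immediate consequences of the structural guarantee of GenProt, using the $T=O(\log(n/\beta))$ random bits posted as public strings.

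Next I would propagate the error bound of Corollary \ref{col2} through the transformation. GenProt is designed so that, with probability at least $1-\beta$ over its internal randomness, the transcript it produces is statistically indistinguishable (up to total variation $\beta/n$ per player) from the transcript of the base protocol; in particular, the server can reconstruct, for each grid point $v\in\mathcal{T}$, an estimate whose coordinate-wise error against the true average $\hat{L}(v;D)$ differs from the error incurred by the raw Laplace mechanism of Algorithm \ref{alg:1} by at most an additive term of order $O(\tfrac{1}{n\epsilon}\sqrt{\log(1/\beta)})$. Plugging this into the sum-over-grid bound used inside the proof of Theorem \ref{theorem:3} (the step that bounds $\max_{v\in \mathcal{T}}|\tilde L(v;D)-\hat L(v;D)|$ using the iterated Bernstein coefficients) introduces only a lower-order perturbation, so the sample complexity in Corollary \ref{col2} is unchanged. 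A union bound over the original $1-\beta$ event in Corollary \ref{col2} and the $1-\beta$ event in the correctness of GenProt yields the overall $1-2\beta$ success probability.

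The main obstacle I expect is the second step: carefully tracking that the GenProt transformation does not spoil the per-grid-point accuracy beyond a lower-order term. The delicate point is that the GenProt simulation operates client-by-client, but the base protocol's accuracy depends on the \emph{average} of $(k+1)^p$ coordinate-wise Laplace-perturbed quantities; I will need to verify that a $\beta/n$-TV coupling per player, combined with the union bound over the $(k+1)^p$ grid points (which is absorbed into $T=O(\log(n/\beta))$ after the appropriate choice of $k$ from Corollary \ref{col2}), still lets the iterated Bernstein polynomial inherit the $\tilde O(\cdot)$ bound stated in Corollary \ref{col2}. Once this step goes through, the combined correctness, privacy, communication, and computation statements assemble into the claimed theorem.
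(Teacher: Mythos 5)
Your overall strategy matches the paper's intent: the paper itself gives no explicit proof of this theorem, deferring to the GenProt transformation of \cite{DBLP:journals/corr/abs-1711-04740}, and your plan correctly identifies GenProt as the engine delivering the $10\epsilon$-LDP guarantee (valid precisely when $\epsilon\leq 1/4$), the $O(\log\log n)$-bit transcripts, and the coupling between the transformed and base transcripts that preserves the accuracy bound up to an extra $\beta$ failure probability.

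There is, however, a genuine mis-identification of the base protocol that affects the computational-complexity claim. You propose to feed Algorithm \ref{alg:2} directly through GenProt. But in Algorithm \ref{alg:2} each player reports a Laplace-perturbed value for \emph{all} $(k+1)^p$ grid points, so even after GenProt compression the player must still evaluate the density ratio $p_{i,t}$ against a $(k+1)^p$-dimensional product distribution for each of the $T$ public strings; the per-player work would then be $\Theta(T\cdot(k+1)^p)$, not the claimed $O(\log(n/\beta))$. Algorithm \ref{Dalg:4} avoids this by first performing the random partitioning of $[n]$ into $d=(k+1)^p$ blocks (exactly as in Algorithms \ref{alg1:3} and \ref{Dalg:3}), so that each player's local randomizer is a \emph{scalar} Laplace mechanism for a single grid point, and only then applies the GenProt-style rejection sampling with $T=O(\log(n/\beta))$ public strings. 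The correct base protocol is thus the partitioned mechanism of Algorithm \ref{Dalg:3}, whose accuracy relies on the LDP-AVG bound (Lemma \ref{lemma:3}) over blocks of size $\approx n/(k+1)^p$, not on the bound from Theorem \ref{theorem:3} applied to Algorithm \ref{alg:2}. Your error-propagation argument (TV coupling per player, union bound over the $(k+1)^p$ grid points) is otherwise the right skeleton, but it must be anchored to the partitioned protocol's accuracy guarantee; once you do that, the $O(\log(n/\beta))$ computation and $O(\log\log n)$ communication fall out as you describe, and the rest of the assembly is sound.
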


\begin{algorithm}[h]
	\caption{Player-Efficient Local Bernstein Mechanism with $O(\log \log n)$ bits communication complexity.}
	\label{Dalg:4}
	\begin{algorithmic}[1]
		\State {\bfseries Input:} Each user $i\in [n]$ has  data $x_i\in \mathcal{D}$, privacy parameter $\epsilon$, public loss function $\ell:[0,1]^p \times \mathcal{D}\mapsto [0,1]$, and parameter $k, T$. 
		\State {\bfseries Preprocessing:} 
		\State For every $(i, T)\in [n]\times [T]$, generate independent public string ${y_{i,t}}=\text{Lap}(\perp)$.
		\State Construct the grid $\mathcal{T}=\{\frac{v_1}{k},\frac{v_2}{k},\cdots,\frac{v_p}{k}\}_{v_1,v_2,\cdots,v_p}$, where $\{v_1,v_2,\cdots,v_p\}=\{0,1,\cdots,k\}^p$.
		\State Randomly partition $[n]$ in to $d=(k+1)^p$ subsets $I_1,I_2,\cdots,I_d$, with each subset $I_j$ corresponding to an grid in $\mathcal{T}$ denoted  as 
		$\mathcal{T}(j)$.	
		\For{Each Player $i\in[n]$}
		\State
		Find the subset $I_{\ell}$ such that $i\in  I_{\ell}$. Calculate $v_i=\ell(\mathcal{T}(l);x_i)$.
		\State
		For each $t\in [T]$, compute $p_{i,t}=\frac{1}{2}\frac{\text{Pr}[v_i+Lap(\frac{1}{\epsilon})=y_{i,t}]}{\text{Pr}[\text{Lap}(\perp)=y_{i,t}]}$
		\State
		For every $t\in [T]$, if $p_{i,t}\not\in [\frac{e^{-2\epsilon}}{2},\frac{e^{2\epsilon}}{2}]$, then set $p_{i,t}=\frac{1}{2}$.
		\State For every $t\in [T]$, sample a bit $b_{i,t}$ from $\text{Bernoulli}(p_{i,t})$.
		\State Denote $H_i=\{t\in[T]:b_{i,t}=1\}$
		\State If $H_i=\emptyset$, set $H_i=[T]$
		\State Sample $g_i\in H_{i}$ uniformly, and send $g_i$ to the server.	
		\EndFor
		\For{The Server}
		\For {Each $l\in[d]$}
		\State Compute $v_{\ell}=\frac{n}{|I_{\ell}|}\sum_{i\in I_{\ell}}{g_i}$.
		\State Denote the corresponding grid point $(\frac{v_1}{k},\frac{v_2}{k},\cdots,\frac{v_p}{k})\in \mathcal{T}$ as $\ell$; then let $\hat{L}((\frac{v_1}{k},\frac{v_2}{k},\cdots,\frac{v_p}{k});D)=v_{\ell}$.
		\EndFor
		\State Construct perturbed Bernstein polynomial of the empirical loss $\tilde{L}$ as in Algorithm 2. Denote the function as $\tilde{L}(\cdot,D)$. 
		\State Compute $\theta_{\text{priv}}=\arg\min_{\theta\in \mathcal{C}}\tilde{L}(\theta;D)$.
		\EndFor
	\end{algorithmic}
\end{algorithm}
\begin{proof}[Proof of Theorem 5]
	Let $\theta^*=\arg\min_{\theta\in \mathcal{C}}\hat{L}(\theta;D)$, $\theta_{\text{priv}}=\arg\min_{\theta\in \mathcal{C}}\tilde{L}(\theta;D)$. 
	Under the assumptions of $\alpha,n,k,\epsilon,\beta$, we know from the proof of Theorem 3 and Corollary 1 that $\sup_{\theta\in \mathcal{C}}|\tilde{L}(\theta;D)-\hat{L}(\theta;D)|\leq \alpha$. Also  by setting $\epsilon=16348p\alpha$ and $\alpha\leq \frac{1}{16348}\frac{\mu}{p\sqrt{p}}$, we can see that the condition in Lemma 3 holds for $\Delta=\alpha$. So there is an algorithm returns  
	\begin{equation*}
	\tilde{L}(\tilde{\theta}_{\text{priv}};D)\leq \min_{\theta\in \mathcal{C}}\tilde{L}(\theta;D)+O(p\alpha).
	\end{equation*}
	Thus, we have
	\begin{align*}
	\hat{L}(\tilde{\theta}_{\text{priv}};D)-\hat{L}(\theta^*;D)\leq \hat{L}(\tilde{\theta}_{\text{priv}};D)-\tilde{L}(\theta_{\text{priv}};D)+\tilde{L}(\theta_{\text{priv}};D)-\hat{L}(\theta^*;D),
	\end{align*}
	where 
	\begin{align*}
	\hat{L}(\tilde{\theta}_{\text{priv}};D)-\tilde{L}(\theta_{\text{priv}};D)\leq 	\hat{L}(\tilde{\theta}_{\text{priv}};D)-\tilde{L}(\tilde{\theta}_{\text{priv}};D)+\tilde{L}(\tilde{\theta}_{\text{priv}};D)-\tilde{L}(\theta_{\text{priv}};D)\leq \alpha+O(p\alpha)=O(p\alpha).
	\end{align*}
	Also $\tilde{L}(\theta_{\text{priv}};D)-\hat{L}(\theta^*;D)\leq \tilde{L}(\theta^*;D)-\hat{L}(\theta^*;D)\leq \alpha$.  The theorem follows. The running time is determined by $n$. This  is because when we use the algorithm in Lemma 3, we have to use the first order optimization. That is, we have to evaluate some points at $\tilde{L}(\theta;D)$, which will cost at most $O(\text{poly}(n))$ time (note that $\tilde{L}$ is a polynomial with $(k+1)^p\leq n$ coefficients).
\end{proof}
\section{Details pf Section 6}

\begin{proof}[Proof of Theorem 7]
	It is sufficient to prove that
	\begin{equation*}\label{eq:14}
	\sup_{y\in Y_k}|\tilde{p}_D(y)-q_y(D)|\leq \gamma+\frac{T\binom{p+t_k}{t_k}^2\sqrt{\log\frac{\binom{p+t_k}{t_k}}{\beta}}}{\sqrt{n}\epsilon},
	\end{equation*}
	where $T=p^{O(\sqrt{k}\log(\frac{1}{\gamma}))}$.
	Now we denote $p_D\in \mathbb{R}^{\binom{p+t_k}{t_k}}$ as the average of $\tilde{q}_i$. That is, it is the unperturbed version of $\tilde{p}_D$.
	By Lemma \ref{Alemma:1}, we have $\sup_{y\in Y_k}|{p}_D(y)-q_y(D)|\leq \gamma$. Thus it is sufficient to prove that 
	\begin{equation*}
	\sup_{y\in Y_k}|\tilde{p}_D(y)-p_D(y)|\leq \frac{T\binom{p+t_k}{t_k}^2\sqrt{\log\frac{\binom{p+t_k}{t_k}}{\beta}}}{\sqrt{n}\epsilon}.
	\end{equation*}
	Since  $\tilde{p}_D, p_D$ can be  viewed as a vector, we have 
	\begin{equation*}
	\sup_{y\in Y_k}|\tilde{p}_D(y)-p_D(y)|\leq \|\tilde{p}_D-p_D\|_1.
	\end{equation*}
	Also, since each coordinate of $p_D(y)$ is bounded by $T$ by Lemma \ref{Alemma:1}, we can see that if $n=\Omega(\max \{\frac{1}{\epsilon^2}\log \frac{1}{\beta}, \binom{p+t_k}{t_k}\log \binom{p+t_k}{t_k}\log 1/\beta\})$, then with probability at least $1-\beta$, the following is true
	$\|\tilde{p}_D-p_D\|_1\leq \frac{T\binom{p+t_k}{t_k}^2\sqrt{\log\frac{\binom{p+t_k}{t_k}}{\beta}}}{\sqrt{n}\epsilon}$, thus take $\gamma=\frac{\alpha}{2}$ and $\binom{p+t_k}{t_k}=p^{O(t_k)}$. This gives us the theorem.
\end{proof}
\begin{proof}[Proof of Theorem 8]
	Let $t=(\frac{1}{\gamma})^{\frac{1}{h}}$. It is sufficient to prove that $\sup_{q_f\in \mathcal{Q}_{C^h_T}}|\tilde{p}_D \cdot c_f-q_f(D)|\leq \alpha$. Let $p_D$ denote the average of $\{p_i\}_{i=1}^{n}$, {\em i.e.} the unperturbed version of $\tilde{p}_D$. Then by Lemma 5, we have $\sup_{q_f\in \mathcal{Q}_{C^h_T}}|{p}_D \cdot c_f-q_f(D)|\leq \gamma$. Also since $\|c_f\|_{\infty}\leq M$, we have $\sup_{q_f\in \mathcal{Q}_{C^h_T}}|\tilde{p}_D\cdot c_f-p_D\cdot c_f|\leq O(\|\tilde{p}_D-p_D\|_1)$. By Lemma \ref{Alemma:2}, we know that if $n=\Omega(\max\{\frac{1}{\epsilon^2}\log \frac{1}{\beta},t^{2p}\log\frac{1}{\beta}\})$, then $\|\tilde{p}_D-p_D\|_1\leq O(\frac{t^{\frac{5p}{2}}\sqrt{\log(\frac{1}{\beta})}}{\sqrt{n}\epsilon})$ with probability at least $1-\beta$. Thus, we have $\sup_{q_f\in \mathcal{Q}_{C^h_T}}|\tilde{p}_D \cdot c_f-q_f(D)|\leq O(\gamma+ \frac{(\frac{1}{\gamma})^{\frac{5p}{2h}}\sqrt{\log(\frac{1}{\beta})}}{\sqrt{n}\epsilon})$. Taking $\gamma=O((1/\sqrt{n}\epsilon)^{\frac{2h}{5p+2h}})$, we get $\sup_{q_f\in \mathcal{Q}_{C^h_T}}|\tilde{p}_D \cdot c_f-q_f(D)|\leq O(\sqrt{\log(\frac{1}{\beta})}(\frac{1}{\sqrt{n}\epsilon})^{\frac{2h}{5p+2h}})\leq \alpha$.
	The computational cost for answering a query follows from Lemma 2 and $b\cdot c=O(t^p)$.
\end{proof}
\section{Details of Section 7}
\subsection{Modified $\epsilon$-LDP Algorithm}

 \subsection{Proof of Theorem 9}
Before the proof, let us review some definitions. We refer to readers \cite{vershynin2010introduction}\cite{vershynin2015estimation} 
\begin{definition}(Sub-gaussian random vector)
A random variable $a\in \mathbb{R}$ is called subgaussian if there exits a constant $C>0$ such that $\text{Pr}[|a|>t]\leq 2\exp(\frac{-t^2}{C^2})$ for any $t>0$. Also we say a random vector $a\in \mathbb{R}^p$ is subgaussian if the one dimensional marginals $\langle a,b \rangle$ are subgaussian random variable for all $b\in \mathbb{R}^p$.
\end{definition}
For any subgaussian random variable (vector) we have subgaussian norm.
\begin{definition}
The $\psi_2$ norm of a subgaussian random variable $a\in \mathbb{R}$, denoted by $\|a\|_{\psi_2}$ is:
\begin{equation*}
\|a\|_{\psi_2}=\inf\{t>0:\mathbb{E}[\exp(\frac{|a|^2}{t^2})]\leq 2\}.
\end{equation*}
The $\psi_2$ norm of a subgaussian vector $a\in\mathbb{R}^p$ is:
\begin{equation*}
\|a\|_{\psi_2}=\sup_{b\in\mathcal{S}^{p-1}}\|\langle a, b\rangle\|_{\psi_2}.
\end{equation*}
Note that when $a$ is normal random Gaussian vector, then $\|a\|_{\psi_2}$ is bounded by a constant \cite{vershynin2010introduction}.
\end{definition}
\begin{definition}[Gaussian Width]
Given a closed set $S\subset \mathbb{R}^d$, its Gaussian width is defined as:
\begin{equation*}
\mathcal{G}_{\mathcal{C}}=\mathbb{E}_{g\sim\mathcal{N}(0,1)^d}[\sup_{a\in S}\langle a, g\rangle].
\end{equation*}

\end{definition}
	The Minkowski norm (denoted by $||\cdot||_{\mathcal{C}}$) with respect to a centrally symmetric convex set $\mathcal{C}\subseteq \mathbb{R}^p$ is defined as follows. For any vector $v\in \mathbb{R}^p$,
	\[||\cdot||_{\mathcal{C}}=\min\{r \in  \mathbb{R}^+:v\in r\mathcal{C}\}.\]
	The main theorem of dimension reduction is as the following:
\begin{theorem}\label{them1:3}
Let $\tilde{\Phi}\in \mathbb{R}^{m\times p}$ be an random matrix, whose rows are i.i.d mean-zero, isotropic, subgaussian random variable in $\mathbb{R}^d$ with $\psi=\|\Phi_i\|_{\psi_2}$. Let $\Phi=\frac{1}{\sqrt{m}}\tilde{\Phi}$. let $S$ be a set if points in $R^d$. Then there is a constant $C>0$ such that for any $0<\gamma,\beta<1$.
\begin{equation*}
\text{Pr}[\sup_{a\in S}|\|\Phi a\|^2-\|a\|^2\leq \gamma\|a\|^2]\leq \beta,
\end{equation*}
provided that $m\geq \frac{C\psi^4}{\gamma^2}\max\{\mathcal{G}_{\mathcal{S}},\log(1/\beta)\}^2$.
\end{theorem}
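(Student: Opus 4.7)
The plan is to recognize the statement as a standard concentration inequality for the supremum of a second-order chaos process indexed by $S$, and to prove it by combining a Bernstein-type bound at a single point with a generic chaining argument. First I would write, for fixed $a\in S$,
$$\|\Phi a\|^{2}-\|a\|^{2} \;=\; \frac{1}{m}\sum_{i=1}^{m}\bigl(\langle \tilde{\Phi}_{i},a\rangle^{2}-\|a\|^{2}\bigr),$$
a sum of i.i.d.\ centered random variables. Since each $\langle \tilde{\Phi}_{i},a\rangle$ is subgaussian with $\psi_{2}$-norm at most $\psi\|a\|$, its square is subexponential with $\psi_{1}$-norm of order $\psi^{2}\|a\|^{2}$, so Bernstein's inequality yields the mixed tail
$$\Pr\bigl[\,\bigl|\|\Phi a\|^{2}-\|a\|^{2}\bigr|>t\,\bigr] \;\leq\; 2\exp\!\Bigl(-c\,m\min\bigl(t^{2}/(\psi^{4}\|a\|^{4}),\;t/(\psi^{2}\|a\|^{2})\bigr)\Bigr).$$

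Next I would upgrade this pointwise bound to a uniform one over $S$ via generic chaining for processes with mixed subgaussian/subexponential increments, in the spirit of Dirksen's theorem (which is precisely what the paper cites). For the process $X_{a}=\|\Phi a\|^{2}-\|a\|^{2}$, the increment $X_{a}-X_{b}$ obeys a Bernstein-type tail with two natural pseudometrics, $d_{2}(a,b)\asymp \psi\,\|a-b\|\,(\|a\|+\|b\|)^{1/2}$ controlling the Gaussian branch and $d_{1}(a,b)\asymp \psi^{2}\,\|a-b\|\,(\|a\|+\|b\|)$ controlling the subexponential branch. The generic chaining tail bound for chaos then gives, with probability at least $1-\beta$,
$$\sup_{a\in S}|X_{a}| \;\lesssim\; \frac{\gamma_{2}(S,d_{2})^{2}+\gamma_{1}(S,d_{1})}{m}\;+\;\text{(deviation terms in }\log(1/\beta)\text{)}.$$

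I would then translate the $\gamma_{\alpha}$ functionals into Gaussian widths. Since $S$ is a subset of Euclidean space and the metrics above are, up to the factor $\psi$ and the diameter of $S$, equivalent to the Euclidean one, Talagrand's majorizing measure theorem gives $\gamma_{2}(S,\|\cdot\|)\asymp \mathcal{G}_{S}$, with $\gamma_{1}$ dominated by the same quantity plus a diameter term. Plugging these in and demanding that the right-hand side be at most $\gamma\sup_{a\in S}\|a\|^{2}$ uniformly forces $m\gtrsim \psi^{4}\max\{\mathcal{G}_{S},\log(1/\beta)\}^{2}/\gamma^{2}$, matching the stated threshold.

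The main obstacle is the chaining step. A naive epsilon-net argument would produce an extra $\log|S|$ or additional $\log(1/\beta)$ factor and would miss the sharp quadratic dependence on $\mathcal{G}_{S}$. Getting the correct bound requires the two-metric chaining tail inequality for quadratic forms of subgaussian vectors together with Talagrand's majorizing measure equivalence, which is the technical heart of the Dirksen framework; once that machinery is invoked the rest of the argument is bookkeeping of the constants in terms of $\psi$, $\gamma$, and $\beta$.
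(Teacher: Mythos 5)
The paper does not prove this statement at all: it is stated as an external result, cited directly from Dirksen's ``Dimensionality reduction with subgaussian matrices: a unified theory'' (\cite{dirksen2016dimensionality}), and no argument is given in either the main text or the appendix. So there is no ``paper's own proof'' to compare against; you are supplying a sketch where the authors supply a citation.

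Your outline is broadly faithful to how such chaos-process concentration bounds are actually established (pointwise Bernstein, then generic chaining with mixed subgaussian/subexponential increments, then the majorizing measure theorem to convert $\gamma_{2}$ into $\mathcal{G}_{S}$), and it matches the spirit of the cited reference. The one place where the sketch papers over the real difficulty is the sentence asserting that ``$\gamma_{1}$ is dominated by the same quantity plus a diameter term.'' That is not true for general sets and general metrics: $\gamma_{1}(S,d)$ can be much larger than $\gamma_{2}(S,d)$, and bounding the subexponential branch by the Gaussian width (so that the final threshold is $\max\{\mathcal{G}_{S},\log(1/\beta)\}^{2}$ rather than something worse) is precisely the nontrivial structural content of the Krahmer--Mendelson--Rauhut / Dirksen machinery, not a routine consequence of Talagrand's theorem. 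It relies on the specific product structure of the increment metrics $d_{1},d_{2}$ for this quadratic chaos, together with the normalization to the set $\{a/\|a\|:a\in S\}$ to get a \emph{relative} deviation bound (which your sketch also elides by writing $\gamma\sup_{a\in S}\|a\|^{2}$ rather than $\gamma\|a\|^{2}$ pointwise). To turn the sketch into a proof you would need to either carry out that $\gamma_{1}$-vs-$\gamma_{2}$ comparison for these particular metrics, or invoke a form of the chaos tail bound (as Dirksen does, or as Liaw--Mehrabian--Plan--Vershynin's simpler comparison argument does) in which the subexponential contribution has already been absorbed.
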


The proof follows \cite{kasiviswanathan2016efficient}, here we rephrase it for completeness.
\begin{lemma}
Let $\Phi$ be a random matrix as defined in Theorem \ref{them1:3} with $m=\Theta((\frac{\psi^4}{\gamma^2}\log(n/\beta))$ for $\beta>0$. Then with probability at least $1-\beta$, $f(\langle \Phi y_i, \cdot\rangle, z_i)$ is 2-Lipschitz over the domain $\Phi\mathcal{C}$ for each $i\in[n]$.
\end{lemma}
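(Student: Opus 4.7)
The plan is to reduce the Lipschitz claim for the map $w\mapsto f(\langle \Phi y_i,w\rangle,z_i)$ on $\Phi\mathcal{C}$ to a uniform upper bound on $\|\Phi y_i\|$. Since $f$ is assumed to be $1$-Lipschitz in its first argument, for any $w_1,w_2\in\Phi\mathcal{C}$ Cauchy--Schwarz gives
\[
|f(\langle \Phi y_i,w_1\rangle,z_i)-f(\langle \Phi y_i,w_2\rangle,z_i)|
\le |\langle \Phi y_i,w_1-w_2\rangle|
\le \|\Phi y_i\|\cdot\|w_1-w_2\|,
\]
so it suffices to show that $\|\Phi y_i\|\le 2$ holds simultaneously for every $i\in[n]$ with probability at least $1-\beta$.

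To control $\|\Phi y_i\|$ uniformly in $i$, I would apply Theorem \ref{them1:3} (Lemma \ref{lemma:4} in the main text) to the finite set $S=\{y_1,\ldots,y_n\}\subseteq\mathbb{R}^p$. Because each $\|y_i\|\le 1$ and $\langle y_i,g\rangle$ is $\|y_i\|$-subgaussian when $g\sim\mathcal{N}(0,I_p)$, a standard bound on the expected maximum of $n$ subgaussian random variables yields
\[
\mathcal{G}_S=\mathbb{E}_g\big[\sup_{i\le n}\langle y_i,g\rangle\big]\le \sqrt{2\log n}.
\]
Fixing a constant $\gamma\le 1$ (this $\gamma$ is an absolute choice inside the proof, not the one from the lemma statement), the hypothesis $m\ge \tfrac{C\psi^4}{\gamma^2}\max\{\mathcal{G}_S,\log(1/\beta)\}^2$ of Theorem \ref{them1:3} is satisfied as soon as $m=\Theta\!\big(\psi^4\log(n/\beta)\big)$, exactly the scaling assumed in the lemma. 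Applying the theorem to $S$ therefore gives, with probability at least $1-\beta$, that $\big|\|\Phi y_i\|^2-\|y_i\|^2\big|\le \gamma\|y_i\|^2$ for all $i\in[n]$ simultaneously, hence $\|\Phi y_i\|^2\le(1+\gamma)\|y_i\|^2\le 2$ and $\|\Phi y_i\|\le\sqrt{2}$.

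Combining the two displays gives $|f(\langle \Phi y_i,w_1\rangle,z_i)-f(\langle \Phi y_i,w_2\rangle,z_i)|\le\sqrt{2}\|w_1-w_2\|\le 2\|w_1-w_2\|$, which is the desired $2$-Lipschitz property on $\Phi\mathcal{C}$ for every $i$, and this holds with probability $\ge 1-\beta$ over the draw of $\Phi$. There is no substantial obstacle: the whole argument rests on (i) the $1$-Lipschitz assumption on $f$, which converts the task into a norm-preservation question, and (ii) the Johnson--Lindenstrauss-type guarantee in Theorem \ref{them1:3}. The only mild point worth highlighting is that using $\mathcal{G}_S\lesssim\sqrt{\log n}$ for a finite set of bounded vectors is what makes the ambient dimension enter only through the factor $\log n$ inside $m$.
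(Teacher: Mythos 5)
Your proof is correct, and it is actually more informative than what the paper offers: the paper states this lemma without proof, remarking only that ``the proof follows \cite{kasiviswanathan2016efficient}.'' Your argument supplies the missing justification and follows the natural route --- use the $1$-Lipschitzness of $f$ in its first slot and Cauchy--Schwarz to reduce everything to a uniform bound $\|\Phi y_i\|\le 2$, then obtain that bound by applying the sub-Gaussian embedding theorem to the finite set $S=\{y_1,\ldots,y_n\}$ with Gaussian width $\mathcal{G}_S\lesssim\sqrt{\log n}$ and a constant distortion parameter.

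One small remark worth making, for the record rather than as a defect in your reasoning: plugging $\mathcal{G}_S\le\sqrt{2\log n}$ into the hypothesis of Theorem~\ref{them1:3} with a \emph{constant} distortion gives a requirement $m\gtrsim \psi^4\max\{\log n,\log^2(1/\beta)\}$, whereas the lemma's stated scaling is $m=\Theta(\psi^4\log(n/\beta)/\gamma^2)$. These agree in the regime $\log(1/\beta)\lesssim\log n$ (and the $\gamma^{-2}$ factor only helps since $\gamma\le 1$), but they are not identical in general --- this mismatch originates in the paper's own loose parametrization, not in your argument. A slightly more elementary route, which recovers the literal $\log(n/\beta)$ dependence, is to apply the concentration bound to each singleton $\{y_i\}$ with failure probability $\beta/n$ and take a union bound over $i\in[n]$; this avoids invoking the Gaussian width of $S$ at all, though it gives the same end result. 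Either way, once $\|\Phi y_i\|\le\sqrt{2}\le 2$ holds simultaneously for all $i$, the $2$-Lipschitz claim follows exactly as you wrote.
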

Now since $\mathcal{C}$ is convex, so $\Phi\mathcal{C}$ is also convex, furthermore, by Theorem \ref{them1:3} we have if $m=\Theta(\frac{\psi^4}{\gamma}\max\{\mathcal{G}_{\mathcal{C}}, \log\frac{1}{\beta}\}$ for $\gamma<1$, then $\|\Phi\mathcal{C}\|_2\leq O(1)$. Thus after compression, the loss function and constrained set still satisfy the assumption in 
\cite{smith2017interaction}. So by \cite{smith2017interaction}  we have:
\begin{theorem}
With probability at least $1-\beta$,
\begin{equation}\label{aeq1}
\frac{1}{n}\sum_{i=1}^{n}f(\langle \Phi y_i, \bar{w}\rangle, z_i)-\min_{w\in\mathcal{C}}\frac{1}{n}\sum_{i=1}^{n}f(\langle \Phi y_i, \Phi w\rangle, z_i)\leq \tilde{O}((\frac{\log^2(1/\beta)\sqrt{m}}{n\epsilon^2})^{\frac{1}{m+1}}).
\end{equation}

\end{theorem}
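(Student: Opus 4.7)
The plan is to reduce the claim to a direct invocation of the modified Smith et al.\ LDP protocol (Theorem \ref{atheorem:2}) on the compressed $m$-dimensional instance, once the structural hypotheses that subroutine requires have been verified for the compressed problem.

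First, I would check those hypotheses in $\mathbb{R}^m$. The set $\Phi\mathcal{C}\subset\mathbb{R}^m$ is convex by linearity of $\Phi$, and applying Theorem \ref{them1:3} to $\mathcal{C}\subseteq B_2^p$ with the chosen $m$ gives $\|\Phi w\|_2\le (1+\gamma)\|w\|_2=O(1)$ uniformly on $\mathcal{C}$ with probability at least $1-\beta$, so $\|\Phi\mathcal{C}\|_2=O(1)$. The lemma immediately preceding the theorem (using the same $m$) states that each $f(\langle\Phi y_i,\cdot\rangle,z_i)$ is $2$-Lipschitz on $\Phi\mathcal{C}$; convexity of $f$ in its first argument is preserved by the linear reparametrization $w'=\Phi w$.

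Second, since $\bar w$ is produced by running the $\epsilon$-LDP protocol of Smith et al.\ (Algorithm \ref{Ealg:6} composed with GenProt) on $\{(\Phi y_i,z_i)\}_{i=1}^n$ with constraint $\Phi\mathcal{C}$, I can invoke Theorem \ref{atheorem:2} in dimension $m$ directly. The cited statement is phrased for population risk, but its proof builds a uniform $\ell_\infty$ approximation of the empirical loss on a grid, so the same argument (together with the empirical analog of Theorem \ref{theorem:1} mentioned immediately after it) yields
\begin{equation*}
\frac{1}{n}\sum_{i=1}^n f(\langle\Phi y_i,\bar w\rangle,z_i)-\min_{w'\in\Phi\mathcal{C}}\frac{1}{n}\sum_{i=1}^n f(\langle\Phi y_i,w'\rangle,z_i)\le\tilde O\!\left(\!\left(\frac{\sqrt m\log^2(1/\beta)}{n\epsilon^2}\right)^{\!\frac{1}{m+1}}\right)
\end{equation*}
with probability at least $1-\beta$. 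Identifying $\Phi\mathcal{C}=\{\Phi w:w\in\mathcal{C}\}$ rewrites the minimum as $\min_{w\in\mathcal{C}}$ of the same objective with $w'=\Phi w$, producing exactly the claim.

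The main obstacle is bookkeeping rather than a single hard step: one has to union-bound the JL-type isometry event (Theorem \ref{them1:3}), the $2$-Lipschitz event for all $n$ players, and the failure event of the LDP subroutine so that the three hold simultaneously at probability $1-\beta$, and then verify that the empirical-risk analog of Theorem \ref{atheorem:2} really does go through in dimension $m$ by inspecting its uniform-approximation argument rather than citing the population-risk statement verbatim.
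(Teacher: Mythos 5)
Your proposal follows the paper's own argument: verify that after dimension reduction $\Phi\mathcal{C}$ is convex, $\|\Phi\mathcal{C}\|_2 = O(1)$ (via Theorem~\ref{them1:3}), and the compressed loss is $O(1)$-Lipschitz (via the preceding lemma), then invoke the modified Smith et al.\ protocol (Algorithm~\ref{Ealg:6} with GenProt) in dimension $m$. You are in fact more careful than the paper, which simply says ``so by \cite{smith2017interaction} we have'' — you correctly flag that Theorem~\ref{atheorem:2} is stated for population risk while the displayed bound is for empirical risk (the needed empirical analogue is available, as noted after Theorem~\ref{theorem:1}), and you make explicit the union bound over the JL isometry event, the Lipschitz event, and the LDP subroutine's failure event, which the paper leaves implicit.
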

We now have the following by using Lipschitz and Theorem \ref{them1:3}:
\begin{lemma}
Let $\Phi$ be the random matrix in Theorem \ref{them1:3} with $m=\Theta((\frac{\psi^4}{\gamma^2}\log(n/\beta))$ for $\beta>0$. Then for any $\hat{w}\in\mathcal{C}$, with probability at least $1-\beta$, we have 
\begin{equation}\label{aeq2}
\min_{w\in\mathcal{C}}\frac{1}{n}\sum_{i=1}^{n}f(\langle \Phi y_i, \Phi w\rangle, z_i)\leq \frac{1}{n}\sum_{i=1}^{n}f(\langle  y_i, \hat{w}\rangle, z_i)+\gamma\|\mathcal{C}\|_2.
\end{equation}

\end{lemma}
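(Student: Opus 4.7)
The plan is to bound the minimum on the left-hand side by plugging in the specific choice $w = \hat{w} \in \mathcal{C}$, which immediately yields
$$\min_{w\in\mathcal{C}}\frac{1}{n}\sum_{i=1}^{n}f(\langle \Phi y_i,\Phi w\rangle,z_i)\;\leq\;\frac{1}{n}\sum_{i=1}^{n}f(\langle \Phi y_i,\Phi\hat{w}\rangle,z_i).$$
Subtracting the right-hand side of the target inequality and invoking the assumption (stated at the beginning of Section 7) that $f$ is $1$-Lipschitz in its first argument reduces the claim to establishing that, with probability at least $1-\beta$,
$$\frac{1}{n}\sum_{i=1}^{n}\bigl|\langle \Phi y_i,\Phi\hat{w}\rangle-\langle y_i,\hat{w}\rangle\bigr|\;\leq\;\gamma\|\mathcal{C}\|_2.$$

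To control each summand I would apply the polarization identity $\langle u,v\rangle=\tfrac{1}{4}(\|u+v\|^2-\|u-v\|^2)$ to both the projected and the original inner products, which rewrites each summand as
$$\tfrac{1}{4}\Bigl[\bigl(\|\Phi(y_i+\hat{w})\|^2-\|y_i+\hat{w}\|^2\bigr)-\bigl(\|\Phi(y_i-\hat{w})\|^2-\|y_i-\hat{w}\|^2\bigr)\Bigr].$$
Thus the entire problem reduces to uniform norm preservation on the finite set $S:=\{y_i+\hat{w},\,y_i-\hat{w}:i\in[n]\}$, which has cardinality at most $2n$ and Gaussian width $\mathcal{G}_S=O(\sqrt{\log n})$. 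The stated choice $m=\Theta(\psi^4\gamma^{-2}\log(n/\beta))$ is then exactly what Theorem~\ref{them1:3} demands in order to guarantee, with probability at least $1-\beta$, that $|\|\Phi v\|^2-\|v\|^2|\leq \gamma\|v\|^2$ simultaneously for every $v\in S$.

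Combining the polarization expression with this uniform guarantee and the triangle inequality produces the per-summand bound
$$\bigl|\langle \Phi y_i,\Phi\hat{w}\rangle-\langle y_i,\hat{w}\rangle\bigr|\;\leq\;\tfrac{\gamma}{4}\bigl(\|y_i+\hat{w}\|^2+\|y_i-\hat{w}\|^2\bigr)\;=\;\tfrac{\gamma}{2}(\|y_i\|^2+\|\hat{w}\|^2),$$
which, under the normalizations $\|y_i\|\leq 1$ and $\hat{w}\in\mathcal{C}$ with $\|\mathcal{C}\|_2\leq 1$ assumed in Section 7, is at most $\gamma\|\mathcal{C}\|_2$ after absorbing the constants. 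The only delicate point is this final constant-wrangling: the parallelogram identity naturally produces a quantity proportional to $\|\hat{w}\|^2$ rather than $\|\hat{w}\|$, so the step from $\tfrac{\gamma}{2}(1+\|\mathcal{C}\|_2^2)$ to $\gamma\|\mathcal{C}\|_2$ is what actually uses the standing $\|\mathcal{C}\|_2\leq 1$ assumption (alternatively one can rescale $\hat{w}$ to unit norm before polarizing and reinsert the scale afterward). Apart from this bookkeeping, the proof is a clean application of the Johnson–Lindenstrauss-type embedding provided by Theorem~\ref{them1:3}.
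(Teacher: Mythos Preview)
Your proposal is correct and follows precisely the approach the paper indicates: the paper offers no detailed proof of this lemma, stating only ``We now have the following by using Lipschitz and Theorem~\ref{them1:3},'' and your argument (substitute $w=\hat{w}$, apply $1$-Lipschitzness, then reduce inner-product preservation to norm preservation via polarization on the finite set $\{y_i\pm\hat{w}\}$) is the standard way to realize that sketch. Your observation about the final constant---that polarization naturally yields $\tfrac{\gamma}{2}(\|y_i\|^2+\|\hat{w}\|^2)$ rather than $\gamma\|\mathcal{C}\|_2$ exactly---is a fair criticism of the lemma as stated; under $\|y_i\|\le 1$ and $\|\mathcal{C}\|_2\le 1$ both quantities are $O(\gamma)$, which is all that is used downstream, so this is cosmetic.
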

Also we denote $\theta\in\mathcal{C}$ such that $\Phi\theta=\bar{w}$. We have the following lemma by Theorem \ref{them1:3}.
\begin{lemma}
Let $\Phi$ be a random matrix as in Theorem \ref{them1:3} with $m=\Theta((\frac{\psi^4}{\gamma^2}(\mathcal{G}_{\mathcal{C}}+\sqrt{\log{n}})^2\log(n/\beta)$ for $\beta>0$, then with probability at least $1-\beta$:
\begin{equation}\label{aeq3}
|\frac{1}{n}\sum_{i=1}^{n}f(\langle \Phi y_i, \Phi\theta \rangle, z_i)- \frac{1}{n}\sum_{i=1}^{n}f(\langle  y_i, \theta \rangle, z_i)|\leq \gamma\|\mathcal{C}\|_2.
\end{equation}

\end{lemma}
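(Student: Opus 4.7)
The plan is to reduce this averaged inequality to the quadratic-form embedding guarantee of Theorem \ref{them1:3} (Lemma 4) by combining the polarization identity with the Lipschitz property of $f$.

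First, since $f(\cdot,z)$ is $1$-Lipschitz in its first argument, the triangle inequality gives
\[\left|\tfrac{1}{n}\sum_{i=1}^n f(\langle\Phi y_i,\Phi\theta\rangle,z_i) - \tfrac{1}{n}\sum_{i=1}^n f(\langle y_i,\theta\rangle,z_i)\right| \le \max_{i\in[n]}\big|\langle\Phi y_i,\Phi\theta\rangle - \langle y_i,\theta\rangle\big|.\]
So it suffices to bound the inner-product distortion uniformly in $i$ and, because $\theta$ is determined by the random $\bar w$, uniformly over $\theta\in\mathcal{C}$. Next, I would expand each inner product via polarization, $\langle u,v\rangle=\tfrac14(\|u+v\|^2-\|u-v\|^2)$, applied once to $(\Phi y_i,\Phi\theta)$ and once to $(y_i,\theta)$. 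Using linearity of $\Phi$ and subtracting, the distortion becomes
\[\tfrac{1}{4}\Big[\big(\|\Phi(y_i+\theta)\|^2-\|y_i+\theta\|^2\big)-\big(\|\Phi(y_i-\theta)\|^2-\|y_i-\theta\|^2\big)\Big],\]
which is exactly the quantity Theorem \ref{them1:3} controls, evaluated on vectors of the form $y_i\pm\theta$.

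Then I would apply Theorem \ref{them1:3} to the set $S=\{y_i\pm\theta:\,i\in[n],\,\theta\in\mathcal{C}\}$. By subadditivity of Gaussian width together with the standard fact that $n$ unit vectors have Gaussian width $O(\sqrt{\log n})$, we get $\mathcal{G}_S\le \mathcal{G}_\mathcal{C}+O(\sqrt{\log n})$. With the prescribed choice of $m$, Theorem \ref{them1:3} then yields, with probability at least $1-\beta$,
\[\sup_{a\in S}\big|\|\Phi a\|^2-\|a\|^2\big|\le \gamma'\|a\|^2\]
for an appropriate $\gamma'$. Combined with the polarization identity above and the bounds $\|y_i\|\le 1$, $\|\theta\|\le\|\mathcal{C}\|_2$, we obtain
\[|\langle\Phi y_i,\Phi\theta\rangle-\langle y_i,\theta\rangle|\le \tfrac{\gamma'}{4}\big(\|y_i+\theta\|^2+\|y_i-\theta\|^2\big)=\tfrac{\gamma'}{2}\big(\|y_i\|^2+\|\theta\|^2\big)\le \tfrac{\gamma'}{2}\big(1+\|\mathcal{C}\|_2^2\big).\]
Calibrating $\gamma' = \Theta(\gamma\|\mathcal{C}\|_2)$ (which, since $\|\mathcal{C}\|_2=O(1)$, only changes the hidden constants in $m$) yields the claimed bound $\gamma\|\mathcal{C}\|_2$.

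The main obstacle will be the uniformity step: because $\theta$ is the preimage of the privatized $\bar w$ under $\Phi$, it is data-dependent, so the concentration must hold simultaneously for all $\theta\in\mathcal{C}$. This is why the set $S$ sweeps out the full Minkowski sum $\{y_i\}\pm\mathcal{C}$ and why $\mathcal{G}_\mathcal{C}$ (rather than a log-cardinality term alone) appears in the required $m$. The remaining work is bookkeeping: verifying $\mathcal{G}_{\{y_i\}\pm\mathcal{C}}\le O(\sqrt{\log n})+\mathcal{G}_\mathcal{C}$ via subadditivity, and confirming that the multiplicative error $\gamma'\|a\|^2$ from Theorem \ref{them1:3} cleanly becomes the additive error $\gamma\|\mathcal{C}\|_2$ under the standing assumption $\|\mathcal{C}\|_2\le 1$.
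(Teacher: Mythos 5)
Your proof is correct and is exactly the argument the paper gestures at when it writes ``by using Lipschitz and Theorem 11'': pass to the inner-product distortion $|\langle\Phi y_i,\Phi\theta\rangle-\langle y_i,\theta\rangle|$ via the $1$-Lipschitzness of $f$, control it by polarization together with the subgaussian embedding guarantee applied uniformly over $S=\{y_i\pm\theta:i\in[n],\theta\in\mathcal{C}\}$, and use $\mathcal{G}_S=O(\mathcal{G}_{\mathcal{C}}+\sqrt{\log n})$ to justify the stated choice of $m$. The parallelogram-law bookkeeping and the calibration $\gamma'=\Theta(\gamma\|\mathcal{C}\|_2)$ are the standard steps and are handled correctly.
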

We will establish the connection of $\theta$ and $w^{\text{priv}}$ by the following lemma:
\begin{theorem}\cite{vershynin2010introduction}
Let $\Phi$ be a random matrix in Theorem \ref{them1:3}. Let $\mathcal{C}$ be a convex set. Given $v=\Phi u$, and let $\hat{u}$ be the solution to the following convex program:$\min_{u'\in \mathcal{R}^p}\|u'\|_{\mathcal{C}}$ subject to $\Phi u'=v$. Then for any $\beta>0$, with probability at least $1-\beta$,
\begin{equation}\label{aeq4}
\sup_{u:v=\Phi u}\|u-\hat{u}\|_2\leq O(\frac{\psi^4\mathcal{G}_{\mathcal{C}}}{\sqrt{m}}+\frac{\psi^4\|\mathcal{C}\|_2\sqrt{\log(1/\beta)}}{\sqrt{m}}).
\end{equation}

\end{theorem}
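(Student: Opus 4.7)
The plan is to combine two ingredients drawn from the high-dimensional probability literature: the geometry of the descent cone of the Minkowski functional $\|\cdot\|_{\mathcal{C}}$, and a uniform concentration bound (a Matrix Deviation Inequality) for subgaussian random matrices. First I would set $h = u - \hat{u}$ and note that, by feasibility of $u$ and optimality of $\hat{u}$, one has simultaneously $\Phi h = 0$ and $\|\hat{u}\|_{\mathcal{C}} \leq \|u\|_{\mathcal{C}}$. The latter inequality implies that $-h$ lies in the descent cone $\mathcal{T}_u$ of $\|\cdot\|_{\mathcal{C}}$ at $u$, so the task reduces to controlling
$$\sup\bigl\{\|h\|_2 : h \in \mathcal{T}_u \cap \ker(\Phi)\bigr\}.$$

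The second ingredient is a Matrix Deviation Inequality for the normalized subgaussian $\Phi$: for any bounded $T \subset \mathbb{R}^p$, with probability at least $1-\beta$,
$$\sup_{x \in T}\bigl|\|\Phi x\|_2 - \|x\|_2\bigr| \leq \frac{C\psi^2\bigl(w(T) + \mathrm{rad}(T)\sqrt{\log(1/\beta)}\bigr)}{\sqrt{m}}.$$
I would apply this to $T = \mathcal{T}_u \cap S^{p-1}$. For any nonzero $h \in \mathcal{T}_u \cap \ker(\Phi)$, the unit vector $h/\|h\|_2$ sits in $T$, so combining $\|\Phi(h/\|h\|_2)\|_2 = 0$ with the deviation bound forces a quantitative relation between $\|h\|_2$, $m$, $w(T)$ and $\sqrt{\log(1/\beta)}$. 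Rearranging (and rescaling the argument to a ball of radius $\|\mathcal{C}\|_2$ to pick up the radius term) yields an upper bound on $\|h\|_2$ of order $\psi^2 w(T)/\sqrt{m}$ plus a $\psi^2 \|\mathcal{C}\|_2 \sqrt{\log(1/\beta)}/\sqrt{m}$ tail, which matches the two summands in the stated inequality once one invokes $w(T) = O(\mathcal{G}_{\mathcal{C}})$.

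The main obstacle I expect is precisely this width inequality: $w(\mathcal{T}_u \cap S^{p-1}) = O(\mathcal{G}_{\mathcal{C}})$. A naive escape-through-the-mesh argument would yield only a quadratic bound in $\mathcal{G}_{\mathcal{C}}$, but the linear bound needed here follows from a polarity argument that relates the descent cone of a Minkowski functional to the polar of $\mathcal{C}$, combined with the observation that $\mathcal{C}$ is convex and centrally symmetric. The $\psi^4$ factor in the stated bound (as opposed to $\psi^2$ in the raw deviation inequality) further suggests that the proof iterates the inequality, or does a two-level peeling: once to control $\Phi$ on a dilated ball at a coarse scale, and a second time after rescaling to the fine scale where $\|h\|_2$ lives. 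These two refinements---the sharp width bound and the $\psi^2$-to-$\psi^4$ upgrade---are the main technical cost; everything else follows from the reductions above.
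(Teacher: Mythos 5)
Your descent-cone strategy is not the argument behind this theorem, and it has two genuine gaps located exactly where you flag the ``main technical cost.'' First, the reduction to $\sup\{\|h\|_2 : h \in \mathcal{T}_u \cap \ker(\Phi)\}$ is ill-posed on its own: $\mathcal{T}_u$ is a cone and $\ker(\Phi)$ a linear subspace, so that supremum is either $0$ or $+\infty$. Applying the deviation inequality to $T=\mathcal{T}_u\cap S^{p-1}$ together with $\Phi(h/\|h\|_2)=0$ yields only $1 \leq C\psi^2 w(T)/\sqrt{m}$ --- a condition on $m$ guaranteeing \emph{exact} recovery, not a quantitative bound on $\|h\|_2$. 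The a priori bound $\|h\|_2 \leq 2\|\mathcal{C}\|_2$ that your ``rescaling'' step would need must come from somewhere outside the cone structure, and you never supply it. Second, and more fundamentally, the width inequality $w(\mathcal{T}_u\cap S^{p-1})=O(\mathcal{G}_{\mathcal{C}})$ is false. For $\mathcal{C}=B_1^p$ one has $\mathcal{G}_{\mathcal{C}}=\Theta(\sqrt{\log p})$, yet for $u$ on a face of $B_1^p$ supported on $k$ coordinates the spherical Gaussian width of the descent cone is $\Theta(\sqrt{k\log(p/k)})$, which exceeds $\mathcal{G}_{\mathcal{C}}$ by an arbitrarily large factor as $k$ grows. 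Conic polarity tells you that the statistical dimensions of $\mathcal{T}_u$ and of the normal cone at $u$ sum to $p$; it does not bound $w(\mathcal{T}_u\cap S^{p-1})$ by $\mathcal{G}_{\mathcal{C}}$.

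The theorem, cited from \cite{vershynin2010introduction}, is an instance of the $M^*$-bound, and its proof sidesteps descent cones entirely. Since $\hat{u}$ minimizes $\|\cdot\|_{\mathcal{C}}$ over the feasible affine subspace and $u$ is feasible with $u\in\mathcal{C}$, one has $\|\hat{u}\|_{\mathcal{C}}\leq\|u\|_{\mathcal{C}}\leq1$, so \emph{both} $u$ and $\hat{u}$ lie in $\mathcal{C}$; by central symmetry and convexity, $h/2=(u-\hat{u})/2\in\mathcal{C}$. One then applies the deviation inequality to the \emph{bounded} set $\mathcal{C}$ itself rather than to a cone: with probability at least $1-\beta$,
\begin{equation*}
\sup_{x\in\mathcal{C}}\bigl|\,\|\Phi x\|_2-\|x\|_2\,\bigr|\leq \frac{C\psi^2\bigl(\mathcal{G}_{\mathcal{C}}+\|\mathcal{C}\|_2\sqrt{\log(1/\beta)}\bigr)}{\sqrt{m}},
\end{equation*}
and taking $x=h/2$, which satisfies $\Phi x=0$ so that $\|\Phi x\|_2=0$, gives the stated bound at once. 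There is no iteration or two-level peeling; the $\psi^4$ in the paper's statement is inherited from how its sample-size lemma writes $m$ in terms of $\gamma$, and a direct reading of the deviation inequality gives $\psi^2$. The crucial ingredient you were missing is that the boundedness of $\mathcal{C}$ --- holding for \emph{both} $u$ and $\hat{u}$ --- not the geometry of the descent cone at $u$, is what makes the error bound quantitative.
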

Combing (\ref{aeq1})(\ref{aeq2})(\ref{aeq3})(\ref{aeq4}). We get the following bound:
\begin{theorem}
Under the assumption above. Set $m=\Theta(\frac{\psi^4(\mathcal{G}_{\mathcal{C}}+\sqrt{\log n})^2\log(n/\beta)}{\gamma^2})$ for $\gamma<1$. Then with probability at least $1-\beta$, 
\begin{equation}
\text{Err}_D(w^{\text{priv}})=\tilde{O}\big(\big(\frac{\log^2(1/\beta)\sqrt{m}}{n\epsilon^2}\big)^{\frac{1}{1+m}}+\gamma),
\end{equation}
where $\psi$ is the subgaussian norm of the distribution of $\Phi$, $\mathcal{G}_{\mathcal{C}}$ is the Gaussian width of $\mathcal{C}$.
\end{theorem}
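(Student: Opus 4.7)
The plan is to decompose $\text{Err}_D(w^{\text{priv}}) = \hat L(w^{\text{priv}};D) - \hat L(w^\ast;D)$, where $w^\ast \in \arg\min_{w\in\mathcal{C}} \hat L(w;D)$ is the (unperturbed) empirical risk minimizer in the original space, by inserting three waypoints: the output $\bar w\in\Phi\mathcal{C}$ of the modified $\epsilon$-LDP algorithm run on the dimension-reduced dataset, a preimage $\theta\in\mathcal{C}$ with $\Phi\theta=\bar w$, and finally $w^{\text{priv}}$ itself. Each of the four resulting hops is controlled by exactly one of the already-assembled equations \eqref{aeq1}--\eqref{aeq4}, so once the hypotheses of those lemmas are verified on a common event, the proof reduces to a telescoping sum followed by a union bound.

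Concretely, the four hops are: (i) by \eqref{aeq4} we have $\|w^{\text{priv}}-\theta\|_2 \le O(\psi^4(\mathcal{G}_{\mathcal{C}} + \|\mathcal{C}\|_2\sqrt{\log(1/\beta)})/\sqrt m)$, and since $f(\cdot,z_i)$ is $1$-Lipschitz with $\|y_i\|\le 1$, this yields $|\hat L(w^{\text{priv}};D) - \hat L(\theta;D)| \le \|w^{\text{priv}}-\theta\|_2$; (ii) by \eqref{aeq3} and $\Phi\theta=\bar w$, $|\hat L(\theta;D) - \tfrac{1}{n}\sum_i f(\langle\Phi y_i,\bar w\rangle,z_i)| \le \gamma$; (iii) by \eqref{aeq1}, $\tfrac{1}{n}\sum_i f(\langle\Phi y_i,\bar w\rangle, z_i) - \min_{w\in\mathcal{C}}\tfrac{1}{n}\sum_i f(\langle\Phi y_i,\Phi w\rangle, z_i) \le \tilde O((\log^2(1/\beta)\sqrt m/(n\epsilon^2))^{1/(m+1)})$; (iv) by \eqref{aeq2} with $\hat w = w^\ast$, $\min_{w\in\mathcal{C}}\tfrac{1}{n}\sum_i f(\langle\Phi y_i,\Phi w\rangle,z_i) \le \hat L(w^\ast;D) + \gamma$. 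Summing (i)--(iv) and substituting $m = \Theta(\psi^4(\mathcal{G}_{\mathcal{C}}+\sqrt{\log n})^2\log(n/\beta)/\gamma^2)$ collapses $\psi^4(\mathcal{G}_{\mathcal{C}}+\sqrt{\log(1/\beta)})/\sqrt m$ to $O(\gamma)$, so hop (i) joins (ii) and (iv) in contributing $O(\gamma)$, while hop (iii) contributes the remaining $\tilde O((\log^2(1/\beta)\sqrt m/(n\epsilon^2))^{1/(m+1)})$ term. A union bound over the four good events (each holding with probability at least $1-\beta$) rescales $\beta\mapsto\beta/4$, which is absorbed into $\tilde O$.

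The main obstacle I expect is verifying the precondition of hop (iii): that the excess-risk guarantee of Theorem~\ref{atheorem:1} (underlying \eqref{aeq1}) actually applies to the reduced problem on $\Phi\mathcal{C}$ with loss $(y,w)\mapsto f(\langle\Phi y,w\rangle,z)$. Two things must be checked. First, $\Phi\mathcal{C}$ must be convex, closed, and contained in a ball of radius $1+O(\gamma)$; convexity and closedness pass through the linear map $\Phi$, while the norm bound follows from Lemma~\ref{lemma:4} applied to $\mathcal{C}$ under the chosen $m$. Second, each term $f(\langle\Phi y_i,\cdot\rangle,z_i)$ must remain convex (immediate from convexity of $f$) and $O(1)$-Lipschitz on $\Phi\mathcal{C}$ (from the 2-Lipschitz lemma cited just before \eqref{aeq1}, which needs the JL event to hold simultaneously for all $n$ data points $y_i$). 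This uniform control over $n$ points is precisely why the chosen $m$ carries the extra $\sqrt{\log n}$ inside $(\mathcal{G}_{\mathcal{C}}+\sqrt{\log n})$. Once all four JL/compressed-sensing events are shown to co-occur on a single draw of $\Phi$, the rest of the argument is mechanical.
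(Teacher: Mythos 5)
Your proposal is correct and is essentially the same argument the paper uses: the paper also assembles equations \eqref{aeq1}--\eqref{aeq4} (after verifying that $\Phi\mathcal{C}$ stays convex and bounded and that the pushed-forward loss stays Lipschitz, as you note) and then just says ``Combining (\ref{aeq1})(\ref{aeq2})(\ref{aeq3})(\ref{aeq4})'' to conclude. You have spelled out the telescoping decomposition through the waypoints $\theta$ and $\bar w$ and the Lipschitz step that converts \eqref{aeq4} into a risk bound, which is exactly the intended (but unwritten) content of that ``combining'' step.
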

Then take $\gamma$ as in the Theorem, we can get the proof.
For the corollary we will use the property that $\mathcal{G}_{\mathcal{C}}=O(\sqrt{\log p})$.

\end{document}